\documentclass{article}

\usepackage[margin=1.0in]{geometry}

\usepackage{amsmath, amsthm, amssymb, amsfonts}
\usepackage{euscript}
\usepackage{algorithm}
\usepackage{algpseudocode}

\usepackage{authblk}

\usepackage{graphicx}

\usepackage{url}
\usepackage{natbib}

\usepackage{tocbibind}
\usepackage[toc]{appendix}

\usepackage{tabularx}
\usepackage{booktabs}
\usepackage{longtable}
\usepackage{multirow}

\usepackage{verbatim}


\newtheorem{theorem}{Theorem}

\DeclareMathOperator*{\argmax}{arg\,max}
\DeclareMathOperator*{\argmin}{arg\,min}

\usepackage{fancyhdr}
\pagestyle{fancy}

\fancyhead{}

\usepackage[linktocpage]{hyperref}

\author{Christopher J.\ Hazard}
\author{Michael Resnick}
\author{Jacob Beel}
\author{Jack Xia}
\author{Cade Mack}
\author{Dominic Glennie}
\author{Matthew Fulp}
\author{David Maze}
\author{Andrew Bassett}
\author{Martin Koistinen}
\affil{Howso Incorporated}

\lfoot{Copyright 2025 Howso Inc.}

\title{A Theory of the Mechanics of Information: \\ Generalization Through Measurement of Uncertainty \\ (Learning is Measuring)}

\begin{document}

\maketitle

\section{Abstract}
Traditional machine learning relies on explicit models and domain assumptions, limiting flexibility and interpretability.  We introduce a model-free framework using surprisal (information theoretic uncertainty) to directly analyze and perform inferences from raw data, eliminating distribution modeling, reducing bias, and enabling efficient updates including direct edits and deletion of training data.  By quantifying relevance through uncertainty, the approach enables generalizable inference across tasks including generative inference, causal discovery, anomaly detection, and time series forecasting.  It emphasizes traceability, interpretability, and data-driven decision making, offering a unified, human-understandable framework for machine learning, and achieves at or near state-of-the-art performance across most common machine learning tasks.  The mathematical foundations create a ``physics'' of information, which enable these techniques to apply effectively to a wide variety of complex data types, including missing data.  Empirical results indicate that this may be a viable alternative path to neural networks with regard to scalable machine learning and artificial intelligence that can maintain human understandability of the underlying mechanics.

\tableofcontents

\section{Introduction}

Traditional machine learning methods often rely on explicit models of data distributions, which can obscure trade-offs between accuracy, interpretability, and computational efficiency.  These models also require domain-specific assumptions, limiting their adaptability to novel or complex tasks. This paper introduces a model-free framework that operates directly on data using surprisal, a fundamental concept from information theory, to quantify uncertainty and relevance.  Unlike conventional methods that require explicit modeling of data distributions, our approach operates directly on raw data, eliminating the need for domain-specific assumptions and reducing the risk of model bias.  This shift allows for more flexible and generalizable inference across diverse tasks, such as causal discovery and anomaly detection.

Advances in information theory and probability theory over the past two decades have provided essential insights toward understanding relationships among data.  Despite these advances, there remains a significant gap between the theoretical foundations of probabilistic inference and the practical needs of machine learning.  Most existing approaches to machine learning require careful tuning of hyperparameters, specification of loss functions, and selection of models, which can be time-consuming and require significant expertise. Further, most machine learning algorithms are designed for a specific objective function, rather than providing a general framework for learning and inference.  In this paper, we propose a new approach to machine learning that addresses these limitations by providing a unified, probabilistic framework for learning and inference that operates directly on the data.

Surprisal, or self-information, quantifies the information gained or lost when observing a particular data element to predict an event or another data point.  It is mathematically defined as the negative logarithm of the probability of an event, making it a natural tool for measuring uncertainty in data.  In our techniques, we frequently determine how surprising it would be if you could correctly use one particular data point in place of another because they were sufficiently similar.  If two data points are very similar, it would be unsurprising that you could use the second data point in place of the first to make a prediction about the first.  Surprisal offers a way to easily transform between distance and uncertainty, and surprisal is a flexible tool that enables many types of inferences and results that carries uncertainty characterizations with it.

Performing inference directly from the data is far from a new concept.  k-Nearest Neighbors (kNN) is one of the oldest, simplest, and most robust algorithms for pattern classification and regression models~\citep{HastieStatisticalBook}. It is a simple technique that is easily implementable~\citep{AlpaydinVoting}.  Outputs from the technique are usually traceable back to the exact data that influenced each decision.  This traceability enables detailed analysis of the decision inputs and characterization of the data local to the decision.  The terms kNN, \emph{case based reasoning} (CBR), and \emph{instance-based learning} (IBL) have been used to describe such algorithms, and which term is used depends on whether the focus is on the algorithm (kNN), reasoning (CBR), or retrieval of data in high dimensional spaces (IBL).  The techniques we describe here relate to kNN, CBR, and IBL, as they are all related to finding the relevant data.

k-Nearest Neighbors was previously a dominant machine learning technology~\citep{coomansAlternative, BreimanTree, AltmanIntroduction, AlpaydinVoting} but was largely abandoned with the growing size of data and the computational complexity of finding the nearest $k$ points~\citep{Raikwalperformance, SchuhMitigating, ArabicArticles2008}.  Recently, these techniques have seen a resurgence for use in vector databases to find embeddings for neural networks~\citep{pan2024survey}.  The curse of dimensionality has also been known to adversely affect kNN~\citep{HastieStatisticalBook, IndykApprox, SchuhImproving, TaoQuality} and the selection of a distance function can be challenging~\citep{SuryaDist}.  Additionally features may have to be scaled or standardized to prevent distance measures from being dominated by one of the features.  The accuracy of kNN can be severely degraded by the presence of noisy or irrelevant features, or if the feature scales are not consistent with their relevance.  Finally, kNN requires a value of parameter k. If k is too small, the model may have low bias but be sensitive to noisy points and have too high of variance.  If k is too large, the neighborhood may include points from other classes and may have too little variance.
 
Our techniques search for relevant data based on the idea that learning is a process of measuring and characterizing uncertainty in the relationships among the data, rather than optimizing a specific objective function.  Nearest neighbor searches using uncertainty have been studied, both with stochastic distances~\citep{mason2021nearest} and with uncertainty around the data points~\citep{agarwal2016nearest}.  We improve on these techniques by using surprisal as a measure of uncertainty and by measuring the uncertainties among the data.  We develop a framework for machine learning that is flexible, scalable, and robust to outliers and missing data.  Our approach can be applied to a wide range of machine learning tasks, including supervised and unsupervised learning, reinforcement learning, and time series forecasting.  It also has the potential to provide a new foundation for machine learning that is more general, flexible, powerful, and human-understandable than existing approaches.  In the following sections, we will describe the theoretical foundations of our approach, demonstrate its effectiveness, and discuss potential future directions.  We emphasize that the primary goal is not to create a technique that is the best at one specific task.  Rather, our aim is to create a system that meets the following criteria: it is of sufficiently high performance that it is practically usable for nearly \emph{every} task without having to build different models and systems for every task; every step is human understandable and traceable back to the data, with units of measurement and uncertainty every step of the way; the system turns the process of learning and inference into managing data and deriving values in understandable ways rather than attempting to understand and manage models; and that it opens new research to unify fundamental understanding of machine learning to improve to state-of-the-art and beyond without needing to build models.

The software implementing the techniques described in this paper is available at \url{https://github.com/howsoai/}, with the primary interface Python module at \url{https://github.com/howsoai/howso-engine-py}, and example notebooks at \url{https://github.com/howsoai/howso-engine-recipes}.  Throughout this paper we will generally refer to the algorithms and techniques, but in empirical results or when discussing implementation, we will refer to it as Howso or the Howso Engine.  This work is the first comprehensive treatment of a research effort that started in 2011, with earlier iterations of some of the core ideas starting from kNN and iterating toward the findings described here~\citep{hazard2019natively, banerjee2023surprisal}.

This paper adopts a nontraditional structure to emphasize the interdisciplinary nature of our framework.  We begin with an intuitive summary, followed by foundational concepts, then progressively build toward advanced applications and scale, and follow with empirical results. Related work and future directions are largely integrated inline to maintain focus on the core contributions.  The appendices contain relevant proofs, a numerical example of deriving a prediction, and details of empirical evidence.

\section{Intuitive Summary of Techniques}
Due to the novelty of techniques, we will begin with an intuitive summary of our system before introducing more technical formalisms.

Imagine trying to predict the acidity of a particular grape from a particular farm.  Having acidity measurements of 3 grapes of similar size picked from the same vine on the same day would likely be a very informative prediction of the acidity of the first grape.  Having 3 apples from the same farm may offer little information about the grape's acidity, but knowing the latitude and longitude of the apple trees they were picked from may be very informative for predicting the street address of the farm.

For a given inquiry, any given piece of data has some probability of being informative.  Most data will have a virtually zero probability of being informative most of the time.  If we know the probabilities of data elements being informative, we can use those probabilities to find the data most likely to be relevant.  From these probabilities, we can compute the best estimate as well as the uncertainty and distribution around that estimate.

Our system estimates all of these uncertainties, and we do so using robust and assumption minimizing techniques around the average uncertainty, the mean of the absolute differences between expected and actual.  To estimate the uncertainties of features or columns of data, our system uses all other features including the feature itself to find the relevant data, makes a prediction, and finds the average difference between the prediction and actual value.  In the case of measuring grapes, this means it uses the acidity of the grape in question, in addition to that grape's other properties to predict its own values.  This overfitting is intentional as it is a proxy for the potentially relevant features that are not included in the data.  We use these uncertainties to compute the probability that one value is informative for another value, and then use these results to compute the probabilities that each feature predicts every other feature.  Combining these probabilities, we compute the overall probability that one record is informative for another.  There is no model besides the data and its uncertainties; the database is queried with these uncertainties to find any relevant result.  Every computation of probability of every inference can be understood, and every data element used in an inference can be traced back to its source.

Because the system is a database, features, records, etc.\ can be included or excluded in queries, or added to, edited in, or removed from the data.  Rather than using traditional cross validation techniques from data science, we can just leave a record out and predict it.  Sampling records with replacement provides good estimates of the quality of inference on unseen data that is similar to existing data.  Our system can compute how much individual features contribute to a prediction without including the biases included from models via traditional feature importance techniques.  Imagine features $A$, $B$, and $C$ are being evaluated in how they predict $T$.  We can measure $A$ predicting $T$, $A$ and $B$ predicting $T$, $A$ and $C$ predicting $T$, etc.\ and even predicting $T$ without any features, and determine each feature's contribution.

Further, our system can use the same approach to determine how features contribute to reducing the uncertainty in predicting another feature, including uncovering causal relationships.  Causality deals with the relationship between two or more events or variables where one event influences the occurrence of another. For instance, moving a lever can open a valve and allow water to flow.  This cause-and-effect relationship is asymmetric and not reversible.  Our system can detect this asymmetry by analyzing how the uncertainty in one variable changes when another variable is included in the prediction. By doing so, we can identify potential causal relationships among variables, which can have significant implications for understanding the underlying mechanisms driving a system and predicting the effects of changes to the system.  Further, our system can detect asymmetries in both uncertainty and predictability, revealing likely causal relationships and hidden influences (confounders), and the strength of those relationships.  This is in contrast to correlational techniques that treat relationships as two-way and can only find associations.

Many more machine learning and AI capabilities are possible and effective with this system.  Anomalies can be identified by finding data records whose probability of being informative to their neighbors is much less than their neighbors' probabilities of being informative for each other.  Anything in the system can be conditioned, that is, narrowed or focused to specific types of data, by specifying values.  This is akin to looking up records in a database but by probability instead of by values or value ranges.  Generative outputs are created by drawing from the probability distributions.  Reinforcement learning is implemented by finding the data records that are most informative, selecting from those records which best maximize rewards, and then drawing an output from that probability distribution.  Larger data sets can be addressed by automatically determining which records are sufficiently duplicative that their probability masses can be combined.

Time series capabilities extend this base of capabilities.  By deriving and including additional features which represent the lagged values (values from an earlier time step) and rates (change in value over time), the aforementioned capabilities can be leveraged for nearly any time series workflow, including forecasting and hypothetical modeling.  Time series strengthens the quality of causal results by assessing whether one change occurs before another.

Throughout this work, we will frequently leverage information theory, which provides notationally and computationally convenient techniques to handle these probabilities.

\section{Foundational Statistics of Uncertainty}
\label{sec:foundational_statistics}
Because our system is built around uncertainty, we begin by describing what forms of uncertainty we employ and how we characterize it.

When assessing uncertainty of a data element, it is common to assess the uncertainty of measurement, or measurement error.  This is classically performed by repeatedly performing some measurement and characterizing some moment about the mean of those measurements, typically variance.  Often, this uncertainty is assigned to the reliability of a measuring device or process.  Another form of uncertainty of a data element is the uncertainty of attempting to predict the value via a model.  This is typically called a residual or prediction uncertainty, and this residual is typically assigned to a particular prediction or a model if the residuals are aggregated.

In addition to where the uncertainty is measured, uncertainty is also often characterized by its most likely origin~\citep{hullermeier2021aleatoric}.  \emph{Epistemic} uncertainty is uncertainty that stems from not having enough knowledge or data, which can be reduced by obtaining more knowledge or data.  \emph{Aleatoric} uncertainty is the irreducible part of uncertainty that comes from underlying stochasticity that cannot be resolved by collecting more data, such as the outcome of a fair coin flip.  However, in many practical scenarios, the distinction between these two types of uncertainty is blurred.  For instance, a coin flip's randomness (aleatoric) could become predictable with advanced tools like high-speed cameras and physics simulations (reducing epistemic uncertainty).  Our framework aims to identify epistemic sources of uncertainty as described in Section~\ref{ssec:guided_feature_discovery}.

Beyond traditional categories, we introduce \emph{substitutability deviation} (often just \emph{deviation} for short), a novel measure of uncertainty that reflects how likely one data element can be replaced by another without compromising predictive or informative value.  This concept is critical for tasks like nearest-neighbor searches, where similarity between data points determines relevance.  It is not strictly epistemic or aleatoric, but rather represents the combination of the two given the data.  As an example of deviations, when using human height in reasoning about physiological information or clothing size, we often make measurements to the nearest centimeter or half inch and declare people to be the same height if within that tolerance.  A person's posture, shoe sole depth, and spine compression from sitting and standing can all impart differences on the person's height.\footnote{Undoubtedly, any data set with self-reported human heights is likely to have some skew in the rounding.}  In this case, the deviation for this kind of data may be roughly a centimeter or half inch.

The foundation of our techniques rely upon mean absolute error (MAE), the absolute value of the difference between the estimated and the observed.  Though standard deviation is widely used for measuring uncertainty, MAE can be considered to be a more suitable measure of risk and uncertainty than standard deviation in many situations.  MAE is more robust to outliers, better captures the potential for extreme losses, and is easier for humans to interpret due to half of the expected results being within the value~\citep{taleb2007blackswan}.

Building on these concepts of uncertainty, we use maximum entropy distributions whenever possible to characterize the local uncertainty of our data.  Maximum entropy refers to the idea of selecting a probability distribution that makes the fewest assumptions about the underlying data, given the constraints.  This approach is useful in situations where the data and information are limited, such as when assessing uncertainty locally, zoomed in to a small amount of data.  The maximum entropy distribution of the absolute values of residuals is the exponential distribution.  This distribution is widely used in statistics and has several desirable properties, including being robust to outliers and easy to interpret.  When dealing with differences between two values, if the absolute value of the difference is used as the mean absolute deviation of the distribution of values, the Laplace distribution (which is two back-to-back exponential distributions) is the maximum entropy distribution.  We will rely on this relationship throughout our work.

To determine the surprisal of a particular observation, we use cumulative residual entropy~\citep{rao2004cumulative} to determine the surprisal of a particular value when compared to another data value given its deviation.  Here, we derive this surprisal from the cumulative density function (CDF) of the exponential distribution via cumulative residual entropy.  Given two observed values, $x_1$ and $x_2$, we can write this well-known CDF with regard to the difference between the actual values $d$, which is unknown. Using $\delta$ as the deviation, this expression becomes
\begin{equation}
\label{eq:prob_value_with_observation}
P(d \leq |x_1 - x_2|) = 1 - e^{-\frac{1}{\delta} |x_1 - x_2|}.
\end{equation}
We wish to flip the question.  Given the unknown difference between the two actual values $d$, we would like to determine the probability that the difference between our observations $x_1$ and $x_2$ is less than the actual difference.  If the actual difference is less than the observed difference, that means the data is at least as relevant as it has been observed.  We find this by negating Equation~\ref{eq:prob_value_with_observation} as
\begin{equation*}
P(|x_1 - x_2| < d) = e^{-\frac{1}{\delta} |x_1 - x_2|}.
\end{equation*}
As surprisal is the negative logarithm of a probability, we can express the surprisal that we would observe from using the value of $x_1$ in place of $x_2$ as
\begin{align}
I(|x_1 - x_2| < d) &= -\ln( P(|x_1 - x_2| < d) ) \nonumber\\
& = -\ln\left( e^{-\frac{1}{\delta} |x_1 - x_2|} \right) \nonumber\\
\label{eq:cumul_resid_entropy}
&= \frac{|x_1 - x_2|}{\delta}.
\end{align}
In light of this, we observe that to find the surprisal of one value given another, we can simply divide the absolute value of the difference by the deviation.

We note that mean absolute error and mean absolute deviation (MAD) are closely related concepts.  MAE refers to the expected difference between actual and predicted when making a prediction or measurement, whereas MAD refers to the expected difference between values and their mean.  When using our techniques, this distinction can become blurred.  For example, determining the deviation to find the probability that a data point is informative for another could be considered MAE if trying to predict that value or MAD if characterizing the data.  We try to use the most appropriate term between MAE or MAD for each context, but a different way of viewing a given technique may suggest the other term.

\section{Definitions}
Here we define common variables throughout the formulae below.  This is meant as a lookup table for the remainder of the work.  Further, we define \emph{conviction}, $\rho$, as the ratio of the expected surprisal to observed surprisal as
\begin{equation}
\label{eq:conviction}
\rho = \frac{E I}{I}.
\end{equation}

\begin{itemize}
\item $\mathcal{F}$ is the set of all available features.
\item $\mathcal{C}$ is the set of all available cases or records or instances that are in the data set.
\item $F$ is a set of features, $J$ is used when a second set of features is needed.
\item $i$ and $n$ are individual cases, which may or may not be in the set of available cases.
\item $j$ and $t$ are individual features, with $t$ generally indicating a feature designated as a target for the particular operation.  In our implementation, we often refer to features being used as input or contextualization as ``context features'' and features being used as output via prediction, generation, or inference as ``action features'', as whether a feature is an input or output depends on what is being done rather than a built-in aspect of a model.
\item $x$ is a vector of values, with the subscript representing the feature or features.  It may be subscripted with $i$ or $n$ to indicate a specific record, either trained or not, or may be omitted if there's only one vector of values being considered (e.g., $x_F$).
\item $w_i$ is the weight of a given case, when applicable.  This is the probability mass that the particular case represents in the data when data reduction techniques are applied.  Weights are generally omitted from most of the formulae for simplicity, but may be included in nearly any formula involving case probability masses.  See Sections~\ref{ssec:surprisal_of_case} and~\ref{ssec:case_weights}.
\item $f_j$ is the function that estimates the value for feature $j$.
\item $C_{i,F}$ is the influential cases with respect to case $i$ with regard to features $F$.
\item $\mathcal{P}(C)$ denotes the power set of set $C$.
\item $I(x)$ is the self-information of $x$, also known as surprisal.\footnote{We use $I$ instead of $s$ to reduce the number of letters introduced.}
\item $I(x \mid y)$ is the information of $x$ given $y$.
\item $S_i$ is the surprisal contribution of $i$, indicating the expected surprisal of $i$ given its influential cases $C_i$.
\item $\sigma_i$ is the surprisal contribution conviction of $i$, which is the surprisal contribution contextualized as a conviction.
\item $\kappa$ is the value of a particular nominal class.  It may be used with a subscript, e.g., $\kappa_1$, to indicate one class versus another.
\item $\delta_{j}(x_{i,F})$ is the deviation, the mean absolute error of of $j$ for the values of $x_i$ over set of features $F$ which indicates the uncertainty of whether the values can be considered the same for inference purposes.  This value is at least as large as the measurement error but not greater than the residual.  Note that $F$ does not exclude $j$.  If $x_j$ is a nominal value, then $\delta_j(x_F)$ or $\delta_{j, \kappa}(x_F)$ may be used to represent the deviation for the nominal class of $x_j$ or $\kappa$ respectively.  Further, for performance purposes, $E\left(\delta_{j}(x_{n,F}) \mid n \sim \mathcal{C}\right)$ is sufficient to yield strong results from most data sets.  See Section~\ref{ssec:deviations} for computing these values.
\item $d$ is the difference between two actual values, which may not be known.  When used by itself it is used as a variable, but in context of different observations it will be used as a function of those two data elements for a given feature, e.g.\ as $d_j(x_{i,j}, x_{n,j})$.  In many of our formulae, the real value for $d$ is unknown and so it is treated as a distribution.
\item $q_{j,t}$ is the probability that feature $j$ is informative for predicting feature $t$, which can be a function of $F$ or $x_F$, and if not, then it is the expected value of the probability.  See Section~\ref{ssec:deviations} for computing these values.
\item $\phi$ is a set of constraints used to subset the data.  For example, for a given query, $\phi$ could be $\left\{ x_j \in \{\kappa_1, \kappa_2\}, 0 \leq x_t \leq 5 \right\}$.  See Section~\ref{ssec:constraining_and_goals}.
\item $\Omega_j(x_F)$ is a goal function on feature $j$ that accepts a vector of values and evaluates to a value representing how close it is to a specific goal, where smaller values are closer to the goal.  Without loss of generality, goal features may maximize by simply negating a particular value or may attempt to achieve a value as close as possible to another by minimizing the difference.  If written with a set of features as $\Omega_F$, it means a collection of goal functions.  See Section~\ref{ssec:constraining_and_goals}.
\item $\rho$ is conviction, a ratio of expected surprisal to observed surprisal, which may be used as an input.
\item $k$ denotes a count of cases, typically indicating the cases that comprise the appropriate statistical bandwidth for a query.
\item $E$ denotes an expected value over the relevant domains.  $E_{LK}$ denotes the expected value assuming an uncertainty using the {\L}ukaszyk-Karmowski metric.  Based on the domain, this is either as defined by Equation~\ref{lk_difference} for Laplace distributions, or as $E_{LK}\left( |x_{i,j} - x_{n,j}| \mid x_{n,F} \right) =  \left|x_{i,j} - x_{n,j}\right| + \frac{2 \delta_j^2}{\left|x_{i,j} - x_{n,j}\right| + 2 \delta_j}$ for exponential distributions~\citep{lukaszyk2003metryka}, the maximum entropy distribution for distributions over positive real numbers given a mean, though solutions for other distributions could be used if appropriate.
\item $P$ denotes a probability, potentially conditioned.
\item $\eta$ is a generic variable used to represent a real value.
\item $\xi$ is a generic variable used to represent a nominal value.
\item $Q_\eta$ is a function that returns the quantile of the set passed in at quantile $\eta$.
\item $e$ is Euler's number.
\end{itemize}

\section{Inference via Surprisal as Distance}
\label{sec:data_relevancy_inference}

We begin by describing the core operations around determining data relevancy and performing discriminative and generative inference.

\subsection{Continuous and Ordinal Features Expected Differences}
The {\L}ukaszyk-Karmowski metric finds the expected difference between two values, given the distributions around those values~\citep{lukaszyk2003metryka,lukaszyk2004lkmetric}.  If there is uncertainty around a measurement, this means that even identical measurements still yield a small but nonzero expected distance due to that uncertainty.  This is written in their original notation as the expected distance $d$ between uncertainty distributions $X$ and $Y$ as
\begin{equation*}
d(X, Y) = \int_{-\infty}^\infty \int_{-\infty}^\infty |x-y| f(x) g(y) \, dx\, dy.
\end{equation*}
We assume the Laplace distribution for uncertainty.  This is a maximum entropy distribution given a mean absolute deviation, not unlike the Gaussian distribution being the maximum entropy distribution when given the variance.  We assume both probability density functions of the value distributions are parameterized with the same uncertainty, $\delta_{j}(x_F)$, as derived Appendix~\ref{subsec:lk_laplace_derivation}.  Note that we write it as a conditional expected difference given that the uncertainty is characterized by the uncertainty around $x_{n,F}$.  Adopting the notation described above, this expected difference is
\begin{equation}
\label{lk_difference}
E_{LK} \left( |x_{i,j} - x_{n,j}| \mid x_{n,F} \right) =  \left|x_{i,j} - x_{n,j}\right| + \frac{1}{2} e^\frac{-|x_{i,j} - x_{n,j}|}{\delta_{j}(x_{n,F})} \left( 3 \delta_{j}(x_{n,F}) + |x_{i,j} - x_{n,j}| \right).
\end{equation}
When using this expected difference, it reduces the distinctiveness of differences that are within the uncertainty, but the effect is insignificant on differences that are far greater than the uncertainty.

\subsection{Continuous and Ordinal Features Marginal Surprisal}
Given that we have an expected distance between two values, we would like to determine how surprising it would be if one value turned out to be informative for the other.  In other words, we want to determine how surprised we would be if we could use one value in place of another given their uncertainties.  The uncertainties are the mean absolute deviation around the values, the primary parameter for the Laplace distribution.

As discussed in Section~\ref{sec:foundational_statistics}, the Laplace distribution is symmetric, which allows us to simplify the distribution of these mean absolute deviations to the exponential distribution.  We parameterize this Laplace error distribution as $L\left(x_{i,j}, \delta_j(x_{i,F}) \right)$, so the exponential distribution of the distribution of the unknown actual difference between values becomes $Exp\left( \frac{1}{\delta_j(x_{i,F})}\right)$.  We write the deviation for feature $j$ as $\delta_j$, and write the expected difference between observed value $x_{i,j}$ and another known value $x_{n,j}$ given a set of features $F$ as $E \left( \left|x_{i,j} - x_{n,j}\right| \mid F\right)$.  The features are relevant to determine what deviation is appropriate for the $x_{n,F}$.

From these aforementioned values, we can rewrite Equation~\ref{eq:cumul_resid_entropy} to express the surprisal that $x_{n,j}$ is informative to $x_{i,j}$ for the unknown difference $d_j(x_{i,j}, x_{n,j})$ as
\begin{equation}
\label{eq:cont_resid_entropy}
I\left(d_j(x_{i,j}, x_{n,j}) < E\left( \left| x_{i,j} - x_{n,j} \right| \mid x_{n,F} \right) \right) = \frac{E\left( \left| x_{i,j} - x_{n,j} \right| \mid x_{n,F} \right)}{\delta_j(x_{n,F})}.
\end{equation}

Equation~\ref{eq:cont_resid_entropy} includes the conditional surprisal that the distribution of the uncertainty is a Laplace distribution with the {\L}ukaszyk-Karmowski metric~\citep{lukaszyk2003metryka,lukaszyk2004lkmetric}.  To determine the marginal surprisal, we should remove this extra base surprisal.  This base surprisal of a case compared to itself, $I\left(x_{n,j} \mid F, x_{n,j}, L(x_{n,j}, \delta_j(x_{n,F})) \right) = \frac{0 + \frac{1}{2} e^0 \left( 3 r + 0\right)}{r} = 1.5$ nats, and should be subtracted from the resulting surprisal.  We write this surprisal more compactly leaving out the unknown difference between values.  The complete formula for marginal surprisal for a continuous feature is thus
\begin{equation}
\label{eq:cont_resid_entropy_marginal}
I\left(x_{i,j} \mid F, x_{n,j} \right) = \frac{  \left|x_{i,j} - x_{n,j}\right| + \frac{1}{2} e^\frac{-|x_{i,j} - x_{n,j}|}{\delta_j(x_{n,F})} \left( 3 \delta_j(x_{n,F}) + |x_{i,j} - x_{n,j}| \right) }{\delta_j(x_{n,F})} - 1.5.
\end{equation}
The difference between surprisal of an observation versus marginal surprisal of an observation can be illustrated with a coin toss.  If the coin is believed to be fair and tossed in normal conditions (e.g., excluding landing on the edge), then $1$ bit of surprisal is expected.  However, if the coin lands on its edge, approximated to be a 1 in 6000 chance with an American nickel~\citep{murray1993probability}, a person would be surprised.  The probability of heads is thus $\frac{2999.5}{6000}$, tails is $\frac{2999.5}{6000}$, and edge is $\frac{1}{6000}$, the surprisal of observing a heads would be $1.00024$ bits, a tails $1.00024$ bits, and an edge $12.55075$ bits.  Thus, the marginal surprisal of the coin landing on its edge over the baseline expectation of a normal coin flip is $12.55075 - 1.00024 = 11.55051$ bits.

\subsection{Nominal Features Marginal Surprisal}
Finding the marginal surprisal of a nominal feature follows similar logic to that of continuous features.  First we find how surprising it would be if a given value $x_{n,j}$ were informative for the case we are trying to use it for, $x_i$, given the relevant deviations.  To obtain the marginal surprisal, we must also subtract out the minimum surprisal of all of the possible values, treating that as the baseline surprisal.  Typically this will yield zero marginal surprisal for the most probable match.  However, it may also yield zero marginal surprisal if several values are statistically indistinguishable.  This marginal surprisal for nominal features can be expressed as
\begin{equation}
I(x_{i,j} \mid F, x_{n,j}) = -\ln \delta_j(x_{n,F}) - \min_\kappa \left( -\ln \delta_{j, \kappa}(x_{n,F}) \right).
\end{equation}

\subsection{Surprisal of One Case Given Another}
\label{ssec:surprisal_of_case}
For set of features $F$, the surprisal of case $i$ given case $n$ is
\begin{equation}
I(i \mid F, n) = \sum_{j \in F} I(x_{i,j} \mid F, x_{n,j}).
\end{equation}
This can also be written as
\begin{equation}
\label{eq:case_surprisal}
I(x_i \mid F, x_n) = \sum_{j \in F} I(x_{i,j} \mid F, x_{n,j}),
\end{equation}
which can be transformed to the probability of case $x_n$ being informative for case $x_i$ as
\begin{equation}
P(x_i \mid F, x_n) = e^{-I(x_i \mid F, x_n)}.
\end{equation}
If case weights, $w_n$ are used, then the weighted probability can be written as
\begin{equation}
\label{eq:weighted_case_probability}
P(x_i \mid F, x_n) = e^{-w_n \cdot \sum_{j \in F} I(x_{i,j} \mid F, x_{n,j})},
\end{equation}
which is equivalent to $w_n$ occurrences of the case each having the same probability by exponentiating the probability by the weight.

When computing the surprisal of one case given another with regard to a specific target, we need to consider the probability that a given feature, $j$, is informative to the target feature being predicted, $t$.  The probability of case $x_n$ influencing inference for case $x_i$ using features $F$ when performing inference to predict feature $t$ is
\begin{equation}
P(x_i \mid F, t, x_n) = P\left(x_i \mid F, x_n\right) \cdot q_{j,t}(x_F)
\end{equation}
where
\begin{equation}
\label{eq:targeted_prob_case_given_another}
P\left(x_i \mid F, x_n\right) = e^{- I(x_i \mid F, x_n)}.
\end{equation}
This means that the expected case surprisal for predicting target $t$ is
\begin{equation}
\label{eq:targeted_case_surprisal}
I(x_{i,t} \mid F, x_n) = \sum_{j \in F} q_{j,t} I(x_{i,j} \mid F, x_{n,j}).
\end{equation}
Equation~\ref{eq:targeted_case_surprisal} can be transformed like Equation~\ref{eq:case_surprisal} to the probability of case $x_n$ being informative for case $x_i$ for feature $t$ as
\begin{equation}
P(x_{i,t} \mid F, x_n) = e^{-I(x_{i,t} \mid F, x_n)}.
\end{equation}
Adding surprisals is the logarithm equivalent of multiplying probabilities, and so multiplying a surprisal by a probability of influence is the same as raising the surprisal's corresponding probability to the probability of influence.  This yields the intended behavior that two features which have the same informativeness and probability of influence contribute the same as multiplying together two probability terms, as the former is the square of the probability.  From this perspective, Equation~\ref{eq:targeted_case_surprisal} closely resembles the conjunction of probabilities of independent distributions that form the core of the no-flattening theorem by \cite{lin2017does} suggesting that there is a relationship between this work and how hierarchical neural network architectures behave.

As proven in Appendix~\ref{subsec:surprisal_as_distance}, surprisals are a distance metric when deviations are constant.  When deviations are considered approximately constant and smooth locally, surprisals are also a distance metric locally.  This property makes surprisal usable in place of distance for many distance-based machine learning techniques beyond kNN as described in Section~\ref{ssec:influential_cases}, such as distance-based clustering techniques (e.g., HDBSCAN~\citep{mcinnes2017hdbscan}), and visualization techniques (e.g., UMAP~\citep{mcinnes2018umap}).  

\subsection{Influential and Similar Cases}
\label{ssec:influential_cases}
The set of influential cases $C_{i,F}$ is the set of records that have minimal surprisal with regard to record $i$ for set of features $F$ and comprise the statistical bandwidth that is influential to $i$.  This set can be of size $k_{min}$ to $k_{max}$ where the next incremental case offers marginal probability less than some probability threshold $t$.  The set of influential cases is represented as
\begin{equation}
C_{i,F} = \underset{i \in \mathcal{C}}{\operatorname{argmin}}^{k \in [k_{min}, k_{max}]} I(x \mid i) \text{ s.t. } P\left(x \mid \left\{ y \in \mathcal{C} : I(y \mid i) < I(x \mid i) \right\} \right) < t.
\end{equation}
Note that by default, $k_{min}=1$, $k_{max}=\infty$, and $t = \frac{1}{e^3}$, and that ``similar cases'' are determined by specifying a fixed $k$.

\subsection{Discriminative Inference (React)}
\label{ssec:discriminative_predictions}
Discriminative inference, often contextualized as a prediction, produces an expected value for a feature $j$ given the context of features $F$ and feature values $x_F$.  Our techniques take into account the influential cases and the probabilities that they are informative to predicting $j$ given $x_F$.  For continuous and ordinal features, the expected value for case $i$ is found over the support of all of the influential cases, $C_{i,F}$, by finding the average value given the probabilities of influence as
\begin{align}
\label{eq:discr_predict}
f_t(x_{i,F}) & = E\left( x_t \mid x_{i,F} \right) \nonumber \\
& = \frac{1}{\sum_{n \in C_{i,F}} P(x_{i,t} \mid F, x_{n,F})} \sum_{n \in C_{i,F}} P(x_{i,t} \mid F, x_{n,F}) \cdot x_{n,t}.
\end{align}
If case weights, $w_n$, are used then this equation becomes
\begin{equation}
\label{eq:weighted_discr_predict}
f_t(x_{i,F}) = \frac{1}{\sum_{n \in C_{i,F}} P(x_{i,t} \mid F, x_{n,F})^{w_n} } \sum_{n \in C_{i,F}} P(x_{i,t} \mid F, x_{n,F})^{w_n} \cdot x_{n,t}.
\end{equation}
For ordinal features, the predicted value is rounded to the nearest ordinal.

For nominal features, the expected value is the mode of the probability masses of the values of the influential cases.  We combine the probabilities of each of the influential cases and accumulate their masses by each value, $\xi$, finding the value with the largest probability as  
\begin{align}
f_t(x_{i,F}) & = E\left( x_t \mid x_F \right) \nonumber \\
&= \argmax_\xi \sum_{n \in \left\{ n^{\prime} \in C_{i,F} \mid x_{n^{\prime},t} = \xi \right\} }  P(x_{n,t}).
\end{align}
Like continuous features, if case weights are used, then the expected value becomes
\begin{equation}
f_t(x_{i,F}) = \argmax_\xi \sum_{n \in \left\{ n^{\prime} \in C_{i,F} \mid x_{n^{\prime},t} = \xi \right\} }  P(x_{n,t})^{w_n}.
\end{equation}

\subsection{Prediction (React) Residuals}
We use the definition of residuals as the mean absolute expected difference between the predicted and actual value.  The residual for feature $j$ can be computed for an existing data element $x_i$ for features $F$ as
\begin{equation}
r_j(x_{i,F}) = E\left( \left|f_j(x_{n,F}) - x_{n,j} \right| \mid n \sim C_{x_{i,F}} \setminus \{i\} \right)
\end{equation}
if the feature $j$ is continuous.  Determining the expected residual across many cases can be computationally expensive, and we have found that using the weighted mean absolute deviation of a prediction matches the actual error more closely than the average of the residuals found by leave-one-out predictions for many variants of nearby cases (e.g., $C_{i}$, as well as fixed values such as the 30 nearest cases).  The weighted mean absolute deviation of the prediction can be expressed as
\begin{equation}
r_j(x_{i,F}) \approx  \frac{1}{\sum_{n \in C_{i,F}} P(x_{i,t} \mid F, x_{n,F})} \sum_{n \in C_{i,F}} P(x_{i,t} \mid F, x_{n,F}) \cdot \left| x_{n,t} - E\left( x_t \mid x_{i,F} \right) \right|.
\end{equation}

If $j$ is nominal, then the residual can be computed as
\begin{equation}
r_j(x_{i,F}) = E\left( 1 - P\left(x_{n,t} \mid C_{x_{i,F}}, F, x_{i,F}\right) \mid n \sim C_{x_{i,F}} \setminus \{i\} \right).
\end{equation} 
We note that if $i \in \mathcal{C}$, then these formulae are computing the residual of a record already in the data set, and so it must be excluded as to not influence itself.  If $i \notin \mathcal{C}$, then nothing will be removed.  Further, modifying these formulae to account for case weights can be done in the same manner as described in Section~\ref{ssec:discriminative_predictions}.

\subsection{Generative Inference (React)}
\label{ssec:generative_react}
When performing a generative inference, the default assumption is that it should follow the marginal distribution of the prediction.  However, this sharpness of the marginal distribution can be controlled by changing the conviction, $\rho$ from the default value of 1 which follows the marginal distribution.  Given a desired conviction $\rho$, a generative value for a continuous feature $j$ given a context $x_{i,F}$ can be drawn from the Laplace distribution from the relevant uncertainty as 
\begin{equation}
g_j(x_{i,F}, \rho) \sim L\left( x_{n,j}, \frac{r_j(x_{n,j})}{\rho} \right), n \sim \left\{ n \in C_{i,F} : P\left(x_{i,j} \mid F, x_n \right) \right\}.
\end{equation}
For nominal features, drawing generative for the context $x_{i,F}$ can be written as
\begin{equation}
g_j(x_{i,F}, \rho) =
\begin{cases} 
x_{n,j}, n \sim P(x_{n,t} \mid C_i, F, x_{i,F}) & \text{if } \eta < \frac{r_j(x_{i,F})}{\rho}, \eta \in U(0, 1) \\
x_{n,j}, n \sim \mathcal{C} & \text{if } \frac{r_j(x_{i,F})}{\rho} \leq \eta < 1 - \left(1 - \frac{r_j(x_{i,F})}{\rho} \right)^2 \\
x_{n,j} \sim U\left(\left\{x_{n,j} \mid n \in \mathcal{C}\right\}\right) & \text{if } 1 - \left(1 - \frac{r_j(x_{i,F})}{\rho} \right)^2 \leq \eta
\end{cases}.
\end{equation}
Like residuals, modifying these formulae to account for case weights can also be done in the same manner as described in Section~\ref{ssec:discriminative_predictions}.

\section{Influential Features and Causal Discovery}

In this section, we describe how our methods determine how features contribute to predictions and reductions in uncertainty, and link those results to the discovery of causal relationships.  Because the system is a database rather than a model, techniques such as leave-one-out validation, where an individual record is held out and predicted, are very efficient.  Performing bootstrap sampling with leave-one-out evaluation (selecting cases to be held out randomly with replacement) is an effective way to measure performance aspects of predicting a feature in our system, as validated by experiments performed with extra holdout data.  We leverage our system's efficiency at leaving out records and leaving out features in queries throughout the techniques described in this section.

\subsection{Prediction Contributions}
\label{ssec:prediction_contributions}

Prediction contributions are the measured difference between a prediction in an action feature when each feature (feature prediction contributions) or case (case prediction contributions) is considered versus not considered. When feature prediction contributions are applied in a robust fashion, this means that the samples are collected over the power set of all possible combinations of features.  This is an approximation of the commonly used SHAP feature importance measure~\citep{lundberg2017unifiedapproachinterpretingmodel} which addresses many of SHAP's common issues~\citep{kumar2020problems}.  Because the feature is actually removed, it closely resembles what is considered to be a ground truth of feature importance, removing the feature and retraining~\citep{hooker2019benchmark}, the only difference being that the deviations and probability masses are not recomputed.  The nomenclature distinction is because these values are computed as feature importance on the data rather than measuring the feature importance of a particular model, akin to measuring SHAP across many thousands of independently constructed models.  The strength of this approach is that inductive biases are reduced or eliminated.  For example, if feature $f_1$ and feature $f_2$ are highly correlated, a model may remove or reduce the influence of $f_2$, when with respect to the data, both $f_1$ and $f_2$ were equally informative.  Prediction contributions ensure that such relationships are surfaced.

Prediction contributions can be computed either directionally or via absolute values.  Directional values show whether the inclusion of a feature increases or reduces a value, and so the sum of the directional prediction contributions over the features will approximate the mean value of a feature.  Absolute value prediction contributions indicate the magnitude of impact a feature has on a particular prediction.  Prediction contributions can be computed across an entire data set, on a conditioned subset of a dataset, or on the data for a particular prediction.  For simplicity, we write the unconditioned prediction contributions across features, though any aspect of the formulae can be conditioned, and they may be rewritten to apply to cases from some set of cases.

The directional prediction contributions of feature $j$ predicting a target feature $t$ are obtained across the set of features $\mathcal{F}$ as
\begin{equation}
DPC_{j,t} = E\left( f_t(x_{F \cup \{j\}}) - f_t(x_F) \mid F \sim \mathcal{P}(\mathcal{F} \setminus \{j, t\}) \right).
\end{equation}
And the corresponding absolute value prediction contributions are obtained as
\begin{equation}
\label{eq:abs_pred_contrib}
APC_{j,t} = E\left( \left| f_t(x_{F \cup \{j\}}) - f_t(x_F) \right| \mid F \sim \mathcal{P}(\mathcal{F} \setminus \{j, t\}) \right).
\end{equation}

\subsection{Accuracy Contributions}
\label{ssec:accuracy_contributions}

The techniques in Equation~\ref{eq:abs_pred_contrib} can be applied to the uncertainties as well, in the form of mean absolute error.  Accuracy contributions measure how much the inclusion of feature $j$ reduces the uncertainty when predicting a given target feature $t$.  Computing this robustly over the power set of all feature coalitions can be written as
\begin{equation}
\label{eq:acc_contrib}
AC_{j,t} = E\left( \left| f_t(x_{F \cup \{j\}}) - x_t \right| - \left|f_t(x_F) - x_t \right| \mid F \sim \mathcal{P}(\mathcal{F} \setminus \{j, t\}) \right).
\end{equation}

\subsection{Robust Residuals}

To characterize the uncertainty of prediction and accuracy contributions, we examine the mean absolute error of predicting target feature $t$ over the power set of possible features.  These residuals will be at least as large as the residual of predicting the target feature $t$, and represent useful information of the uncertainty of prediction and accuracy contributions.  To signify that these residuals are computed over the power set, we call these \emph{robust} residuals, and denote them as
\begin{equation}
\label{eq:robust_residuals}
rr_{t} = E\left( r_t(x_{n,F}) \mid F \sim \mathcal{P}(\mathcal{F} \setminus \{t\}), n \in \mathcal{C} \right).
\end{equation}

\subsection{Information of Accuracy Contribution}

Using the same derivation from Equation~\ref{eq:cont_resid_entropy_marginal}, we can transform the uncertainties of the accuracy contributions from Equation~\ref{eq:acc_contrib} with regard to their differences from zero into surprisals using the uncertainty characterized by robust residuals from Equation~\ref{eq:robust_residuals}.  Accuracy contributions are measured across the power set of features, so to keep the uncertainties on the same scale, robust residuals fits better than residuals or deviations.\footnote{There are a few circumstances where other residuals still give slightly better results than robust residuals, particularly for feature influence probabilities as derived in Section~\ref{ssec:deviations}.  This is an area for future investigation.}  We call this value the information of accuracy contribution and compute it as
\begin{equation}
\label{eq:iac}
IAC_{j,t} = \frac{ AC_{j,t} + \frac{1}{2} e^\frac{-AC_{j,t}}{rr_t} \left( 3 rr_t + AC_{j,t} \right) }{E(rr_t)} - 1.5.
\end{equation}
These $IAC$ values can be assembled into a matrix comparing all features to all other features.  Causal relationships require asymmetries in the entropy relationships between features, in particular, see Definition 8 in the work of \cite{simoes2023causal}.  These asymmetries can be found as
\begin{equation}
\label{eq:iaac}
IAAC_{j,t} = IAC_{t,j} - IAC_{j,t},
\end{equation}
and may be computed for every pair of features to characterize causal entropy apparent in the data.  These entropies may be left as entropies or transformed into probabilities for assessing those relationships with the strongest causal signal.  The values for $IAAC_{j,t}$ above a given threshold can be used as edges to characterize a portion of the underlying causal graph.  This technique relates to knockout features~\citep{candes2018panning}.  The difference is that knockout features are copies of the features with added noise to give a baseline to detect causal relationships, whereas our techniques attempt to assess the uncertainty directly and compare it from the opposite direction.

\subsection{Deviations and Feature Influence Probabilities}
\label{ssec:deviations}

Determining what neighbors are relevant in a high dimensional space is a challenging problem.  Solving this problem involves contextualizing what is locally relevant based on the data.  We extend the ideas of finding relevant features~\citep{hinneburg2000nearest} and data~\citep{beyer1999nearest} into probability space, achieving dynamic behaviors that resemble attention mechanisms~\citep{soydaner2022attention}.

Deviations, as described in Section~\ref{sec:foundational_statistics}, is the uncertainty measured in mean absolute error that surrounds each data element, measuring the probability that one value is representative of another.  For example, the difference of a centimeter is irrelevant when measuring the height of a redwood tree, as wind can move branches and other factors including time of day can affect the angle of the pine needles, whereas a centimeter may be relevant when measuring the height of an early-growth sapling.  We compute these deviations by sampling the data with replacement and determining the uncertainty in predicting each value given that the value itself was used to find the relevant data.  Using the value being predicted to find the relevant data, and then determining the mean absolute error of predicting the value, deliberately underestimates the residual with the intent of approximating features absent from the data that might help with the predictability.  In other words, deviations are an estimate of the lower bound of the expected residual of predicting a particular data element.  If the data does not have extremely long tails and more consistent uncertainty, the expected deviation as measured across the data set can be estimated via sampling with replacement as
\begin{equation}
\delta_{j} = E\left( r_j(x_{n, \mathcal{F}} \mid n \in \mathcal{C}) \right).
\end{equation}
However, for data sets that have long tails, highly variable uncertainty, or nominal classes of varying similarity, we can compute and store deviations that are locally relevant to the data as
\begin{equation}
\delta_{j}(x_{i,F}) = E\left( r_j(x_{i,F} \cup x_{n,J} \mid J \in \mathcal{F} \setminus F, n \in \mathcal{C}) \right).
\end{equation}
Using local deviations or deviations for pairs of nominal classes can improve results but incurs additional performance and storage costs.  Empirically, we have found that using a single value for a feature's deviation works very well for many data sets.  In the case of nominal features, sometimes different nominal classes are unlikely to be confused with one another.  For example, if dealing with a text corpus using tokens or words, it is unlikely that the word ``tree'' would be confused with the word ``square'' in virtually any context, so storing a deviation between those two tokens would neither have sufficient statistical support nor would it be computationally efficient.  Conversely, the words ``tree'' and ``bush'' may be confused on occasion, for example, with regard to coffee plants.  Instead, we have implemented what we call a \emph{sparse deviation matrix} for nominal classes, where only the deviations that have sufficient statistical evidence are maintained, and anything that is not statistically significant we bundle together for the deviations of nominal values not found in the sparse deviation matrix.  The same technique is used for handling missing data values as well.

To determine how informative a feature is when making a particular prediction, we leverage a variation of its information of accuracy contribution, Equation~\ref{eq:acc_contrib}.  The amount of uncertainty reduction one feature has on another is indicative of the probability that one feature is informative to predicting the other.  This uncertainty can be transformed back into the probability of a feature $j$ influencing feature $t$ using its deviation in the same way as Equation~\ref{eq:cumul_resid_entropy} as
\begin{equation}
\label{eq:full_feature_influence_prob}
q_{j,t} = e^{-AC_{j,t}}.
\end{equation}
However, Equation~\ref{eq:full_feature_influence_prob} assumes that all features are present.  Suppose we are using a some subset of features $F \in \mathcal{F}$ with values $x_F$ to predict feature $t$, meaning that features $\mathcal{F} \setminus F$ are inactive.  The influences from unused features relevant from the features used should be redistributed.  Imagine a simple data set where $x_t = x_1 + x_2$, where $x_1$ and $x_2$ are random numbers independently drawn from the same uniform distribution.  The feature influence probabilities $q_{1,t}$ and $q_{2,t}$ should both be $\frac{1}{2}$.  If we duplicated feature $x_2$ into 9 additional features, we would expect $q_{2,t} = \frac{1}{2} \cdot \frac{1}{10}$, and the same value for each of the other duplicated features.  We can approximate this for any combination of features by redistributing the indirect feature probability influences from each feature in $\mathcal{F} \setminus F$ to the features in $F$.  First, $q_{\mathcal{F} \setminus t,t}$ is normalized to obtain the set of probabilities of each feature predicting $t$ without using $t$.  Then, for each feature $j^\prime \in \mathcal{F} \setminus F$, we normalize $q_{F,j^\prime}$, and multiply each normalized probability by the respective $q_{j, j^\prime}$, and accumulate that onto each feature probability $j \in F$.  The final result should be normalized, but to ensure numerical precision after the additions, the resulting probabilities are normalized again.  In the previous example, this means that the feature influence probabilities of the duplicated features would be accumulated into the one feature that was used, and in more complex examples, the relevant probability mass is accumulated based on features' probabilities of influence.  Though this technique is not perfect at redistributing feature probability influences in complex graphs, empirically we have found this approximation to be sufficiently close to the ground truth and notably improves the quality of conditioned inferences.

Putting all of the equations together, deviations and feature influence probabilities are computed using themselves recursively.  These deviations seem to frequently converge quickly to values that yield strong performance if a good initial heuristic value is used.  We start with the smallest gap between two values for the initial deviation for continuous features, $\frac{1}{|\mathcal{C}| + \frac{1}{2}}$ as a variant of Laplace's rule of succession for nominal features, and using probabilities of $1$ for each feature influence.  First we converge the deviations, and then compute the feature influence probabilities in one pass using initial feature influence probabilities of $1$.  As this is a nonlinear dynamics problem, we have found that some data distributions can be sensitive to those initial heuristically chosen values.  Determining how to best compute or converge these values is an open problem, and preliminary results suggest that improving this approach may be able to improve discriminative inference performance to or potentially past current state-of-the-art.  Future work may extend into dynamic probabilities of a feature predicting another feature as $q_{j,t}(x_F)$.

\subsection{Additional Inference Optimization for a Single Target}
\label{ssec:single_targeted}

Though the methods described in Sections~\ref{ssec:deviations} result in robust inference quality across nearly all types of machine learning for any combination of features, it is possible to obtain higher quality results to predict a single feature.  Here we describe one such way to boost this quality.  To implement our solution that optimizes for single targeted inference, the processes for obtaining deviations from Section~\ref{ssec:single_targeted} are maintained, though augmented with additional optionality to select best performance.  First, a Lebesque parameter is introduced for combining feature surprisals beyond just $L_1$, including $L_{0.1}$, $L_{0.5}$, $L_1$, and $L_2$.  These additional topologies can better represent relationships among the data when much of the data is already measured from the respective topology.  For example, data that has significant spatial measurements in Euclidean geometry will benefit from using $L_2$.  The second change is that instead of choosing a dynamic bandwidth for each instance, a global $k$ is selected from the Fibonacci sequence among the range 3 to 144.  Third, when computing feature probabilities, using equal feature weights is explored in addition to those computed via accuracy contributions.  Our targeted optimization algorithm implements a grid search over these configurations with the deviations and feature probabilities otherwise being computed as described in Section~\ref{ssec:deviations}, though iterating the entire process, including the grid search, twice in order to improve convergence.  This single targeted algorithm can achieve cutting edge results.  Though the universal algorithm's results are quite strong and task independent, there is still a discrepancy.  See Section~\ref{ssec:supervised_results} for analysis of the empirical results.  Future work entails improving the methods of Section~\ref{ssec:single_targeted} to achieve higher inference quality without incurring the additional compute required for the grid search and while maintaining consistency on the probabilistic inference.

\subsection{Causal Direction and Guided Causal Discovery of New Features}
\label{ssec:guided_feature_discovery}

Because a feature's substitutability deviation is an estimate a theoretical maximum of predictability, its residual is the measure of predictability given the data, and these values are on the same scale and units of measurement, we can use this information to estimate which features may benefit the most from having further information.  The missing certainty ratio (MCR) of residual to deviation can be expressed for feature $j$ as
\begin{equation}
MCR_j = \frac{E(r_j)}{\delta_j}.
\end{equation}
When attempting to discover causal mechanisms, the most valuable places to look for new features are features that have the strongest causal relationships as found by information asymmetries of accuracy contribution as described in Equation~\ref{eq:iaac} that also have the largest missing certainty ratios.  As each new feature is discovered, the process may be repeated to evaluate whether and how the inclusion of a new feature fits in with regard to the rest of the system causally.  Additionally, when looking at causal relationships, the direction from larger to smaller MCR between the two features tends to indicate causal direction.

As an example, imagine we have a pond which is partly fed by a small river with a sluice gate.  Sluice gates are used to control water flow.  In this example, the sluice gate is used to help regulate the water level in the pond.  To illustrate this, see Figure~\ref{fig:sluice_gate} and imagine that the pond is to the left and the river is on the right.
\begin{figure}[ht]
\label{fig:sluice_gate}
\centering
\includegraphics[width=0.5\textwidth]{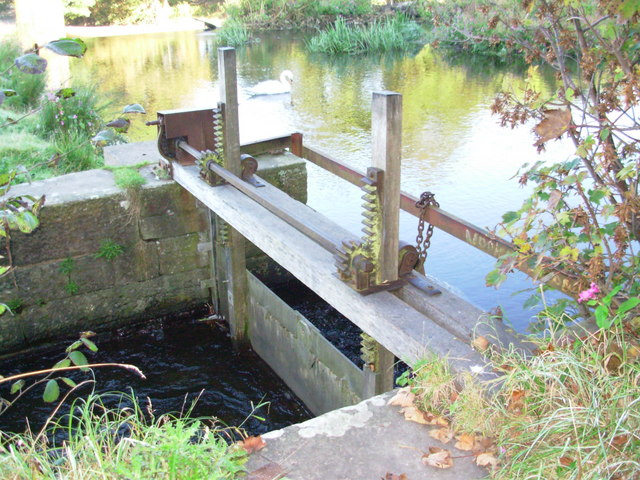}
\caption{Sluice gate (H Stamper / sluice gate / CC BY-SA 2.0, via Wikipedia).}
\end{figure}

Suppose initially we only have two features in the data, the water level of the pond measured in centimeters and the position of the crank as measured in degrees of rotation from the furthest closed position of the gate.  Further suppose we have collected data over a year, measured once per day.  When using these two features to predict themselves with regard to deviation, we find that they are both very predictable.  When using them to predict each other, we find an asymmetry in the accuracy contributions where the position of the crank reduces the uncertainty in predicting the water level more than the other way around.  This indicates that there is likely a causal relationship involving the crank and the water level.  The ratio of the residual of in predicting the position of the crank with and without the crank's current position is somewhat similar, though having the crank's information helps, so consider the ratio being a value larger than 1.  But the ratio of the residual of predicting the pond water level with and without the water's position (residual versus deviation) is still very large.  This indicates that there are more features that should be discovered, most likely involving a causal mechanism from the crank to the pond water level.

Investigating this further, suppose that we find there are some teeth missing on the gear and some chains that are used to manually fix the gate's position in place.  A new feature can be identified which is the position of the gate measured by its vertical position in centimeters.  Recomputing the residuals and deviations, we find that the ratio of the residual of the pond water level to the pond water level deviation is smaller but still large.  Given that we believe we have exhausted the obvious measurable features, we look to other features that might have a causal relationship with the pond water level.  Including the water level of the river seems plausible, since it will be higher when there are rains, and possibly some of the water in the river may be seeping into the pond through the soil.  With this new river water level, reassessing the ratio of the pond level deviation to its residual is much closer to 1, and the causal relationships discovered by the asymmetries in accuracy contribution show the appropriate directions of uncertainty.  But now, the ratio of residuals to deviations for the river water level is high, indicating there are more upstream causal relationships.

This bottom-up method of discovering causal mechanisms can be a strong tool for helping discover causal relationships, but as with any causal discovery, additional external information and external experiments are needed.  Conditioning or constraining on particular subsets of the data can also help understand whether causal mechanisms change when systems are in different configurations or states.  Using a time series approach as described in Section~\ref{ssec:time_series} in conjunction with these techniques can help tremendously in discovering causal relationships, and we will show some empirical results from simulations in Section~\ref{ssec:empirical_causal_discovery}.

\section{Advanced Inference for Single Cases}

In this section we describe more complex ways to assess and characterize predictions and inferences for single cases at a time, such as when making individual predictions about a new data case or assessing the anomalousness of an existing case.

\subsection{Semistructured Data Types and Null Values}

In Section~\ref{sec:data_relevancy_inference} we primarily discussed nominal and continuous feature types.  However, any feature format that can be expressed as a probability can be included in inference.  Straightforward extensions include cyclic features like time of day, where a modulus operation can be applied to the difference operation of a continuous feature.  Continuous features can be rounded at any part during inference, and ordinal features can be implemented by a hybrid of continuous and nominal inferences.

More complex feature types can be supported.  For example, unordered sets can be compared based on the percentage of elements contained in them.  Strings can be compared by edit distance.  Trees and graphs can be compared by graph similarity metrics, such as graph edit distance.  And even formulae and code can be compared by first converting them into parse trees.  What is particularly powerful about these underlying techniques is that as long as there as a measure of difference and a way of combining values, the core principles can be applied.  For example, a discriminative result for comparing two sets may include values of the intersection of sets where the inclusion of an element is more probable than not.  Or a generative inference involving formulae may be a weighted recombination via genetic programming.

\begin{figure}[!ht]
\centering
\includegraphics[width=0.5\textwidth]{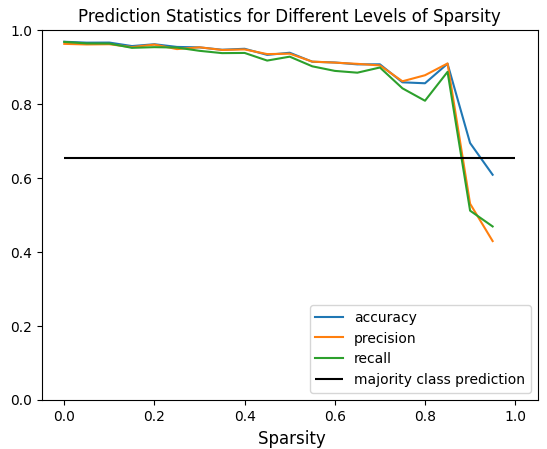}
\caption{An example of prediction performance where the horizontal axis represents the fraction of data elements that are null.}
\label{fig:nulls}
\end{figure}

We model missing data by using a special value indicating that no value was present, regardless of feature type.  We further keep deviations characterizing the probability of a value's being null or not null being informative as to other values.  For example, if missing data is correlated with a strong storm due to sensors being destroyed by wind, this surprisal can be computed and null sensor data may be informative to wind data.  This extra information can be handled efficiently without the need for storing an entirely separate feature.  Figure~\ref{fig:nulls} shows the prediction performance of our techniques applied to the breast cancer Wisconsin dataset~\citep{breast_cancer_wisconsin_dataset}, where the null percentage on the horizontal axis indicates the percentage of the elements in the entire data set (regardless of whether it is the target being predicted or any other element) that have been randomly selected and overwritten by null values.  The general performance stays strong in this small data set of even when most of the elements are null, and this general robustness to sparsity is typical across data sets in our experimentation.

\subsection{Derived and Dependent Features}
\label{ssec:derived_and_dependent_feat}

Because every step of inference is computed sequentially, steps can be inserted in the middle of inference to derive intermediate results.  Suppose that one feature is known to be the sum of several other features, that a feature is known to be some complex function of other features, or that a feature's value is computed by a function specified by another feature.  Derived features can be particularly valuable in domains where the relationships between some features are well known and characterized.  This capability enables spreadsheet-like computations in the middle of inference, as well as opens the door for solving optimization problems.  Derived features can be interwoven between discriminative or generative outputs.

Many trade-offs are made when encoding and storing data.  A somewhat common way of storing extremely sparse data which has mutually exclusive combinations of features is known as data ``melting'' where data values are pivoted around identifying values to transform a data set with many columns into one with few columns and more rows.\footnote{For an example of this form of transform, see https://pandas.pydata.org/pandas-docs/stable/reference/api/pandas.melt.html.}  As an illustrative example, consider a healthcare data set that includes lab tests.  Suppose one patient had three tests performed and another patient had six tests performed.  One could have a column for the results of every test, but given the vast number of test types and units of measurement, as well as new tests being developed and released, the schema becomes complex to maintain.  Instead, the data is often organized where one column is the test name, one is the units of measurement, and a third is the value.  Instead of a patient visit having one row with all of the test data, there is one set of test data per row.  Performing any form of inference over data encoded this way can be slightly challenging, as similar values with different units of measurement can be confounding and it may take considerably more data for machine learning to learn those additional relationships.

To address the situation where features are reused for different purposes, we explicitly constrain these dependent features.  If feature $B$ is dependent on feature $A$, then once feature $A$ has a value, only data records that match $A$ will be considered for the influential cases.  If a feature is nominal or ordinal, then it can select only those that match, but if a feature is continuous, then it may consider only those matching the value range as a hard constraint rather than a similarity.  Continuing the aforementioned medical lab test example, when generating a set of sample values for lab test results, a lab test may be generated first, and then for generating subsequent features like units of measurement or measurement values, only cases that correspond to the specific lab test would be considered.

\subsection{Boundary Values}
Boundary values represent the smallest perturbation needed in an individual feature to change the outcome.  When dealing with nominal features, boundary values are often called counterfactuals.  For a given target feature that is being predicted $t$, we wish to find the smallest change in the feature $j$ that yields a different nominal class for the target.  This can be expressed as
\begin{equation}
BV_j(x_F, t) = \argmin_{\eta \in \mathbb{R}} \{ |\eta| : f_t(x_{F \setminus \{t\}}^{(x_j \leftarrow x_j + \eta)}) \neq x_t \}.
\end{equation}
When target feature $t$ is continuous, then we characterize the boundary value as the change in feature $j$ required to move the prediction to the edge of the mean absolute error of the prediction, $r_t(x_F)$, expressed as
\begin{equation}
BV_j(x_F, t) = \argmin_{\eta \in \mathbb{R}} \{ |\eta| : \left| f_t(x_{F \setminus \{t\}}^{(x_j \leftarrow x_j + \eta)}) - f_t(x_F) \right| > r_t(x_F) \}.
\end{equation}

\subsection{Boundary Cases}
Boundary cases are the individual records closest to a given record or prediction that have the most rapid change over some set of features.  They are cases that are just past the edges of ``plateaus'' in the data, finding the cases that best represent where the data changes most suddenly.  The top $k$ boundary cases are found relative to $x_{i,F}$ by varying feature set $J$ by computing
\begin{equation}
BC(x_{i,F \cup J}, F, J, k) = \underset{i \in \mathcal{C}}{\operatorname{argmax}}^{k} \frac{ I(i, n \mid F) }{ I(i, n \mid F \cup J) }.
\end{equation}

\subsection{Data Constraining and Goal Features}
\label{ssec:constraining_and_goals}

Inferences can be conditioned on any subset or combination of features by setting certain feature values.  Inferences can also be constrained to apply only to data meeting certain criteria.  For example, a prediction can be made using the constraint that it only infers from data data where $0 \leq x_{1} \leq 5$.  A set of constraints, $\phi$, can be added to condition an inference or probability calculation.  For example, to obtain a discriminative prediction with constraints $\phi$, Equation~\ref{eq:discr_predict} becomes
\begin{equation}
f_t(x_{i,F} | \phi )  = E\left( x_t \mid x_F, \phi \right).
\end{equation}

One aspect of constraining data that can be particularly challenging is when constraints are relative to other data.  For example, a ``small'' fruit on a watermelon farm may be of a very different size than a ``large'' fruit on a blueberry farm.  We introduce \emph{goal features} as a way to further condition data that has already been found to be similar via conditioning or constraints.  Goal feature values are obtained by applying minimizing the function $\Omega_J(x_F)$ for the set of features $J$ from the influential cases of $x_F$, $C_{x_F}$.  The values obtained by $\Omega_J(x_F)$ can then be concatenated on to the values of $x_F$ to provide a new condition or constraint of $\Omega_J(x_F) \cup x_F$ yielding values for features in the set $J \cup F$.  We discuss reinforcement learning further in Section~\ref{ssec:reinforcement_learning_active_inference}, but goal features are an important part of reinforcement learning using our techniques as they determine what data would maximize the objective given a particular observed world state.

More formally, a goal value $GV$ for a given feature $j$ is obtained by finding the influential cases to case $i$ for features $F$, and finding the value that minimizes the distance to goal function, $g$.  This can be used in any part of inference in place of a given element $x_{i,j}$, and can be represented as
\begin{equation}
GV_j(x_{i,F}, \Omega) = x_{ \left( \argmin_{n \in C_{i,F}} \Omega_j(x_{n,j})\right), j}.
\end{equation}
Goal values can be used to condition queries, where a reasonable goal is chosen for the data most relevant to $x_F$.

Additionally, goal features can be used to select influential cases and to compute the relevant surprisals to the cases which best achieve the goal for the soft ``state'' of the other features.   The goal cases for a given feature can be represented as
\begin{equation}
GC_j(x_{i,F}, \Omega) = \argmin_{n \in C_{i,F}} \Omega_j(x_{n,j}).
\end{equation}

\subsection{Residual Conviction: Comparing Anomalousness of Features}
\label{ssec:residual_conviction}

When determining anomalousness, it is necessary to compare different features and combinations of features.  For example, an anomalous grape might be extraordinarily large, extremely sour, or unusually shaped.  But each of these three features has its own units of measurement.  And an unusually large grape might be comparable in size to an unusually small apple, making anomalous size contextual to the fruit.  To solve both the issues of contextuality of the anomalousness and units of measurement, we determine the ratio of how surprising we expected a feature to be to how surprising it actually was, all in context of the envelope of the expected uncertainty.  If we say that a grape is as large as it is sour, and the grape is the size of a very small apple, then we would expect the grape to be both very sour and large, thus anomalous.

Following the derivation above for cumulative residual entropy in Equation~\ref{eq:cont_resid_entropy} of assuming a Laplace distribution of the error, we can write the surprisal of the residual as
\begin{equation*}
RS_{i,j,F} = \frac{ E_{LK}\left( \left|f_j(x_{i,F}) - x_{i,j} \right| \mid \delta_{j}(x_{i,F}) \right) }{E_{LK}\left( r_j(x_{i,F}) \mid \delta_{j}(x_{i,F}) \right)}.
\end{equation*}
To convert this value to conviction, we assume that the expected surprisal is $1$ and take the reciprocal following Equation~\ref{eq:conviction}.  The residual conviction can be represented as
\begin{equation}
RC_{i,j,F} = \frac{E_{LK}\left( r_j(x_{i,F}) \mid \delta_{j}(x_{i,F}) \right)}{E_{LK}\left( \left|f_j(x_{i,F}) - x_{i,j} \right| \mid \delta_{j}(x_{i,F}) \right)}.
\end{equation}

Residual conviction can be used to diagnose anomalous values within a data set, specifically to diagnose what makes a particular data element different from and unexpected relative to the relevant comparable data.

\subsection{Surprisal or Distance Contribution of a Case}
\label{ssec:dist_contrib}

When including a new data point to an existing dataset, the data point contributes its own probability mass to the probability mass distribution in the location of the data point, which alters the structure of the data.  This can be thought of as analogous to adding a new thread to a tapestry, where the new strand becomes an integral part of the existing fabric; the thread's color, thickness, and strength will all impact the tapestry.  Similarly, adding a new note to a musical composition can influence the overall harmony and discord.  In this context, we can consider the ``distance contribution'' of a data point, which represents the extent to which it modifies the existing distribution, and thereby affects the detection of anomalies and outliers.  Because distance is surprisal, it we can also refer to these values as ``surprisal contributions''.  By examining this contribution, we can gain a deeper understanding of how individual data points interact with and shape the overall dataset, and develop more effective methods for identifying the usefulness and validity of data points, as well as characterize unusual patterns.  Surprisal contributions are a foundation for various kinds of anomaly detection, and anomalies can be detected by using the surprisal contributions as a measure of how surprising a case is given the relevant data.

The surprisal contribution for case $i$ can be computed from $i$'s influential cases $C_{i,\mathcal{F}}$ as
\begin{equation}
\label{eq:surprisal_contribution}
S_i = E\left( I(x_i \mid x_n) \mid n \sim C_{i,\mathcal{F}} \right).
\end{equation}
Because this is an expected surprisal, surprisal contributions can be considered a form of marginal entropy of a data point with regard to the data.  The relevant surprisal contribution conviction, $\sigma_i$ may be computed by finding
\begin{equation}
\sigma_i = \frac{E\left( S_n  \mid n \sim C_{i,\mathcal{F}} \right)}{S_i}.
\end{equation}

\subsection{Similarity Conviction of a Case}
\label{ssec:sim_conviction}

Though surprisal contributions can be used for anomaly detection, the technique has two potential shortcomings.  The first is that computing local deviations to properly contextualize each anomaly to the influential data is somewhat more computationally expensive.  The second is that, even though surprisal is a strong way to communicate anomalousness, nats are not units of measurement intuitive to most users.  However, saying a result is ``twice as surprising as I'd expect'' is a statement that has intuitive value.  For both of these reasons, we primarily employ a surprisal ratio, which we call conviction as described in Equation~\ref{eq:conviction}.  Since this is determining how similar this case is to what we would expect it to be, we call this ratio ``similarity conviction''.  Similarity conviction can be expressed in context of Equation~\ref{eq:surprisal_contribution} as
\begin{equation}
SC_i = \frac{E\left( S_n \mid n \sim C_{i,\mathcal{F}} \right)}{S_i}.
\end{equation}
We have found empirically that a threshold of $SC_i \leq 0.5$ seems to offer the best performance across a set of benchmark anomaly detection data sets as measured by F1 score.  This value corresponds to a case being considered anomalous if it is twice as surprising as expected.  However, this value can be tuned based on the needs of an anomaly detection process based on the cost and value of false positives versus false negatives.  See Section~\ref{ssec:empirical_anomaly_detection} for further details.

\subsection{Synthetic Data via Differential Privacy Mechanisms}
\label{ssec:synth_data}

Synthetic data are fictitious data that is generated to serve some purpose.  Often, synthetic data is used to condition existing data based on exogenous knowledge, to remove bias or address fairness considerations, or to protect privacy.  The generative inference of our system described in Section~\ref{ssec:generative_react} can be used directly to build new data.  A new data set can be created feature-by-feature, case-by-case, by starting with a given feature, generating a value from the unconditioned marginal distribution, and then incrementally using the previous output to condition the next generative output.  The generated data set can be constrained or conditioned via the techniques mentioned in Section~\ref{ssec:constraining_and_goals}.

The generative algorithms can be modified to use differentially private mechanisms in order improve privacy of the generated data.  To be used in a fully differentially private way, the queries built and feature attributes need to ensure appropriate sensitivity, values that reveal private data such as personally identifiable information must be substituted, and the privacy budget must be applied in a corresponding manner to how the queries and noise mechanisms are being used.  Though formal proofs of differential privacy are beyond the scope of this document, we demonstrate how differentially private mechanisms can be substituted.

For continuous and ordinal features, we can generate values by applying the Laplace mechanism, definition 3.2 as shown by \cite{dwork2014algorithmic} and use sensitivity at least as large as that in definition 3.1 in the same work.  We note that this is closely related to the generative outputs as described in Section~\ref{ssec:generative_react}, though the differences are in the aggregation and sensitivity.  We compute and release the average value of the cases that are the result of the corresponding nearest neighbor query.  The sensitivity should always be at least the range divided by the number of cases; because the influence weights are probabilities, the sensitivity needs to be adjusted to accommodate.  Using the largest gap between consecutive values increases the sensitivity to account for the possibility that one value is extreme and all the rest are not.  Given desired epsilon $\epsilon$, continuous features may be generated as
\begin{equation}
g_j(x_{i,F}, \rho) \sim L\left( x_{n,j}, \frac{\Delta_j(n)}{\epsilon} \right), n \sim \left\{ n \in C_{i,F} : P\left(x_{n,j} \mid F, x_{i,F} \right) \right\}
\end{equation}
When using aggregation-based differential privacy, the equation becomes
\begin{equation}
g_j(x_{i,F}, \rho) \sim L\left( E(x_{n,j} \mid n \sim \left\{ n \in C_{i,F} \right\}), \frac{\Delta_j(n)}{\epsilon} \right).
\end{equation}

For nominal features we can use the \emph{noisy max algorithm} on the probabilities associated with the nominal classes from the kNN query from by as proven in claim 3.9 by by \cite{dwork2014algorithmic}.  However, to address privacy concerns of homogeneous data we include an extra nominal class in that set with probability zero.  If selected, instead of drawing from the results of the kNN query, it will sample from the global distribution using the noisy max method.  If there is a domain of all possible values (e.g., a list of every nominal possible), it will include another zero probability nominal class that, if selected, will draw from the uniform distribution over all possible values with no information from the data.  Given desired epsilon $\epsilon$, nominal features may be generated as
\begin{equation}
g_j(x_{i,F}, \rho) =
\begin{cases} 
x_{n,j}, n \sim P(x_{n,t} \mid C_i, F, x_{i,F}) & \text{if } \eta < 1 - \frac{e^\epsilon}{1 + e^\epsilon}, \eta \in U(0, 1) \\
x_{n,j}, n \sim \mathcal{C} & \text{if } 1 - \frac{e^\epsilon}{1 + e^\epsilon} \leq \eta < 1 - \left( \frac{e^\epsilon}{1 + e^\epsilon} \right)^2 \\
x_{n,j} \sim U\left(\left\{x_{n,j} \mid n \in \mathcal{C}\right\}\right) & \text{if } 1 - \left( \frac{e^\epsilon}{1 + e^\epsilon} \right)^2 \leq \eta.
\end{cases}
\end{equation}

Cases are then generated by starting from one feature randomly, synthesizing the data element, and then using that value to condition the subsequent element.  The process is repeated until the entire case has been synthesized from a randomized order of the features.  We can chain the results together to construct a synthetic case due to the adaptive query sequence proof by~\cite{dwork2006our}.  Additionally, each synthetic data element is obtained by finding similar data for each set of features values.  Due to the stochastic nature of feature selection and the stochastic nature of ordering results, this shuffling enhances the privacy of the output~\citep{feldman2022hiding, cheu2019distributed}.

Despite their widespread use, similarity-based privacy metrics suffer from many issues~\citep{ganev2025inadequacy}.  Similarity measures of privacy keep the synthetic data points away from the original data points, usually measured by some form of distance ratio.  One of the privacy concerns that similarity measures intend to address is whether an individual may identify their own data in the synthesized data.  The deviations in our technique address the uncertainties which would identify data as the same, and the distances are surprisals, which both help improve the meaningfulness of dissimilarity over most implementations which are more arbitrary distances.  However, we emphasize that the synthesis process should only be done once regardless of whether similarity measures are used; there is no model being produced, so the entire privacy budget should be spent at once.  This removes the feedback loop that an attacker could exploit to attempt to synthesize data points to surround and compromise an existing data point.  When coupled with the differentially private mechanisms, the noise mechanisms introduce plausible deniability, and similarity measures can be used to either reject data to prevent coincidental publication, validate the outcome by measuring the noise added, or both.

We describe these similarity measures as \emph{anonymity preservation}.  This is implemented as a ratio, a comparison of the measure of the similarity of a synthetic data point to its nearest neighbor to some function of the similarity between nonidentical cases among its influential cases.  The minimum distance between nonidentical neighbors is a test to ensure coincidental publishing does not occur can be written as
\begin{equation}
AP_{\min}(x_{i,F}) = \frac{ \min_{n \in C_{x_{i,F}}, I(i, n \mid F) \neq 0} I(i, \mid F) }{\min_{i, n \in C_{x_{i,F}}, i \neq n \land I(n^{\prime}, n \mid F) \neq 0} I(n^{\prime}, n \mid F) } .
\end{equation}
A stronger variant can be written as the maximum distance between any two cases in the most influential data.  This is an ex post measurement of the kind of noise that is added by differentially private mechanisms, and can be written as
\begin{equation}
AP_{\max}(x_{i,F}) = \frac{ \max_{n \in C_{x_{i,F}}, I(i, n \mid F) \neq 0} I(i, \mid F) }{\max_{i, n \in C_{x_{i,F}}, i \neq n \land I(n^{\prime}, n \mid F) \neq 0} I(n^{\prime}, n \mid F) }.
\end{equation}

Anonymity preservation can be aggregated in different ways, e.g., as an average, geometric mean, or minimum of the distance ratios, depending on whether the concern is measuring the amount of noise added versus assessing the worst case.  When anonymity preservation is configured such that it excludes based on the maximum distance between points, evaluates a single noninteractive one-time synthesis process, and selects bandwidth of the cases to be sufficient to address sensitivity concerns, then this technique becomes a stronger approach of evaluating privacy than existing similarity-based privacy metrics.  The minimum of the minima of all the distance ratios of all data points can be used to evaluate the privacy of the data, indicating a worst case analysis as a way of evaluating the outcome of the privacy enhancing process, rather than being a yes-no binary decision of privacy per record.  And because anonymity preservation can use the local sensitivity appropriate for a case, it is measuring the same characteristics of noise that should be added via differentially private mechanisms.

\section{Data Compression Via Probability Mass and Hierarchical Sharding}
\label{sec:compression_hierarchy}

Traditional machine learning techniques construct a model from input data.  That model is usually much smaller than the source data, and model construction is essentially a form of data compression with generalization.  Because the techniques described in this paper are applied directly to the source data, along with any computed uncertainty, a naive search of that data could be limited by memory and computational resources.  To scale up to larger data sets without limiting the types of queries that can be performed, we employ data compression techniques that robustly shard the data hierarchically.  The popularity of vector databases for retrieving embeddings has increased attention to the long history of nearest neighbor search techniques and yielded many algorithms with different trade-offs~\citep{pan2024survey}.  However, the vast majority of the literature deals with simple distance metrics, where the data instances do not have a specified weight independent of queries, and all features are present in every query.  In this section we discuss how we apply generalization to search on a representative subset of the data, employing case weighting and redundancy elimination techniques similar to \cite{morring2004weighted}, though using our surprisal-based techniques.  We also discuss techniques to use data reduction to shard data and direct queries to subsets of the data in order to improve search performance while minimizing loss in quality of the results.  Although the worst case for exact nearest neighbor search for all data points is only slightly better than $O(|\mathcal{C}|^2 \cdot |\mathcal{F}|)$~\citep{williams2018difference}, in practice, structure and redundancy within the data improves typical performance.  This section focuses on reducing redundancy in the data and making trade-offs to improve generalization and scalability, whereas the details of the search algorithms we employ will be discussed in Section~\ref{sec:algorithms_performance}.

\subsection{Case Weights, Redundancy Reduction, and Ablation}
\label{ssec:case_weights}

In instance-based learning, case weights refer to the importance or relevance of each data point.  By default each case trained has a weight of $1$.  However, if two cases are identical, instead of keeping separate cases, only one copy of the case can be kept in the database and the weight can be incremented to $2$ and beyond as more identical cases are added.  If a data set has significant redundancy due to identical cases, this can be beneficial to computational and storage requirements.

Real world data sets are rarely comprised of mostly populations of identical cases.\footnote{Even when records are identical, data provenance is typically different for the identical records.  Maintenance of provenance of case weights can be maintained via traditional data stores independently of the methods we describe using data provenance tools and databases.}  Instead of only accumulating weight for identical cases, we can instead choose to insert new cases when they are not sufficiently predictable and accumulate weight for cases when they are predictable.  Many techniques exist for reducing data for a particular purpose.  Notable examples date back to the 1960's~\citep{hart1968condensed}, and determining small data subsets that are most relevant is often termed core-sets~\citep{feldman2020core}.  Our work is most closely related to weighted core-sets.

As previously mentioned, when cases are initially trained they have a probability mass of 1 except in certain situations when dealing with rebalancing for bias, fairness, and other use cases covered in Section~\ref{ssec:rebalancing}.  Cases are trained up until a statistically significant threshold has been reached, typically at least 1000 cases.  Subsequent cases are evaluated in order to determine whether they should be trained, or whether their probability masses should be accumulated onto existing cases without storing the new data, something we refer to in our system as \emph{ablation}.  Trained cases are inserted into the database with a probability mass of 1, whereas ablated cases are rejected for training, but their influential cases are determined and the probability mass of 1 is distributed among the influential cases relative to their normalized influence weight.  That is, if case $i$ is being ablated, then the weights $w_n \forall n \in C_i$ will be updated with their respective influential probabilities from Equation~\ref{eq:weighted_case_probability} as
\begin{equation}
w_n \leftarrow w_n + \frac{P(x_i \mid \mathcal{F}, x_n) }{ \sum_{n^{\prime} \in C_i} P(x_i \mid \mathcal{F}, x_{n^{\prime}})} \forall n \in C_i. 
\end{equation}

Even with data ablation, at some point the data accrued may grow too large to hold in memory quickly.  At this point, a data reduction pass is begun.  Data reduction applies a similar ablation selection, except instead applies to already trained cases, removing them and accruing their probability mass to remove cases to a desired size.  By default, this size is $\frac{\mathcal{C}}{e}$ records.  We perform data reduction in batches where cases are selected for removal simultaneously.  Larger batch sizes can improve concurrency and performance, but smaller batch sizes may select better cases to remove.  Once the data has been sufficiently reduced, training with ablation is resumed until the data size becomes too large, and the process is repeated.\footnote{This process also permits the ability to reduce case weights to dampen the effects of prior learning to focus more on immediate data.  Appropriate tuning for this process is future work.}

Our techniques select for removal cases that are highly predictable and most central, preserving cases that define boundaries.  This selection is comprised of two criteria: how equidistant a case is to its neighbors and how much marginal surprisal the case adds to the rest of the data.  Cases are only ablated if they fail to meet both criteria of retention.

For the first criteria, we prefer cases that are central within a crystal-like lattice of other cases, as that means an interpolation would yield an approximately correct value for any of the features.  As dimensionality of a data set increases, most of the records are on the periphery of the volume.  So therefore, the influential cases relative to the case in question should be approximately equidistant.  This can be assessed by measuring the entropy of the probabilities of influence of its influential cases.   We compute a case's influence entropy as
\begin{equation}
H_{influence}(i) = \sum_{n \in C_{i,F}} P(x_i \mid F, x_n) \cdot I(x_i \mid F, x_n).
\end{equation}
If a case's influence entropy is sufficiently high, it should be kept.  We aim to ablate cases that fall in the lowest $\frac{1}{e}$th quantile of influence entropy, meaning that they are the most irregular with regard to the local topology.  The first criteria can be written as cases not being ablated if
\begin{equation}
\label{eq:influence_entropy}
H_{influence}(i | F) < Q_{1 - \frac{1}{e}}\left( \left\{H(n) : n \in C_{i,F}\right\} \right).
\end{equation}

The second criteria is to select cases for ablation that do not add significant marginal surprisal to the overall data.  If a case adds more marginal surprisal than its neighbors, then it is more likely to stick out from its neighboring cases rather than being surrounded by them.  This criteria of not ablating a case can be written as
\begin{equation}
\label{eq:largest_influence}
\min_{x_n \in C_i} I(x_i \mid x_n) > \max_{x_n, x_{n^\prime} \in C_i, x_n \neq x_{n^\prime} } I(x_n \mid x_{n^\prime}).
\end{equation}
Cases are ablated unless they meet both the criteria of Inquation~\ref{eq:influence_entropy} and Inequation~\ref{eq:largest_influence}.  We note that using these criteria for data reduction and data ablation are independent of the features, and therefore tend to perform reasonably well across any task involving any subset of features.

\subsection{Hierarchical Data Sharding}
\label{ssec:hierarchical_sharding}

Data reduction and ablation together yield a much smaller working data set while maintaining utility.  However, if the data set is very large but also very nuanced, such as with language data sets, the relevant data needed to preserve utility may still be too large to query efficiently.  In this case we employ bottom-up hierarchical decomposition of the data in order to divide it into different portions to reduce the total data that must be searched.  As of this writing, fully automated hierarchical data sharding is still under development, but preliminary results show promise.

As data is trained sequentially and the first data reduction pass is performed, any of the cases rejected as being redundant can be moved to a set of shards.  Once the data is reduced, a clustering algorithm, such as that described in Section~\ref{ssec:clustering}, can be applied using surprisals as distances to determine the number of shards as well as which shard each case most closely represents.  The cluster label itself can be stored as a feature.  Then when additional records are trained and ablated, they can be routed to the appropriate shards based on the shards associated with the influential cases.  This process can be repeated to build hierarchical structures of the data, in some ways conceptually similar to other greedy hierarchical search techniques such as HNSW~\citep{malkov2018efficient}, though capable of a wider variety of tasks than just nearest neighbors search on a fixed set of features.

\section{Groups, Clusters, Series, and Anomalies}
\label{sec:groups_series_anomalies}

Data sets often have identifiers of transactions or events that are related to a person, organization, agent, piece of equipment, or other entity.  In relational databases, these are generally considered primary or foreign keys, but also can be identifiers such as personally identifiable information.  Sometimes these identifiers are used in conjunction; a product may have both a SKU and also a serial number, or a transaction may have a buyer and a seller.  Further, cases may have sequence numbers, timestamps, or simply be ordered in one or more ways.  In this section we discuss operations with data groups, how groups can be determined via clustering, how series data is group data that has an ordered sequence, data sets that have both groups and series, and how our system determines whether data is anomalous.

\subsection{Rebalancing, Fairness, and Data Volume Privacy}
\label{ssec:rebalancing}

As discussed in Section~\ref{ssec:case_weights}, each case trained has a default probability mass of 1.  However, sometimes this is not desirable and the data collected may have biases.  Sometimes the biases may be deliberate sampling biases for cost efficiency, when it is cheaper to run one kind of experiment than another, or accidental in that one group tends to respond more than another.  The biases may be systemic or historical such as social biases, or perhaps the distribution of the data is changing and historical data should be updated to better fit future trends.  A different reason case weight rebalancing may be desired is if a given identifier has a disproportionate volume of data, and the impact of that identifier should be adjusted to only count as the weight of a single entity, regardless of data volume corresponding to the identifier.  Rebalancing of weights can be used to enhance privacy, e.g., so that a patient who visits a clinic weekly would be afforded the same privacy as someone who visits a clinic yearly, or so that inferences performed on the data do not overly bias toward high volume identifiers.

The most straightforward way to rebalance case weights is when the rebalancing is desired for a single nominal feature, which does not necessarily need to be an identifier.  Each additional nominal feature used for rebalancing can be leveraged to adjust the weights in combination with other features by updating the weights by each feature sequentially.  Given a rebalancing feature $j$, the case weights can be updated by evening out the probability mass relative to each value
\begin{equation}
w_n \leftarrow w_n \frac{ \sum_{i \in \mathcal{C}, x_{i,j} = x_{n,j}} w_i }{\sum_{i \in \mathcal{C}} w_i}.
\end{equation}

As new cases are trained, they can be initialized with the respective weight, and occasionally renormalized if needed, depending on whether data ablation or reduction are being performed.

\subsection{Clustering}
\label{ssec:clustering}

\begin{algorithm}[H]
\caption{Similarity Conviction Clustering}
\label{alg:clustering}
\begin{algorithmic}[1]
\State Sort $\mathcal{C}$ by descending $SC_i \rightarrow SC_{list}$; $cluster\_map \gets\{\}$; $cluster\_id \gets 1$.

\State \textbf{Seed clusters ( $SC_i \ge 1$ )}
\While{$\text{SC\_list} \neq \emptyset$ \textbf{and} $SC_{\text{top}} \ge 1$}
    \State $i \gets \text{first}(\text{SC\_list})$; remove $i$ from list
    \State $cluster\_map[i] \gets cluster\_id$
    \State \Call{ExpandCluster}{$i, cluster\_id$}
    \State $cluster\_id \gets cluster\_id + 1$
\EndWhile

\State \textbf{Attach remaining cases}
\State $\displaystyle \max\!\sigma(c) \gets \max\{\sigma_i : i \in c\}$ for each formed cluster $c$
\State $changed\gets\text{True}$
\While{$changed$}
    \State $changed \gets \text{False}$
    \ForAll{unclustered $i$ sorted by ascending $\sigma_i$}
        \State $c\gets$ unique cluster ID among $C_i$ (if any)
        \If{$c$ exists \textbf{and} $\sigma_x < \text{clustering\_inclusion\_relative\_threshold}\,\max\!\sigma(c)$}
            \State $cluster\_map[i] \gets c$; update $\max\!\sigma(c)$; $changed \gets \text{True}$
        \EndIf
    \EndFor
\EndWhile

\State \textbf{Finalize}
\ForAll{$i \in \mathcal{C}$}
    \If{$i \notin cluster\_map$} $cluster\_map[i] \gets -1$ \EndIf
\EndFor
\State \Return $cluster\_map$
\end{algorithmic}
\end{algorithm}

\begin{algorithm}[H]
\caption{Similarity Conviction Clustering Subroutines}
\label{alg:clustering_subroutines}
\begin{algorithmic}[1]
\Function{ExpandCluster}{$i, cluster\_id$}
    \State $U\gets\{n \in C_i \mid n \text{ unclustered or } cluster\_map[n]\neq cluster\_id\}$
    \ForAll{$n \in U$ with $cluster\_map[n]$ defined}
        \If{$i \in C_n$} \Call{Relabel}{$n, cluster\_id$} \EndIf
    \EndFor
    \State $G \gets\{n \in U \mid SC_n \ge \text{clustering\_expansion\_threshold}\}$
    \ForAll{$n \in G$}
        \State $N_n \gets C_n$
        \If{neighbors of $n$ span multiple clusters}
            \If{any $z\in N_n$ is unclustered and $z\in G$}
                \State remove $n$ from $G$
                \State \textbf{continue}
            \EndIf
            \State $c_n \gets \argmax_{c}\sum_{z \in N_n\cap c}P(z \mid n)$
            \If{$c_y\neq cluster\_id$}
                \State $cluster\_map[n]\gets c_y$; \textbf{continue}
            \EndIf
        \EndIf
        \State $cluster\_map[n]\gets cluster\_id$
        \State \Call{ExpandCluster}{$n, cluster\_id$}
    \EndFor
\EndFunction

\Function{Relabel}{$n, cluster\_id$}
    \ForAll{$z$ with $cluster\_map[z] = cluster\_map[n]$}
        \State $cluster\_map[z] \gets cluster\_id$
    \EndFor
\EndFunction
\end{algorithmic}
\end{algorithm}

Our approach to clustering takes advantage of all of the forms of density and surprisal stored in our system.  It leverages similarity conviction of each case, $SC_i$, surprisal contribution conviction of each case, $\sigma_i$, and the relevant neighbors of each case $C_i$.  It is similar to the leaders hierarchical clustering algorithm~\citep{vijaya2004leaders}, in that it starts by selecting cases that are the most similar to their relevant neighboring cases, relative to the data around them, and assigns each a cluster id.  As it iterates from the most similar unclustered cases to the least, it determines whether each new case has neighbors of another cluster, and determines whether different clusters should be joined or if unclustered cases should remain unclustered.  Algorithm~\ref{alg:clustering} describes the primary algorithm and Algorithm~\ref{alg:clustering_subroutines} describes the relevant subroutines.

\subsection{Group Anomalousness}
\label{ssec:group_anomalousness}

Often anomalousness is not just about an individual case, but sometimes can be about a group of cases.  It may be that a particular sensor identifier is broken and is producing inconsistent results.  Or it may be that a particular merchant is invoicing in a fraudulent manner where any record by itself would not seem anomalous, but as a group the cases are anomalous either because they're all slightly anomalous or they have an unusual probability density distribution.

Detecting anomalous cases as described in Sections~\ref{ssec:dist_contrib} and \ref{ssec:sim_conviction} is performed by calculating surprisal.  The additive nature of surprisal opens up a variety of powerful techniques.  This opens up a variety of techniques for measuring anomalousness of a group.  Summing the total surprisal of each group indicates which group as a whole may have been the most anomalous, which can be averaged by data volume by dividing the total surprisal of the group by its probability mass to find average group surprisal (AGS) as
\begin{equation}
AGS_{t = \kappa} = \frac{1}{ \sum_{i \in \mathcal{C}, x_{i,t} = \kappa} w_i } \sum_{i \in \mathcal{C}, x_{i,t} = \kappa} S_i.
\end{equation}

Sometimes in situations of fraud or other intelligent gaming of systems, all of the cases by themselves look very typical, but as a distribution their density is anomalous.  Various tools, such as Kullback-Leibler divergence can be employed to find these sorts of group anomalies.  For example, finding the surprisal of groups as potential inliers can be found via
\begin{equation}
S(t = \kappa) = D_{KL}( \{ S_i \forall i \in \mathcal{C} \text{s.t.} x_{i,t} = \kappa \} || E(S_i)).
\end{equation}

\subsection{Anomaly Detection}
\label{ssec:anomaly_detection}

As discussed in other sections, many factors can determine anomalousness, including whether a record is far from the rest of the data, far from similar data, is in a cluster with relatively little data, or whether there is an association between records where the density is anomalous.  Conviction is a consistent unit of measurement across different types of queries since it is the ratio of two surprisals, relating how surprising some data are to how surprising they were expected to be.  This allows us to compare different types of anomalousness on the same scale.

To perform general anomaly detection, we first cluster the data (described in Section~\ref{ssec:clustering}) and compute the distance contribution convictions (described in Section~\ref{ssec:dist_contrib}), similarity convictions (described in Section~\ref{ssec:sim_conviction}), and optionally case convictions related to density anomalousness (described in Section~\ref{ssec:group_anomalousness}) and optionally residual conviction (described in Section~\ref{ssec:residual_conviction}).  These are stored as additional features.  Once these are computed, clusters are sorted based on size, and those clusters below a certain size threshold,\footnote{15\% works well for most data sets.} and each cluster is held out to compute its average group surprisal as described in Section~\ref{ssec:group_anomalousness}.  Because a record is as anomalous as its most anomalous aspect, the minimal conviction for each record is computed as the minimum of all the forms of conviction.  Any particular form of anomalousness can be inspected or computed for any record.  For example, the most anomalous records in a data set may be labeled as such because they are in an extremely anomalous small group, however, some of the records in that group may also be slightly anomalous in that their values are a bit different than the rest in the group which are tightly clustered.

\subsection{Series, Time Series, and Panel Data}
\label{ssec:time_series}

Instance based learning techniques have a long history with regard to time series dating back to the 1800's but expanding to trends in the late 1950's~\citep{HOLT20045}.  It still remains in use via techniques like exponential smoothing~\citep{de2011forecasting} as well as more complex kNN approaches~\citep{martinez2019methodology}.  Time series forecasting varies a little among differing fields, but often ``time series forecasting'' implies forecasting a single set of variables over time such as macroeconomic data for a single country.  ``Panel data'' and ``multi-series forecasting'' refer to data sets where there are multiple different identifiers, such as metrics around companies for stock pricing or evaluating health changes of patients over time.  Our techniques are flexible with regard to handling temporal data, provided at least one column represents time.  If there are multiple entities being handled temporally, then at least one column should represent an identity of which group with which the temporal sequence of data is affiliated.

When handling continuous features, the rate of change is often important.  This is computed as the rate of change in a value divided by the rate in change of the time feature.  If $j$ is the continuous feature and $t$ is the time feature, then this can be computed as a rate value for case $i$, $x_{i,\Delta j}$, as
\begin{equation}
x_{i,\Delta j} = \frac{x_{i,j} - x_{n,j}}{x_{i,t} - x_{n,t}} \text{ where } n = \argmax_{n^\prime \in \mathcal{C}, x_{n^\prime,t} < x_{i,t}} x_{n^\prime,t}.
\end{equation}
This may be chained to represent the difference equivalents of second or third derivatives, or beyond.  For example, the second derivative may be approximated as
\begin{equation}
x_{i,\Delta^2 j} = \frac{x_{i,\Delta j} - x_{n,\Delta j}}{x_{i,t} - x_{n,t}} \text{ where } n = \argmax_{n^\prime \in \mathcal{C}, x_{n^\prime,t} < x_{i,t}} x_{n^\prime,t}.
\end{equation}
In practice, we have found that including the first rate is generally sufficient for most problems, and including the second rate has been sufficient for all of the practical problems we have tested, as deeper nuances of the changes are generally learned from the data.  We can imagine higher order rates being valuable for certain problems, especially in the physical sciences.  By default, our implementation assumes a single rate for any continuous feature and implements it via a derived feature as described in Section~\ref{ssec:derived_and_dependent_feat}.

In addition to extra rate features, we employ lag features to repeat values of previous records in the given series.  These extra features are simply the values of the previous steps leading up to the case, and may be used for any feature type.  Lags may be configured and optimized for the data.  The memory and performance costs of including an additional set of features as large as the original feature set must be weighed versus the gains.  In many data sets, the first several lags are generally sufficient to capture very rich behavior.  However, this can miss events that happened much earlier in the time series, which can sometimes require additional feature engineering for aggregation and lags on those additional features.  Future work on lags includes exploring treating each time series as its own data set, performing ablation and data reduction as described in Section~\ref{ssec:case_weights}, and then combining the probability mass of the series regardless of the particular lag.  The goal is to reduce a time series to only the events that are pertinent and compare those, instead of maintaining specific lags for all records.  This is inspired by the work of ~\cite{fountas2025humaninspired} that obtains extremely long context windows for LLMs via breaking apart the sequences by surprisal and including the relevant chain.  The inspiration is to treat a given series at least as temporary training data for the purposes of inference, using the probability mass of related sequences, and perform inference based on the subsequent values in the relevant time series.

The final additional features used in time series are features related to time and event progress throughout a series, time and event progress until a series ends, and a count of any synchronous events.  Additionally, features may be marked as stationary features, which means that they will have the same value across the time series, and may be synchronously updated if the predicted value changes.  An example of a stationary feature might be a city of birth, location that a company was originally incorporated, or the eventual outcome of a particular series such as which player ultimately won or lost a game.  These features are inferenced via probability mass across the entire time series.

The inference process for forecasting uses the additional features listed here and performs an inference on all nominal features and highest order rate features.  Then it uses rate features to compute the lower order rates and eventually the continuous values, propagating null values when missing values are appropriate.  The inference process can be generative or discriminative as described in Sections~\ref{ssec:discriminative_predictions} and~\ref{ssec:generative_react}.  The aggregation of a set of generative reacts can be performed to obtain a mean absolute deviation for each step in time to help characterize the uncertainty of a forecast.

\subsection{Reinforcement Learning and Relation to Active Inference}
\label{ssec:reinforcement_learning_active_inference}

Reinforcement learning can be implemented by combining generative outputs (Section~\ref{ssec:generative_react}) with time series (Section~\ref{ssec:time_series}) and goal features (Section~\ref{ssec:constraining_and_goals}).  To implement this, the features fall into several categories.  First are the features that characterize the world or system state in which the agent is acting.  The time series element includes the historical information pertinent to characterizing the state.  Next, goal features describe what goal to achieve, e.g., maximizing the score, minimizing cost, or attempting to achieve a specific value.  These goal features find the data that best achieve those goals within the data that best match the state, and updates the probability of influence based on achieving the goal.  From these data and influence weights, the system produces a generative output for what action should be taken given the context of the world state as described by other features.  Given a world state described by $x_F$ and goal function over action features $J$ on the state, $\Omega_J(x_F)$, we generate a set of actions conditioned on selecting the value from the cases that minimize the goal functions for conviction $\rho$ as
\begin{equation}
g_J\left(x_F, \rho \mid GC_J(x_{i,F}, \Omega_J) \right).
\end{equation}

Our system allows the user to describe the domain of each features so that reinforcement learning may begin without any training data.  Initially, these first actions are drawn uniformly from the domain since there is no supporting data.  As data is obtained including the resulting reward, the rewards are stored in the goal features.  This combined data is trained into the system with all of these features including the state, action, and goal features as a sequence of time series data.  If the reinforcement learning is applied to games, then each game itself can be a series with a unique identifier.  Rewards can be implemented either as the final reward or incrementally, depending on what is best suited for the domain.

The conviction parameter provides control over exploration versus exploitation.  Since conviction is the ratio of expected surprisal to observed surprisal, it can even be fixed to a specific value and the system can still converge to a high performing agent.  As the system obtains more data, it will generate actions following the marginal distribution given the state and goals, which will converge to an appropriate set of actions that achieve the rewards for portions of the state that have significant data, but will still explore in sparse areas of the data because the marginal distribution will have more uncertainty.  We have found that conviction values in the realm of 2 to 5 tend to perform well, which means it will lean slightly toward exploiting knowledge, though dynamically adjusting this value based on domain knowledge can improve how quickly the system can learn.  An interesting flexibility of this system is that the performance can be controlled dynamically by changing how the goal features are parameterized.  For example, if a system has been trained to win at a particular game by maximizing a score, the goal features can be changed to seek a particular score, playing at a set level of difficulty, rather than maximizing it.  Of course, the data for a given level of difficulty may be more sparse than that which is maximized, but additional training can improve the agent's ability to perform across levels.  Data ablation and reduction, described in Section~\ref{ssec:case_weights}, can be quite important to keeping the data volume appropriate.

Given that all of the distances are surprisals, our techniques relate to the core ideas of active inference~\citep{sajid2021active, friston2009reinforcement}.  Free energy, which is the surprisal of a given sensory input given the observed state in active inference, corresponds directly to the surprisal between currently observed data and trained data in our system.  Further, our system can measure many variants of surprisal, such the surprisal it might take to achieve a particular score given the current world state.  We believe that there are rich overlaps between our work and active inference and believe it is a rich area for future work.

\section{Implementation}
\label{sec:algorithms_performance}

Here we discuss design and algorithmic aspects of our implementation.

\subsection{API, Architecture, and Usability}

The general design philosophy is that data should be stored similar to a database, but in manageable quantities that generalize the knowledge and reflect the distribution.  Each instance corresponds to a data set that can be trained, edited, updated, and inferred against.  The horizontally scaling implementation is built using Kubernetes,\footnote{\url{https://kubernetes.io/}} but also can be run locally as a shared compiled library using a Python interface.  The engine is written in the Amalgam language described in Section~\ref{ssec:amalgam}, and the Amalgam language includes a rich query engine that manages the storage of the data.

To begin, the data schema is described via what we call \emph{feature attributes}.  Feature attributes are a rich schema that describe the type of column of data, including: whether it is continuous, nominal, or ordinal; allowed values or the range for continuous numeric features; whether nulls or missing data are allowed; whether a feature has a specific date or time format; whether the feature is cyclic; whether a feature should be treated as numeric, string, code, etc.; how the feature is computed if applicable (e.g., the code to compute one feature from another if it is known and deterministic); as well as the feature's data format.  Due to the flexibility of the Amalgam language, feature attributes may be changed or corrected at runtime.  The first step in most workflows is to call a method in our system known as ``infer\_feature\_attributes'' which uses a rich set of heuristics to make a best guess, taking advantage of whatever information and metadata a data frame or data store has using the most appropriate relevant APIs.

The main API verbs are train, analyze, and react.  Train is a general verb that includes addition of new data, as well as editing and removal of existing data.  Any data that includes derived features, such as rates of change for time series data or dynamically computed features, are updated during training or editing time.  The train process accounts for data ablation, described in Section~\ref{ssec:case_weights} to accrue probability mass when the data itself is largely redundant.  Analyze describes the process that characterizes the uncertainty of the data, performing the computations described in Section~\ref{ssec:deviations}, and also includes data reduction to keep the data size to a manageable amount also described in Section~\ref{ssec:case_weights}.  The analyze process can also be computed dynamically or concurrently to training in horizontally scaling environments.

The react verb describes a desired inference based on a set of contextual input values, a set of actions or outputs, a set of details to report, whether the react should be generative or discriminative, and any relevant parameters.  The details include a wide variety of options, such as influential cases, boundary cases, boundary values, various forms of residuals and other uncertainties, as well as feature and case influence metrics.  These details are included in the react verb because it is typically much faster to compute relevant details along with an inference rather than computing them after the fact.  React comes in variants including react series, react group, react aggregate, and react into features.  React series is best suited for time series data as described in Section~\ref{ssec:time_series}, and react group corresponds to the capabilities described in all of Section~\ref{sec:groups_series_anomalies}.  React aggregate returns aggregate results, particularly aggregations of inference across all or part of the data.  React into features performs inference, potentially with additional details, and stores the results into new features.

\begin{figure}[!ht]
\centering
\includegraphics[width=0.8\textwidth]{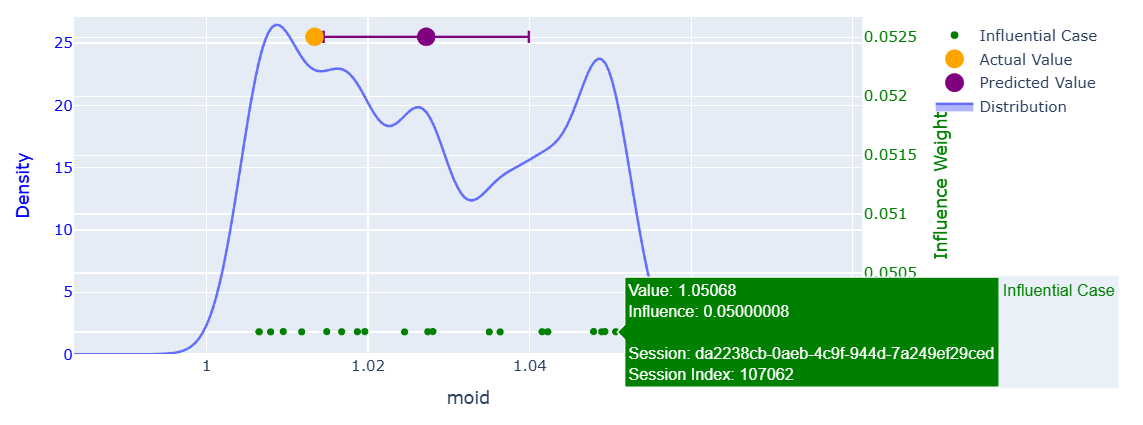}
\caption{An example of inspecting a prediction.}
\label{fig:learning_visualizing_prediction}
\end{figure}

Figure~\ref{fig:learning_visualizing_prediction} depicts a way of visualizing the data and uncertainty around a prediction, in this example using the Small-Body Database Lookup provided by NASA JPL\footnote{\url{https://ssd.jpl.nasa.gov/tools/sbdb_lookup.html}} to predict the minimum orbit intersection distance of the object and Earth.  In this particular example, we have held out the asteroid Hehe and are attempting to predict it, plotting the actual value in orange.  The prediction is the reddish-purple dot, and the mean absolute deviation of the prediction is the whisker plot around it, showing that the actual value is just beyond the 50th percentile of uncertainty, a reasonable estimate.  The blue line is the probability density of the prediction, indicating it is largely bimodal and so it is more likely that the prediction would be closer to one of the modes.  The green dots on the bottom show the cases used to make the inference, with one of the values selected.  Note that any information can be put in the green box, but for this particular plot it is just the value and the provenance of the data based on its training session UUID and the index it was trained within that session.  The records selected as being within the relevant statistical bandwidth for this prediction are relatively numerous, and none of the records stand out as being notably more influential than the others, and the influence weight of the selected record is only about 5\%.

\begin{figure}[!ht]
\centering
\includegraphics[width=0.9\textwidth]{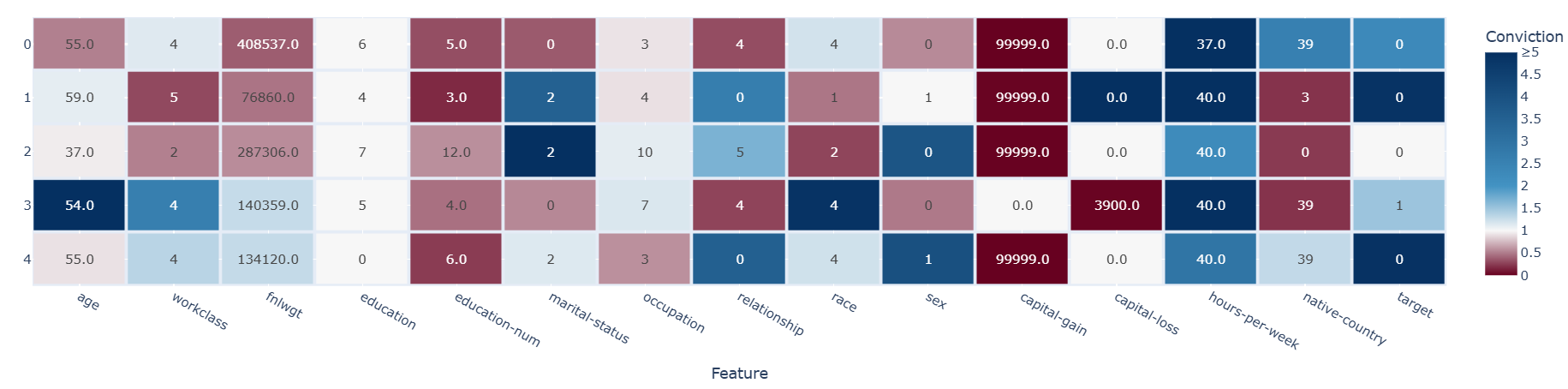}
\caption{An example of inspecting the most anomalous records.}
\label{fig:learning_visualizing_anomalies}
\end{figure}

Exploring anomalies can be done visually as is depicted in Figure~\ref{fig:learning_visualizing_anomalies}.  This depicts outlier anomalies in the Adult dataset~\citep{adult_dataset}.  Each value is predicted and compared to the expected uncertainty of the prediction as residual conviction (described in Section~\ref{ssec:residual_conviction}).  From this visual anomalous values are easy to discern, and the combinations of the amount of capital gains, the education, and the native country are driving the significance of the anomalous cases.

\subsection{Amalgam Programming Language}
\label{ssec:amalgam}

Amalgam is a domain specific language developed primarily for genetic programming and instance based machine learning, but also for simulation, agent based modeling, data storage and retrieval, the mathematics of probability theory and information theory, and game content and AI.  The primary repository for the language is located at \url{https://github.com/howsoai/amalgam}.  The language format is like LISP and Scheme in that it uses parenthesized list format with prefix notation and is geared toward functional programming, where there is a one-to-one mapping between the code and the corresponding parse tree.  The choice for implementing our techniques using Amalgam are due to its high performance and flexible query system, its ability to work with semistructured data and code safely via generative, comparative, and executable workflows, as well as its transactional nature of storing and updating data.  It also integrates easily within the Python ecosystem.

Whereas virtually all practical programming languages are primarily designed for some combination of programmer productivity and computational performance, Amalgam prioritizes code matching and merging, as well as a deep equivalence of code and data.  Amalgam uses entities to store code and data, with a rich query system to find contained entities based on their attributes or labels, and also has strong and easy-to-use concurrency capabilities.  The language uses a variable stack, but all attributes and methods are stored directly as labels in entities.  There is no separate class versus instance, but entities can be used as prototypes to be copied and modified.  Though code and data are represented as trees from the root of each entity, graphs in code and data structures are permitted and are flattened to code using special references.  Further, instead of failing early when there is an error, Amalgam supports genetic programming and code mixing by being extremely weakly typed, and attempts to find a way to execute code no matter whether types match or not.

Amalgam takes inspiration from many programming languages, but those with the largest influence are LISP, Scheme, Haskell, Perl, Smalltalk, and Python.  Despite being much like LISP, there is deliberately no macro system.  This is to make sure that code is semantically similar whenever the code is similar, regardless of context.  It makes it easy to find the difference between \verb|x| and \verb|y| as an executable patch, and then apply that patch to \verb|z| as \verb|(call (difference x y) {_ z})|, or semantically mix blocks of code \verb|a| and \verb|b| as \verb|(mix a b)|.  These capabilities are used for working with semistructured data, to find similarity by edit distance and to create generative outputs.  Amalgam is not a purely functional language. It has imperative and object oriented capabilities, but is primarily optimized for functional programming with relatively few opcodes that are functionally flexible based on parameters to maximize flexibility with code mixing and matching.

Genetic programming can create arbitrary code, so there is always a chance that an evolved program ends up consuming more CPU or memory resources than desired, or may attempt to affect the system outside of the interpreter.  For these reasons, there are many strict sandboxing aspects of the language with optional constraints on access, CPU, and memory.  Amalgam also has a rich permissions system, which controls what code is able to do, whether writing to the console or executing system commands.  These security capabilities and performance constraints enable custom code to be both created and executed in the middle of inferences.

\subsection{Nearest Neighbors Query Implementation}

Amalgam has a rich query engine which utilizes data structures common among fast databases, such as bit vectors, high performance hashes, column oriented data structures, and bidirectional mapping between values and entities.  In this section we focus on its nearest neighbor queries, as this algorithm is a novel combination of techniques.  This algorithm computes exact nearest neighbors and is designed to be fast when using \emph{any} combination of features to find data, not just one set of features.  It is not nearly as fast as techniques designed for simpler Euclidean spaces (e.g.,~\cite{johnson2019billion} or table 3 from~\cite{ukey2023survey}), or approximate nearest neighbors (e.g.,~\cite{gao2023high}), but we are unaware of other nearest neighbor techniques optimized for querying arbitrary subsets of features with surprisal based distances.

Algorithm~\ref{alg:find_nearest_cases} describes the outline of the algorithm we have developed to quickly find nearest neighbors, exactly to the level of numerical precision requested.  We note that the query engine is designed to support a variety of distances and surprisals, and so describe the algorithm with regard to distances here.  We refer to partially computed distances as ``partial sums'', which are the sums of each distance term, measured in surprisal, for each feature.  First, the partial sums are accumulated for exact matches of any values for features, as well as similar matches that expand out to 97\% of the uncertainty mass of the deviation, as computed by expanding out to include values less than or equal 5 times the deviation (each doubling of the deviation yields half of the probability mass, and $1 - 0.5^5 \approx 0.97$).  The largest computed distances for each feature are stored to allow calculation of the minimal possible partial distance sum for any unresolved feature set, thereby enabling efficient elimination of cases from consideration.

After the initial data structures have been populated, our algorithm then finds cases that have a large number of features populated with partial sums to use as initial good match cases, found by sampling from the entire set of cases and populating a priority queue with the largest counts.  Those potentially good match candidates have the remaining distance terms computed to obtain fully resolved distances and are used to populate a priority queue that contains the smallest distances.  After these potential good matches have been populated, the remaining cases are evaluated with early rejections if the remainder of uncomputed features would increase the distance beyond the current rejection distance, which is the largest distance of the top cases.  At the end, the most similar cases are selected from the subset of the top cases based on the statistical bandwidth as described in Section~\ref{ssec:influential_cases}.

\begin{algorithm}[H]
    \caption{Find Nearest Cases}
    \label{alg:find_nearest_cases}
    \textbf{Procedure} find\_nearest\_cases()
    \begin{algorithmic}[1]
        \State Precompute nominal and interned distance terms
        \For{$f \in features$}
            \State Accumulate distance terms to partial sums for exact matches
            \If{$f$ is continuous or ordinal}
                \State Accumulate distance terms to partial sums to expanded surprisal quantile (0.97)
            \EndIf
            \State $min\_unpopulated\_distances[f] = GetMaximumDistanceComputed(f)$
        \EndFor

        \State Initialize $min\_distance\_by\_unpopulated\_count$
        \For{$i = 1$ to $min\_distance\_by\_unpopulated\_count.size()$}
            \State $min\_distance\_by\_unpopulated\_count[i] += min\_distance\_by\_unpopulated\_count[i - 1]$
        \EndFor

        \State $potential\_good\_match\_cases = FindCasesWithApproximatelyLargestAccumCount(k\_upper\_bound)$

        \State Initialize priority queue with stochastic tie-breaking: $top\_k(k\_upper\_bound)$

        \For{$case \in potential\_good\_match\_cases$}
            \State $distance = ResolveDistanceToTarget(p)$
            \State $top\_k.Push(p, distance)$
            \State $remaining\_cases.remove(case)$
        \EndFor

        \State Get largest distance in priority queue: $reject\_distance = top\_k.GetLargestDistance()$

        \For{$case \in remaining\_cases$}
            \State $distance = GetPartialSum(case)$
            \State $num\_uncalculated\_features = GetNumUncalculatedFeatures(case)$
            \State $distance += min\_distance\_by\_unpopulated\_count[num\_uncalculated\_features]$

            \If{$distance > reject\_distance$}
                \State \textbf{continue}
            \EndIf

            \For{$feature \in GetUncalulatedFeatures(case)$}
				\State $num\_uncalculated\_features = num\_uncalculated\_features - 1$
                \State $distance -= min\_unpopulated\_distances[num\_uncalculated\_features]$
                \State $distance += ComputeDistanceTerm(case)$

                \If{$distance > reject\_distance$}
                    \State \textbf{break}
                \EndIf
            \EndFor

            \If{$distance \leq reject\_distance$}
                \State $top\_k.PushAndPopLargest(case, distance)$
                \State $reject\_distance = top\_k.GetLargestDistance()$
            \EndIf
        \EndFor

        \State \textbf{return} $TruncateCasesByBandwidth(top\_k)$
    \end{algorithmic}
\end{algorithm}

Though Algorithm~\ref{alg:find_nearest_cases} describes the high level elements of our algorithm, there are many details and enhancements that are beyond the scope of this paper.  For example, the nearest cases of the previous query are stored per thread and their distances are resolved after the potential good matches but before the remainder of the cases.  This accelerates finding nearest neighbors by finding better early matches in the common cases when repeated queries on the same thread refer to similar data.  When the number of unique numeric and string values are low relative to the total number of cases the values themselves are interned, meaning that the values are stored in a table and a zero-based integer index is stored to the table location.  This allows for fast array lookups of distance terms in inner loops of resolving distance terms.  Additionally, many of the set inclusion operations use bit vectors with optimized iteration including population count instructions and other fast bitwise manipulations.  We note that although this algorithm technically has a computational complexity of $O(|\mathcal{F}| \cdot |\mathcal{C}|)$, at least on some data sets, in practice the portion of the algorithm that iterates over every case is optimized and performs as well or better than some tree-based implementations on similar data sizes of less than 100,000 cases.  At larger data volumes, data reduction and hierarchical approaches described in Section~\ref{sec:compression_hierarchy} remain important for scalability.

\section{Empirical Results}

This section includes empirical results across a wide variety of tasks.  Our current goal is to demonstrate that our technique has competence in all of these tasks rather than conclusively exceed state-of-the-art on any or all of them.  Over the years that we have developed these techniques, we have noticed that improving the results on one task often (but not always) improves performance on other tasks.  Future work includes continuing to improve results and push towards maximizing the performance on these tasks while maintaining the mathematical consistency and universal applicability of the core techniques, as well as increasing the number of data sets and benchmarks run.

\subsection{Supervised Learning}
\label{ssec:supervised_results}

\noindent We evaluated Howso's supervised learning performance against XGBoost~\citep{chen2016xgboost} and LightGBM~\citep{shi2022quantized} across the Penn Machine Learning Benchmarks (PMLB) repository~\citep{olson2017pmlb}. The evaluation encompasses 404 datasets spanning both classification and regression tasks with dataset sizes ranging from tens to over one million cases and dimensionalities from 2 to 1,000 features.  We chose the Matthews correlation coefficient (MCC) metric as it is highly sensitive to all classification error types, and chose the Spearman correlation coefficient for regression tasks since it is more robust across distributions than $R^2$.


\begin{table}[!ht]
    \centering
    \begin{tabular}{lcccc}
    \toprule
    {Classification} & Howso (targetless) & Howso (single-target) & XGBoost & LightGBM \\
    \midrule
    Mean MCC $\uparrow$ & 0.599 & 0.650 & \textbf{0.659} & 0.651 \\
    Mean Accuracy $\uparrow$ & 0.807 & \textbf{0.829} & \textbf{0.829} & 0.825 \\
    Mean Precision $\uparrow$ & 0.755 & 0.785 & \textbf{0.793} & 0.772 \\
    Mean Recall $\uparrow$ & 0.737 & 0.769 & \textbf{0.779} & 0.767 \\
    Mean F1 $\uparrow$ & 0.744 & 0.777 & \textbf{0.785} & 0.768 \\
    \bottomrule
    \end{tabular}
    \caption{Classification Results across 145 PMLB Datasets.  Top score bolded.}
    \label{tab:pmlb_classification}
\end{table}

\begin{table}[!ht]
    \centering
    \begin{tabular}{lcccc}
    \toprule
    {Regression} & Howso (targetless) & Howso (single-target) & XGBoost & LightGBM \\
    \midrule
    Spearman $\uparrow$ & 0.918 & \textbf{0.924} & 0.919 & 0.916 \\
    $R^2$ $\uparrow$ & \textbf{0.848} & 0.834 & 0.808 & 0.787 \\
    \bottomrule
    \end{tabular}
    \caption{Regression Results across 258 PMLB Datasets.  Top score bolded.}
    \label{tab:pmlb_regression}
\end{table}

When optimizing for a single target feature as described in Section~\ref{ssec:single_targeted} (as XGBoost and LightGBM do), Howso demonstrates competitive performance on MCC (0.652 vs XGBoost 0.661/LightGBM 0.654) and accuracy (0.830 vs XGBoost 0.830/LightGBM 0.826) as shown in Table~\ref{tab:pmlb_classification}, and exceeds both competitors on regression as measured by Spearman correlation (0.924 vs XGBoost 0.919/LightGBM 0.916) as shown in Table~\ref{tab:pmlb_regression}. Additionally, Howso can operate in a targetless configuration as described in Section~\ref{ssec:deviations} (a capability not available in other supervised learning approaches) where, despite not optimizing to predict a specific target feature, it maintains competitive performance with accuracy of 0.808 on classification (vs XGBoost 0.830/LightGBM 0.826) and regression (0.918 vs XGBoost 0.919/LightGBM 0.916).  Full experimental details are provided in Appendix~\ref{ssec:appendix_supervised_learning}.

Even though PMLB specifies whether each feature is categorical (nominal only) or continuous, for these experiments, we let all estimators determine this on their own based on their own heuristics.  We ran these experiments this way because in our experience, many practitioners would prefer the tools figure this out when possible.\footnote{One example of a typical oddity we have run into is using special extreme values to indicate nominal values.  One commercial data set used in production we encountered used specific dates in January of the year 2655 to indicate particular nominal conditions such as unknown or specific types of indefinite values.}  Running all of these experiments with specified nominal versus continuous features would benefit all estimators, and it may help Howso more than XGBoost or LightGBM.  For example, there are data sets in PMLB that have floating point values of -1.0, 0.0, and 1.0 where the values indicate clearly nominal data and have observed that representing 0 being closer to 1 than -1 in continuous space reduces the quality of Howso's estimations.  Future work involves improving our ``infer\_feature\_attributes'' heuristics to automatically infer these types from the data better.  We also will continue to work on improving the deviation and feature probability computations; preliminary results show that we are able to improve classification tasks with targetless analysis beyond that of the current targeted algorithm, but at the reduction of regression tasks, and vice versa.  We believe it is very plausible that we will be able to entirely merge these flows to include one targetless algorithm of superior quality inference.

\subsection{Feature Importance}

\begin{figure}[!ht]
\centering
\includegraphics[width=0.49\textwidth]{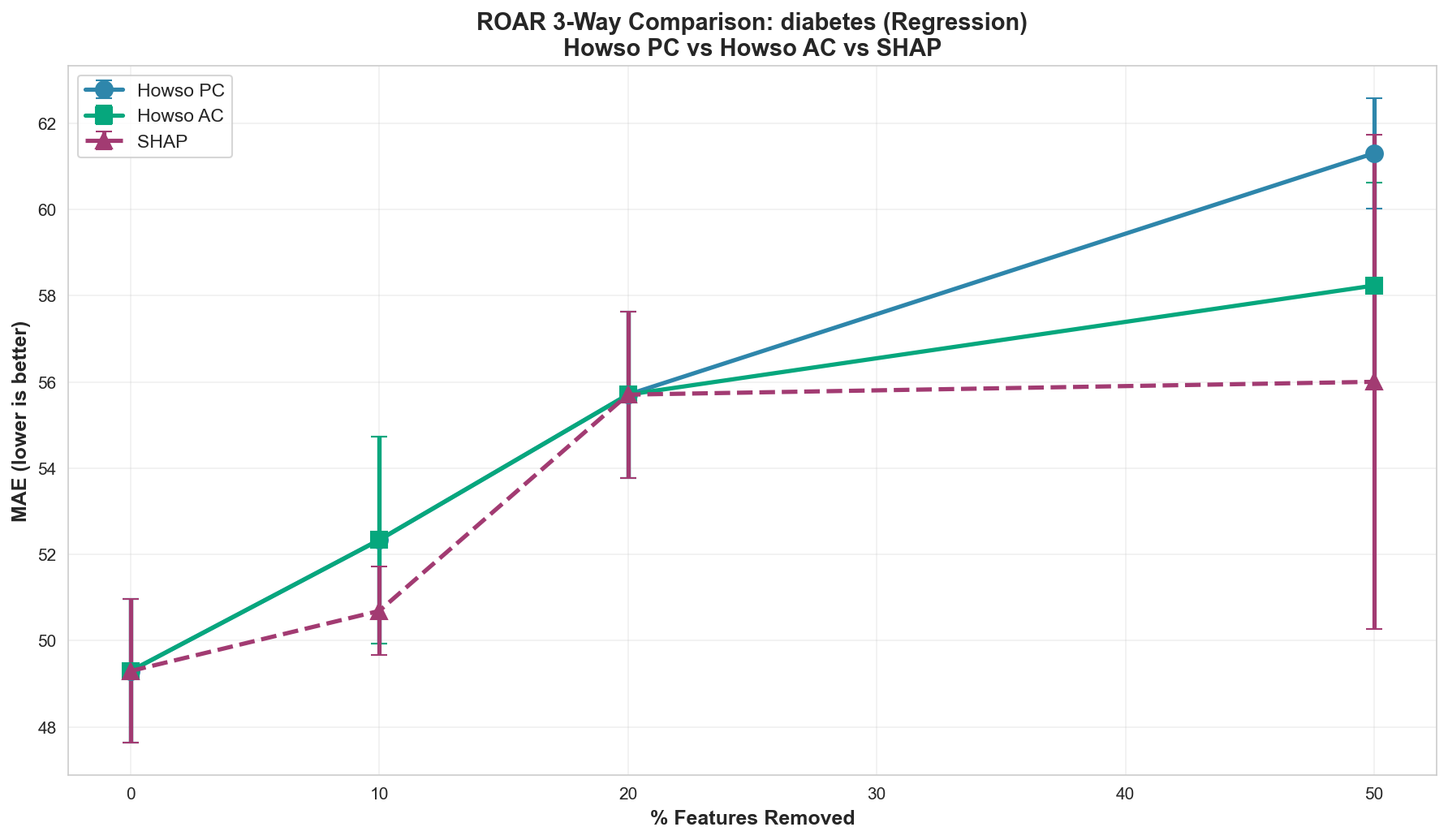}
\includegraphics[width=0.49\textwidth]{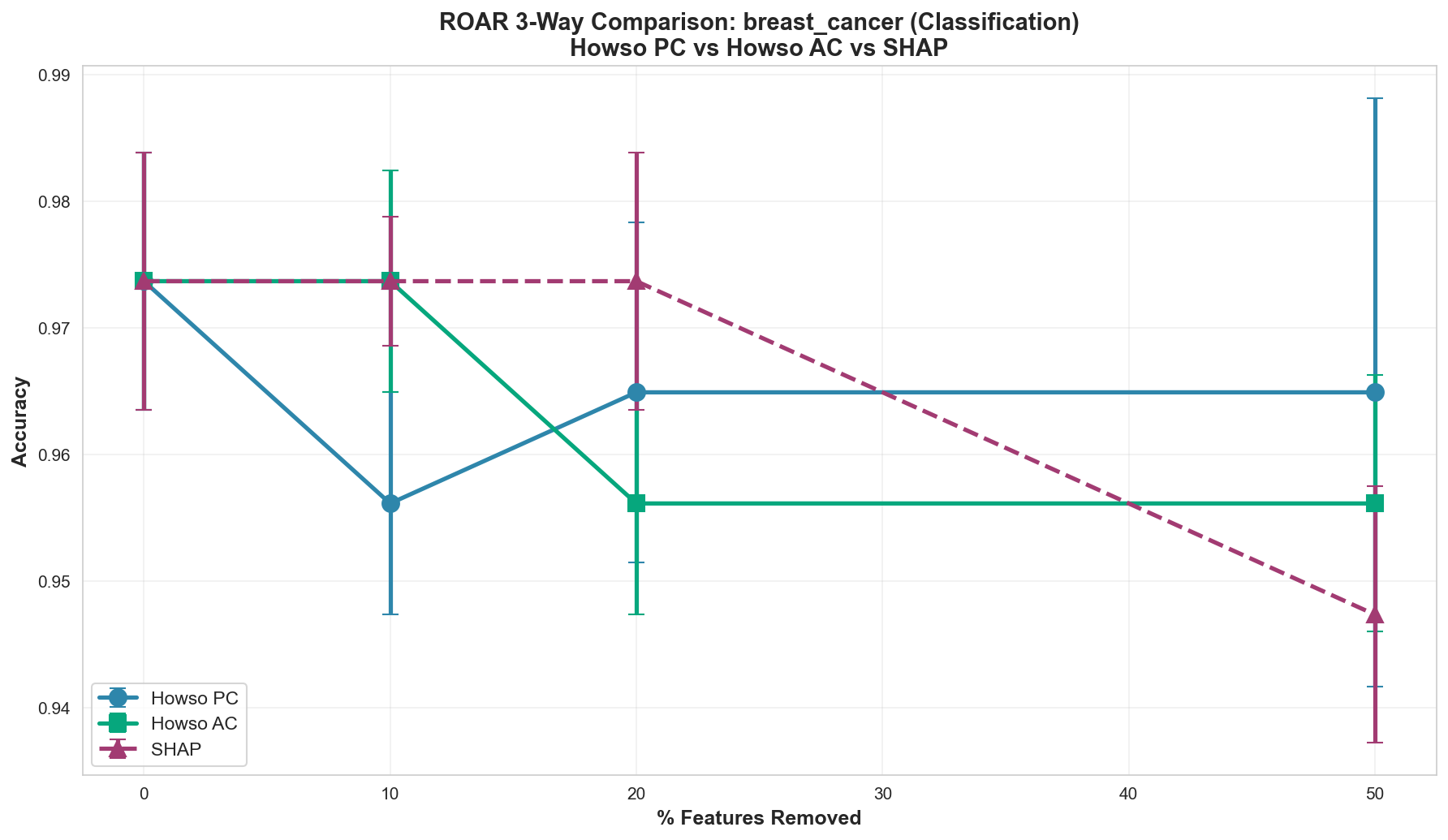}
\caption{ROAR curves comparing Howso Prediction Contributions, Howso Accuracy Contributions, and SHAP on diabetes (MAE) and breast cancer (Accuracy) with XGBoost; 3 seeds, no tuning.}
\label{fig:roar_comparison}
\end{figure}

We applied  ROAR (RemOve And Retrain) validation~\citep{hooker2019benchmark} to compare Howso's Prediction Contributions (PC) as described in Section~\ref{ssec:accuracy_contributions} and Accuracy Contributions (AC) described in Section~\ref{ssec:accuracy_contributions} against XGBoost using SHAP~\citep{lundberg2017unifiedapproachinterpretingmodel}.  On the diabetes data set, Prediction Contributions shows a slightly steeper degradation (+24.4\% at 50\%) than Accuracy Contributions (+18.0\%) or SHAP (+13.6\%), while all methods remain robust on the breast cancer data set (less than 2pp degradation at 50\%). Full experimental details are provided in Appendix~\ref{sec:appendix_roar}.

\begin{figure}[!ht]
\centering
\includegraphics[width=0.7\textwidth]{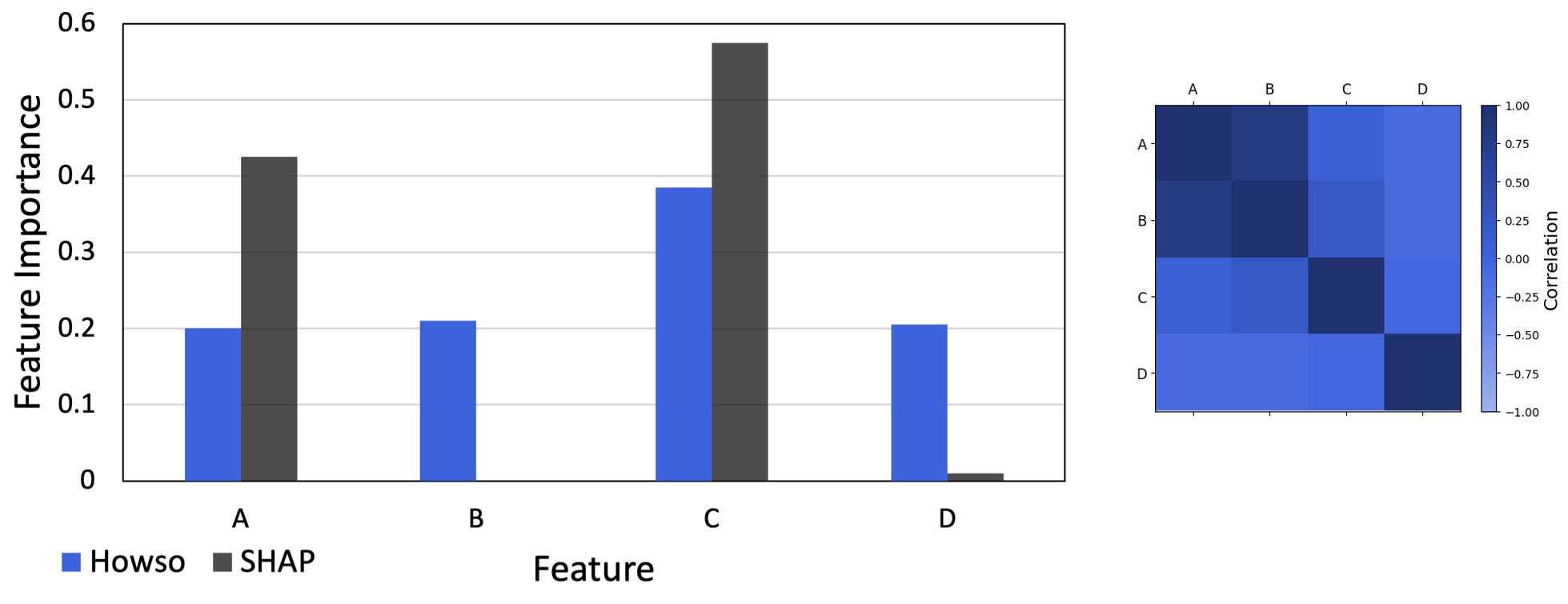}
\caption{An example of inductive bias influencing feature importance, LightGBM SHAP versus Howso prediction contributions on a generated data set.}
\label{fig:shap_inductive_bias}
\end{figure}

Additionally, we created a data set to experiment to test inductive bias.  In this experiment, features A, C, and D were generated from uniform random distributions of different scales, C being the largest and D being the smallest.  Feature B was just a copy of feature A with a very small amount of noise added.  We trained Howso and LightGBM on the data and measured feature importance.  As depicted in Figure~\ref{fig:shap_inductive_bias}, the feature importance of SHAP indicated how the LightGBM model was built and operating, reflecting the model's inductive bias of eliminating feature B.  When we changed the feature order, it indicated that feature B was important and that A had zero importance.  However, when running Howso's Prediction Contributions, we found that it did not incur inductive bias but accurately reflected the importance and usefulness of the features in predicting the target.

\subsection{Anomaly Detection}
\label{ssec:empirical_anomaly_detection}

Since anomalies occur in many different forms, benchmarking a variety of anomaly types yields a more representative and robust assessment of real-world data. ADBench~\citep{han2022adbench} is widely recognized as a versatile benchmark for tabular anomaly detection.  It categorizes anomalies into the four types of local, global, dependency, and cluster.  The benchmark system constructs benchmark datasets by removing prelabeled anomalies from third-party datasets, then regenerates an equal number of synthetic anomalies using one of the four generation methods.  Our benchmark extends ADBench's approach by generating all four anomaly types within each dataset rather than limiting each dataset to a single anomaly category.  Specifically, this benchmark applies ADBench's four generation methods to create one-quarter of the total anomalies from each category, yielding datasets with a greater diversity of anomaly types that better reflects the diversity of real-world anomaly behavior.

\begin{table}[!ht]
\centering
\begin{tabular}{lrr}
\toprule
Metric & PR-AUC & ROC-AUC \\
\midrule
CBLOF~\citep{he2003discovering} & 0.1357 & 0.5851 \\
COF~\citep{pokrajac2008incremental} & 0.2361 & 0.7783 \\
COPOD~\citep{li2020copod} & 0.2847 & 0.8262 \\
ECOD~\citep{li2022ecod} & 0.2895 & 0.8331 \\
FeatureBagging~\citep{lazarevic2005feature} & 0.2818 & 0.8253 \\
HBOS~\citep{goldstein2012histogram} & 0.2826 & 0.8011 \\
IForest~\citep{liu2008isolation} & 0.2865 & 0.8510 \\
kNN~\citep{ramaswamy2000efficient} & 0.3029 & 0.8799 \\
LODA~\citep{pevny2016loda} & 0.2539 & 0.7554 \\
LOF~\citep{breunig2000lof} & 0.2814 & 0.8135 \\
MCD~\citep{hubert2010minimum} & 0.3005 & 0.8780 \\
MOGAAL~\citep{liu2019generative} & 0.0976 & 0.5432 \\
OCSVM~\citep{scholkopf2001estimating} & 0.2932 & 0.8367 \\
PCA~\citep{shyu2003novel} & 0.2821 & 0.8134 \\
SOD~\citep{zuo2023novel} & 0.2849 & 0.8557 \\
SOGAAL~\citep{wang2021new} & 0.0959 & 0.6052 \\
Howso Anomaly Detection & \textbf{0.3126} & \textbf{0.8955} \\
Howso Surprisal Contribution Conviction Only & 0.2906 & 0.8755 \\
Howso Similarity Conviction Only & 0.2520 & 0.7715 \\
\bottomrule
\end{tabular}
\caption{Total Aggregation Anomaly Detection Results.  Top scores are in bold.}
\label{tab:anomaly_results}
\end{table}

Our results, summarized in Table~\ref{tab:anomaly_results}, are based on five independent runs for each of the 41 datasets. Compared to ADBench's aggregated results, which report average model performance across all four anomaly categories, our findings are generally consistent.  The top three models by ROC-AUC in our evaluation, kNN, IForest, and SOD, all achieve above-average performance in ADBench's benchmark results, with kNN and IForest ranking as the second and third best methods, respectively, in ADBench.  Howso Anomaly Detection, described in Section~\ref{ssec:anomaly_detection}, achieved the highest overall performance, with a PR-AUC of 0.3126 and an ROC-AUC of 0.8955.  The Howso Surprisal Contribution configuration, which is conceptually similar to kNN, exhibited comparable performance to kNN itself.  Notably, CBLOF, which was the top-performing model in ADBench's aggregate evaluation, performs substantially worse under these more heterogeneous conditions.  These results emphasize the increased difficulty of real-world datasets containing a mixture of anomaly types and suggests that methods tuned for controlled or single-type anomaly distributions may struggle with the complexity of mixed anomaly datasets.  In contrast, the adaptability of Howso's anomaly detection approach enables stronger generalization and more robust performance across realistic, heterogeneous anomaly distributions.  Full experimental details are provided in Appendix~\ref{ssec:appendix_anomaly}.  Preliminary results suggest that improving the clustering techniques positively impacts Howso's anomaly detection scores.  Making sure that the clustering is robust to different data scales using the surprisal-as-distance metrics by Howso is future work that may improve these results even further.

\subsection{Learning from Demonstration}

The results in this section are more qualitative.  We describe two experiments performed in real-time environments.  In both, our software was running on the same laptop as the real-time environment, in both cases connected via a C\# API.  These experiments were performed earlier in the development of this software.

The first learning from demonstration experiment was performed on a drone simulation platform developed by Hazardous Software Inc.  The sensory inputs to the agent were simulated preprocessed LiDAR data scanning the area in front of the drone, with a minimum filter to reduce the feature set to around 100 samples each representing the closest distance to an object, and that distance was divided by the velocity in that direction.  This meant that the sensory inputs were time-to-impact values.  Additionally, random navigation points were generated within a certain radius, though the navigation points were generated regardless of objects or obstacles, as long as the navigation point did not intersect with an object.  The last set of input data was Xbox 360 controller input.

\begin{figure}[!ht]
\centering
\includegraphics[width=0.5\textwidth]{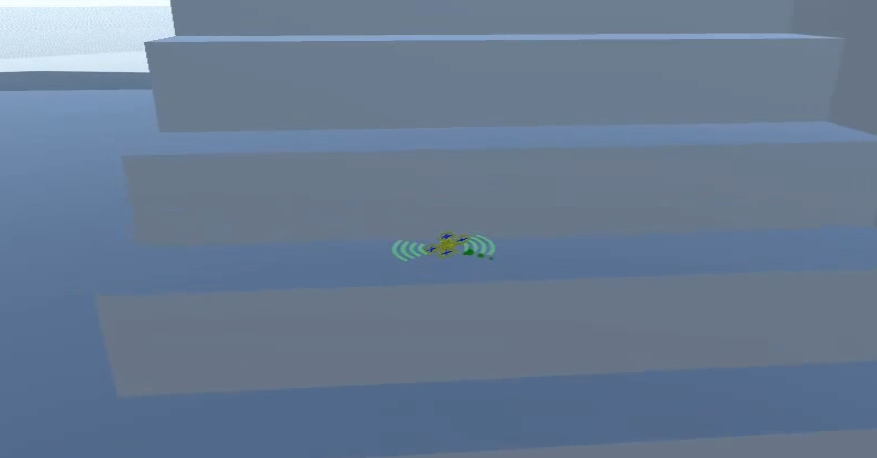}
\caption{Drone approaching a navigation point behind stairs.}
\label{fig:drone_infront_stairs}
\end{figure}

\begin{figure}[!ht]
\centering
\includegraphics[width=0.5\textwidth]{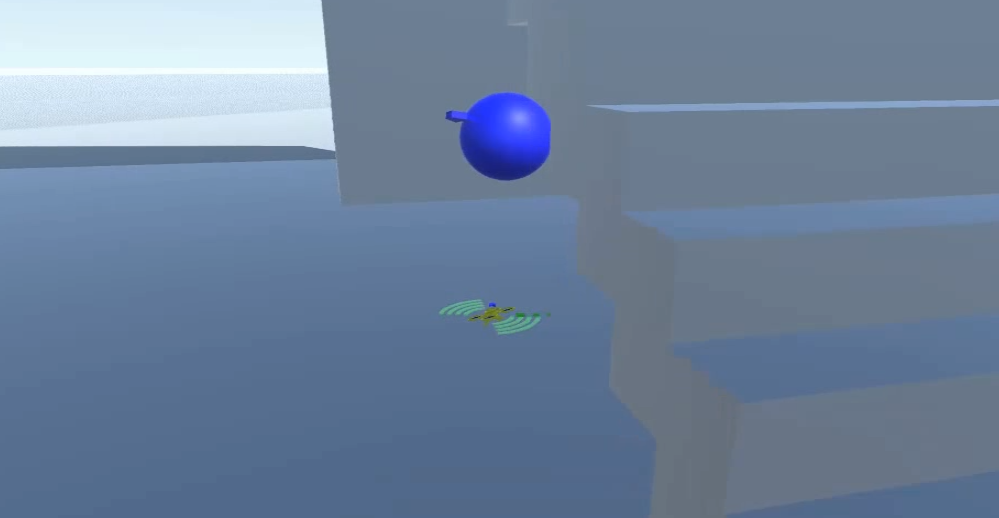}
\caption{Drone navigating around the stairs.}
\label{fig:drone_navigating_stairs}
\end{figure}

As is shown in Figures~\ref{fig:drone_infront_stairs} and~\ref{fig:drone_navigating_stairs}, the drone was shown how to navigate around obstacles, out of windows, around buildings, etc.  Typical sessions comprised of 1 to 2 minutes of initial demonstration followed by an additional 4-8 minutes of switching between the agent flying and the human taking over to make corrections or to redo a maneuver demonstrating how to navigate the obstacle properly.  We also implemented a button where the user could delete the most influential training data point, which proved useful on occasion when the drone was, for example, flying in a fixed direction away from a navigation point.  After this typical 5 to 10 minute session, our software was able to successfully navigate the drone from navigation point to navigation point.  Due to the sensitivity of the drone controls, the system needed to respond within a single digit number of milliseconds some of the time in order to avoid collisions, especially when the navigation points appeared in a challenging relative location.  We also observed that the drone's flying style matched that of the user.  For example, if the user flew cautiously or tended to zig-zag, the drone would perform similar behavior.  The environment was complex enough that we could tell that the behavior was generalizing to some degree, and the drone did not have access to global coordinates.

\begin{figure}[!ht]
\centering
\includegraphics[width=0.5\textwidth]{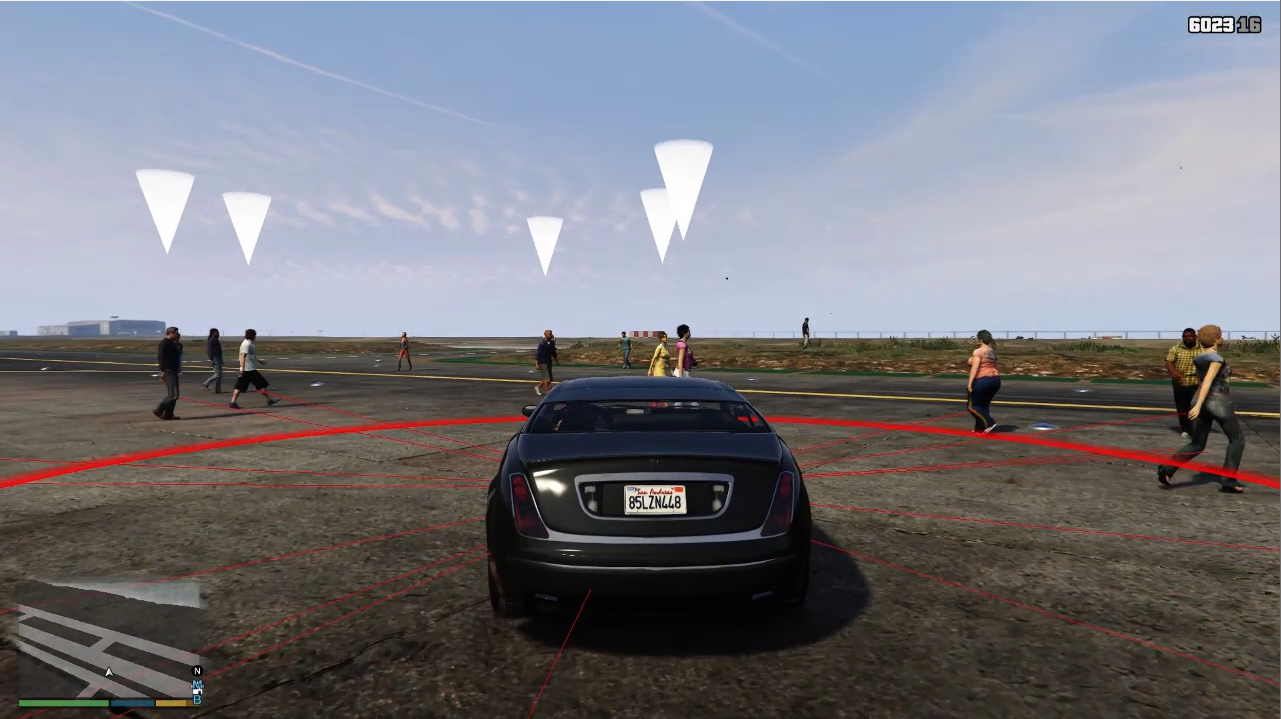}
\caption{Driving while avoiding crowd of pedestrians.}
\label{fig:gta_avoidance}
\end{figure}

\begin{figure}[!ht]
\centering
\includegraphics[width=0.5\textwidth]{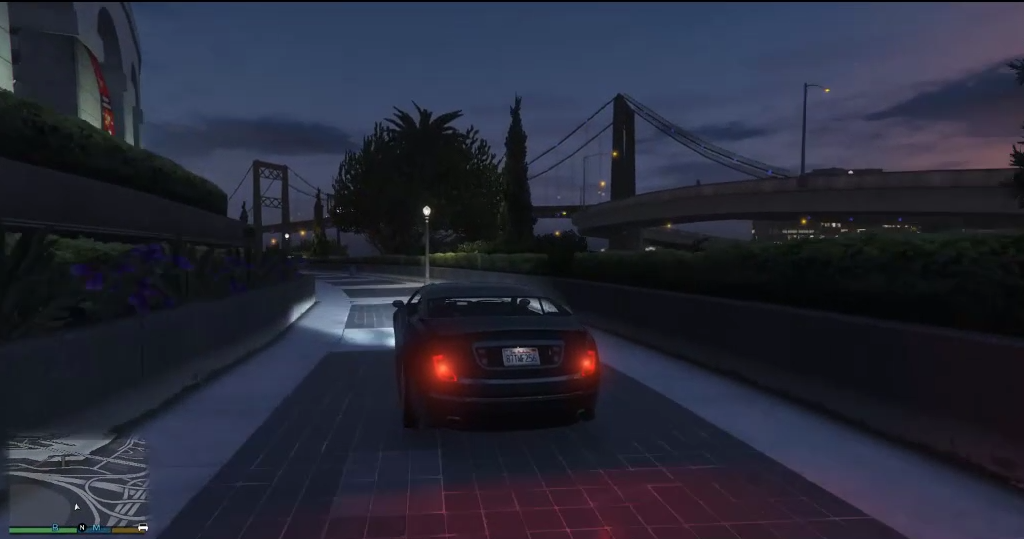}
\caption{Driving while staying within tight bounds.}
\label{fig:gta_path}
\end{figure}

The second learning from demonstration experiment was performed on the video game Grand Theft Auto V (GTAV).  In this setting, we used similar inputs, though the simulated LiDAR was cast out circling the car without variation in height.  In the experiment shown in Figure~\ref{fig:gta_avoidance}, the agent was taught to drive t faster and faster speeds while avoiding pedestrians who were walking about in a large but sparse crowd.  The agent was able to avoid pedestrians successfully at moderately high speeds while avoiding pedestrians in a period of time similar to the drone example of about 5 to 10 minutes.  In the second experiment shown in Figure~\ref{fig:gta_path}, the agent was taught to drive the car around a stadium in a tight lane, an area in GTAV similar to a race track.  About 2 minutes of training data was generally sufficient to train the agent to successfully navigate the oval path at high speed.  This scenario may have relied on memorization to some degree, but the small amount of training data (usually two laps, sometimes one lap was sufficient) indicate that some generalization was likely occurring because the path for any two laps is slightly different given human error and the controls.

\subsection{Reinforcement Learning}
\label{ssec:rl_results}

On CartPole~\citep{brockman2016openaigym}, our implementation meets the environment's built-in benchmark, having an average score of at least 195 over the last 100 episodes, by a median of 273 episodes across 5 runs, with a 100\% solve rate.  For context, a small study reported the REINFORCE algorithm needing around 650 episodes to solve CartPole, which improved to about 400 when employing a model-free approach with model dynamics~\citep{shaikh2019towards}, placing Howso in a comparable or faster sample-efficiency range.

On the Wafer-Thin-Mints game~\citep{bontrager2019superstition} with a 150-round evaluation, the average score ranges from -10 to 9, with 9 being the highest possible.  Our Howso agent achieved a mean score of 6.07 across 20 runs (median 6.59), with a 70\% win rate; among winning runs the mean was 6.87 (median 6.98). In prior literature, A2C agents trained from pixels typically failed to converge on this environment (mean score around $-6$).  Only the best planning-based approach, NovelTS, reported scores above Howso with a mean of 8.75, and MCTS averaged a score of 5.73.

Howso performs competitively in both environments using a conviction of $\rho = 1$, which means it is simply exploring the part of the space with as much uncertainty as is in the data.  We believe this is somewhat profound in that it is able to perform well without an explicit learning rate, in addition to the system being richly interpretable back to the training data.  However, we found that setting $\rho = 3$ improved results for CartPole, so we used that for the performance metrics mentioned.  Wafer-Thin-Mints was measured with conviction of $\rho = 1$.  Full experimental details are provided in Appendix~\ref{ssec:appendix_rl}.

\subsection{Causal Discovery}
\label{ssec:empirical_causal_discovery}

\begin{table}[ht]
\centering
\begin{tabular}{lcccccc}
\hline
\textbf{Dataset} & \textbf{Measure} & \textbf{GES} & \textbf{Howso} & \textbf{LiNGAM} & \textbf{PC} \\
\hline
GCM Microservice Architecture & F1 & 0.750 & 0.805 & 0.867 & \textbf{0.963} \\
 & Jaccard & 0.600 & 0.678 & 0.765 & \textbf{0.929} \\
GCM Online Shop & F1 & 0.552 & \textbf{0.665} & 0.581 & 0.556 \\
 & Jaccard & 0.381 & \textbf{0.498} & 0.409 & 0.385 \\
Sachs & F1 & 0.549 & \textbf{0.641} & 0.635 & 0.533 \\
 & Jaccard & 0.378 & \textbf{0.472} & 0.465 & 0.364 \\
GCM Supply Chain & F1 & 0.667 & 0.390 & 0.615 & \textbf{0.889} \\
 & Jaccard & 0.500 & 0.244 & 0.444 & \textbf{0.800} \\
Fluid Dynamics & F1 & 0.571 & 0.697 & 0.667 & \textbf{0.714} \\
 & Jaccard & 0.400 & 0.536 & 0.500 & \textbf{0.556} \\
\midrule
\textbf{average} & F1 & 0.618 & 0.639 & 0.673 & \textbf{0.731} \\
  & Jaccard & 0.452 & 0.486 & 0.517 & \textbf{0.606} \\
\hline
\end{tabular}
\caption{Causal discovery results scored according to undirected relationships.}
\label{tab:undirected_causal_results}
\end{table}

\begin{table}[ht]
\centering
\begin{tabular}{lcccccc}
\hline
\textbf{Dataset} & \textbf{Measure} & \textbf{GES} & \textbf{Howso} & \textbf{LiNGAM} & \textbf{PC} \\
\hline
GCM Microservice Architecture & F1 & 0.667 & 0.758 & 0.828 & \textbf{0.923} \\
 & Jaccard & 0.500 & 0.615 & 0.706 & \textbf{0.857} \\
GCM Online Shop & F1 & 0.240 & 0.435 & 0.308 & \textbf{0.471} \\
 & Jaccard & 0.136 & 0.279 & 0.182 & \textbf{0.308} \\
Sachs & F1 & 0.267 & 0.434 & 0.436 & \textbf{0.455} \\
 & Jaccard & 0.154 & 0.279 & 0.279 & \textbf{0.294} \\
GCM Supply Chain & F1 & 0.545 & 0.250 & 0.500 & \textbf{0.889} \\
 & Jaccard & 0.375 & 0.144 & 0.333 & \textbf{0.800} \\
Fluid Dynamics & F1 & 0.143 & \textbf{0.636} & 0.571 & 0.462 \\
 & Jaccard & 0.077 & \textbf{0.467} & 0.400 & 0.300 \\
\midrule
\textbf{average} & F1 & 0.372 & 0.503 & 0.529 & \textbf{0.640} \\
  & Jaccard & 0.248 & 0.357 & 0.380 & \textbf{0.512} \\
\hline
\end{tabular}
\caption{Causal discovery results scored according to directed relationships.}
\label{tab:directed_causal_results}
\end{table}

We evaluated our causal discovery as described in Section~\ref{ssec:guided_feature_discovery} against the data sets of Graphical Causal Model (GCM) Microservice Architecture\footnote{\url{https://www.pywhy.org/dowhy/v0.11.1/example_notebooks/gcm_rca_microservice_architecture.html}}, GCM Online Shop\footnote{\url{https://www.pywhy.org/dowhy/v0.11.1/example_notebooks/gcm_online_shop.html}}, microbiology protein signaling network (Sachs)~\citep{sachs2005causal}, GCM Supply Chain\footnote{\url{https://www.pywhy.org/dowhy/v0.11.1/example_notebooks/gcm_supply_chain_dist_change.html}}, and data from a fluid dynamics simulation which has an inflow, outflow, control valve, level set point, and tank.  We evaluated our results versus Greedy Equivalent Search (GES)~\citep{chickering2020statistically}, Peter-Clark Algorithm (PC)~\citep{spirtes2000causation}, and Linear Non-Gaussian Acyclic Models (LiNGAM)~\citep{shimizu2014lingam}.  Figure~\ref{tab:undirected_causal_results} shows the results for undirected causal discovery and Figure~\ref{tab:directed_causal_results} shows the results for directed causal discovery.

PC is generally the strongest performer, but our system was strongest performer of the algorithms whenever LiNGAM was better than PC.  These data sets were selected to give a cross section of real-world problems and simulations with known ground truths and GCM implementations.  Given that our algorithm looks for signal asymmetries in uncertainty, it is not surprising that systems that look for noise direction could outperform ours in some situations, as we see in the empirical results.  Strengthening our methods on these types of problems, as well as further investigating when to use missing information ratios versus entropy asymmetries to decide directionality are avenues for future work.  Additional future work includes further causal benchmarking against more data sets~\citep{wang2024causalbench} and employing richer evaluation metrics~\citep{shimoni2018benchmarking}.

\begin{figure}[!ht]
\centering
\includegraphics[width=0.7\textwidth]{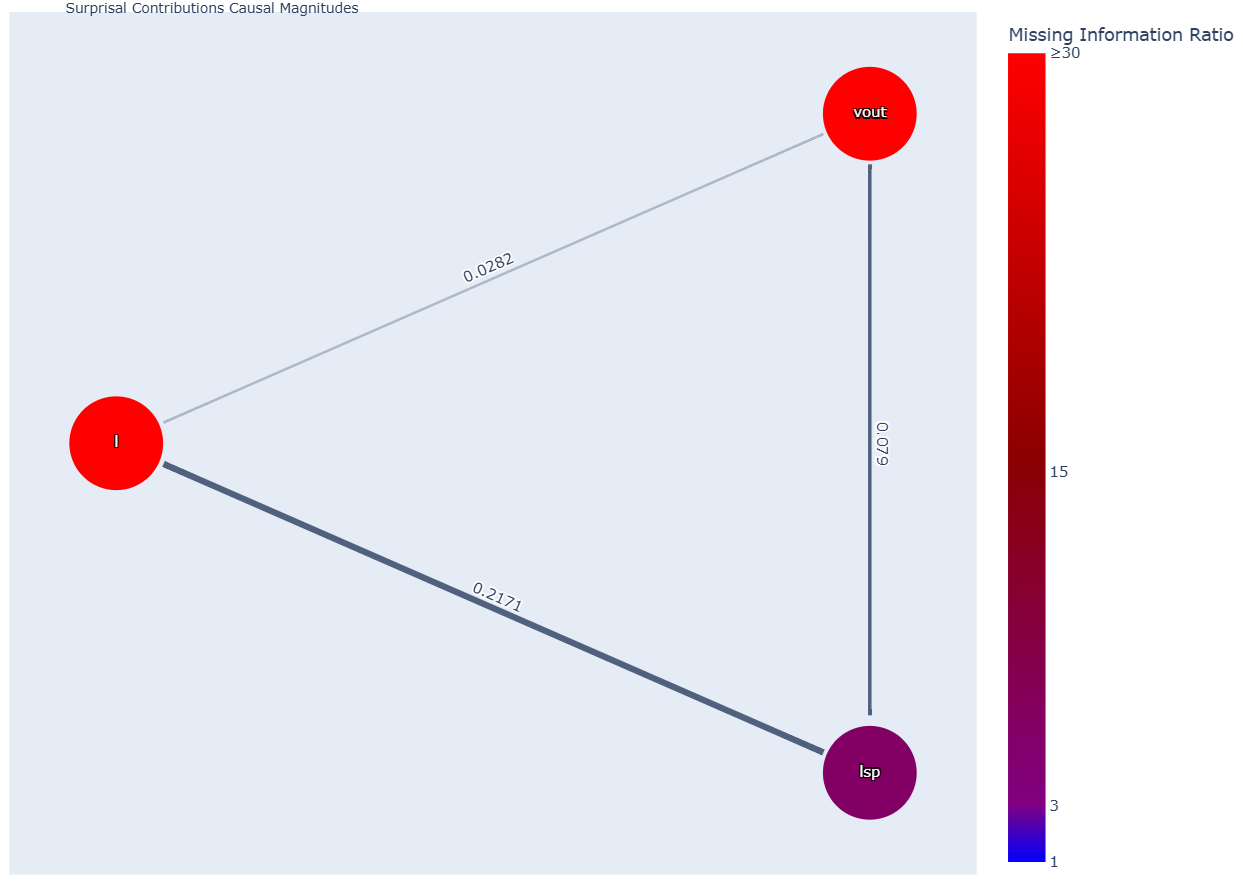}
\caption{A causal graph discovered for a fluid dynamics simulation using a small subset of features.}
\label{fig:missing_feature_discovery}
\end{figure}

We have found that our missing feature discovery and our certainty of edges is also useful for many data sets.  We are unaware of any current benchmarks to accurately assess these capabilities, and few causal discovery techniques support this.  Figure~\ref{fig:missing_feature_discovery} depicts an example of what the plots look like.  The edges are undirected because the missing information ratio is high, meaning that our system is not confident in the direction of the edges.  Further, the strongest relationship is between $l$ which is the level of water in the tank and $lsp$ which is the level set point that controls the valve.  The level is largely affected by fluid coming into the tank, and the volume of fluid going out $vout$ is also affected by this.  In this example, our system correctly assesses that there are additional causal features that are related and important but not included in the system, such as the volume coming in, $vin$, and the valve control position $c$, not included in this graph or in the training data for this test.  Qualitatively we have tested this system against commercial data sets and have found it to be very useful, but determining and measuring benchmarks for these capabilities remains a task for future work.  Anecdotally, our system seems to give more stable causal graphs than we have experienced with other causal techniques, a notable problem in causal discovery~\citep{hulse2025shaky}.  Characterizing this stability and determining whether and when our system is more stable also remains a task for future work.

\subsection{Data Synthesis}
\label{ssec:synth_results}

We evaluated the quality of synthetically generated data using the techniques described in Section~\ref{ssec:synth_data} by training models on 9 datasets from PMLB (4 regression, 5 classification), generating synthetic versions, and comparing model performance trained on original versus synthetic data across 10-fold cross-validation using LightGBM regressors/classifiers.  The synthetic data was generated with desired conviction of $5$ and new case generation enabled.  Results are shown in Table~\ref{tab:synth_summary} and we found that there was generally very little loss of predictable signal in the synthesized data.  Full experimental details are provided in Appendix~\ref{ssec:appendix_synth}.  Previous work has found that an earlier version of our synthetic data preserved the most utility of all algorithms compared in the study~\citep{ling2024trading}.


\begin{table}[!ht]
    \centering
    \begin{tabular}{lcc}
    \toprule
    Metric & Original & Generated \\
    \midrule
    Accuracy & 0.959 & 0.925 \\
    F1-Score & 0.959 & 0.927 \\
    MCC & 0.923 & 0.868 \\
    Spearman & 0.921 & 0.908 \\
    \bottomrule
    \end{tabular}
    \caption{Synthetic Data Generation Results: Overall Performance Comparison. Aggregated across 5 classification and 4 regression datasets.}
    \label{tab:synth_summary}
\end{table}

\subsection{Data Reduction and Ablation}

We evaluated the data reduction and ablation techniques described in Section~\ref{ssec:case_weights} across multiple benchmark datasets.  These data sets were selected from the PMLB data sets and are generally more challenging to perform data reduction on than most data sets we encounter in industry which tends to have significantly more redundancy.  The ablation process retained cases that contributed significant information while redistributing the probability mass of redundant cases to their influential neighbors.

\begin{table}[ht]
\centering
\begin{tabular}{lrrrr}
\hline
Dataset & Reduction (\%) & Howso Ablated & Howso With Subsample$^*$ & Howso With All Data\\
\hline
connect\_4 & 63.12 & 0.4968 & 0.4765 & 0.5696 \\
adult & 66.67 & 0.4907 & 0.5085 & 0.5022 \\
sleep & 87.89 & 0.5731 & 0.6441 & 0.6353 \\
fars & 51.75 & 0.6918 & 0.6860 & 0.7082 \\
krkopt & 24.91 & 0.6094 & 0.6248 & 0.6580 \\
letter & 27.74 & 0.9420 & 0.9524 & 0.9548 \\
\hline
\multicolumn{5}{l}{$^*$Randomly subsampled to the same size as ablated.} \\
\end{tabular}
\caption{Classification results comparing ablated Howso to Howso with all data using Matthews Correlation Coefficient (MCC).}
\label{tab:ablation_classification}
\end{table}

\begin{table}[ht]
\centering
\begin{tabular}{lrrrr}
\hline
Dataset & Reduction (\%) & Howso Ablated & Howso With Subsample$^*$ & Howso With All Data\\
\hline
1201\_BNG\_breastTumor & 53.29 & 0.3348 & 0.2996 & 0.3539 \\
feynman\_test\_1 & 35.95 & 0.9915 & 0.9889 & 0.9948 \\
feynman\_test\_20 & 23.22 & 0.9866 & 0.9780 & 0.9855 \\
215\_2dplanes & 61.97 & 0.9653 & 0.9699 & 0.9703 \\
344\_mv & 47.04 & 0.9973 & 0.9986 & 0.9987 \\
564\_fried & 62.52 & 0.9538 & 0.9566 & 0.9655 \\
1193\_BNG\_lowbwt & 59.23 & 0.7443 & 0.7440 & 0.7484 \\
574\_house\_16H & 21.84 & 0.7920 & 0.7909 & 0.7953 \\
\hline
\multicolumn{5}{l}{$^*$Randomly subsampled to the same size as ablated.} \\
\end{tabular}
\caption{Regression results comparing ablated Howso to Howso with all data using Spearman correlation coefficient.}
\label{tab:ablation_regression}
\end{table}

Tables~\ref{tab:ablation_classification} and~\ref{tab:ablation_regression} show the results for classification and regression tasks respectively.  We chose the Matthews correlation coefficient (MCC) metric as it is highly sensitive to all classification error types, and chose the Spearman correlation coefficient for regression tasks since it is more robust across distributions than $R^2$.

Across the evaluated datasets, case ablation achieved substantial data reduction while maintaining performance comparable to the full dataset, though the results are stronger over random subsampling for regression tasks than classification tasks.  The ablated results demonstrate that appropriate data is retained for accurate predictions.  Even though in some cases the subsampling performed as well as or slightly better than ablation, the data ablation process was able to find a subsample size that was still able to maintain quality.  We emphasize that this method of data ablation and data reduction does not focus on any individual target, but rather is designed to address general representative data selection.  Future work includes improving the quality of these results and deeper data reduction.

\subsection{Adversarial Robustness}

\begin{table}[!ht]
\centering
\begin{tabular}{lccc}
\toprule
Model & Clean Acc & K=3 Adv Acc & K=1 Adv Acc \\
\midrule
Howso & 0.819 & 0.596 ($-0.223$) & 0.737 ($-0.082$) \\
Logistic Regression & 0.795 & 0.684 ($-0.111$) & 0.742 ($-0.053$) \\
Decision Tree & 0.783 & 0.564 ($-0.220$) & 0.641 ($-0.142$) \\
kNN & 0.796 & 0.436 ($-0.360$) & 0.757 ($-0.039$) \\
LightGBM & 0.835 & 0.619 ($-0.216$) & 0.717 ($-0.119$) \\
XGBoost & 0.834 & 0.633 ($-0.201$) & 0.691 ($-0.143$) \\
\bottomrule
\end{tabular}
\caption{Mean accuracy under adversarial attack (27 datasets)}
\label{tab:robustness_main}
\end{table}

\begin{figure}[!ht]
\centering
\includegraphics[width=0.49\textwidth]{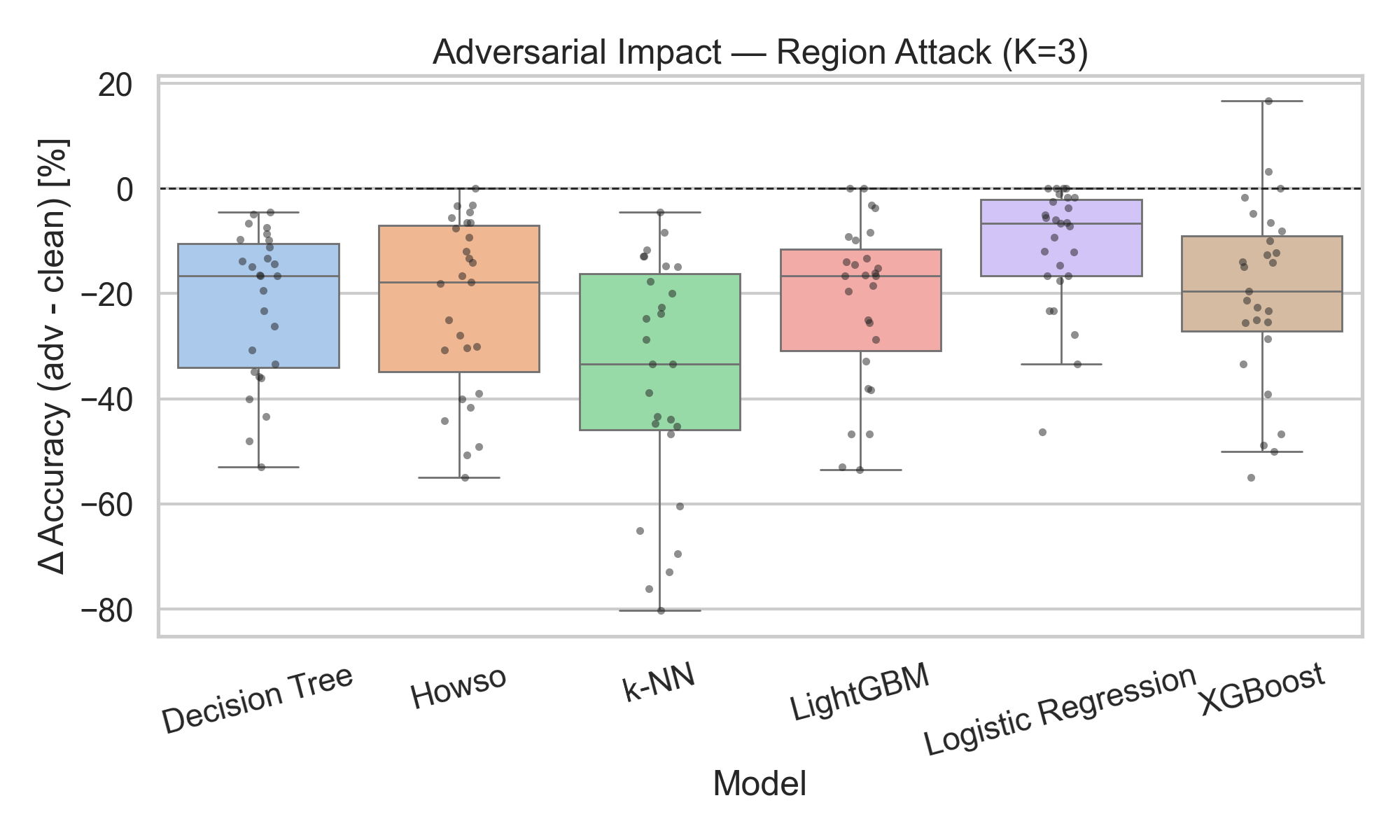}
\includegraphics[width=0.49\textwidth]{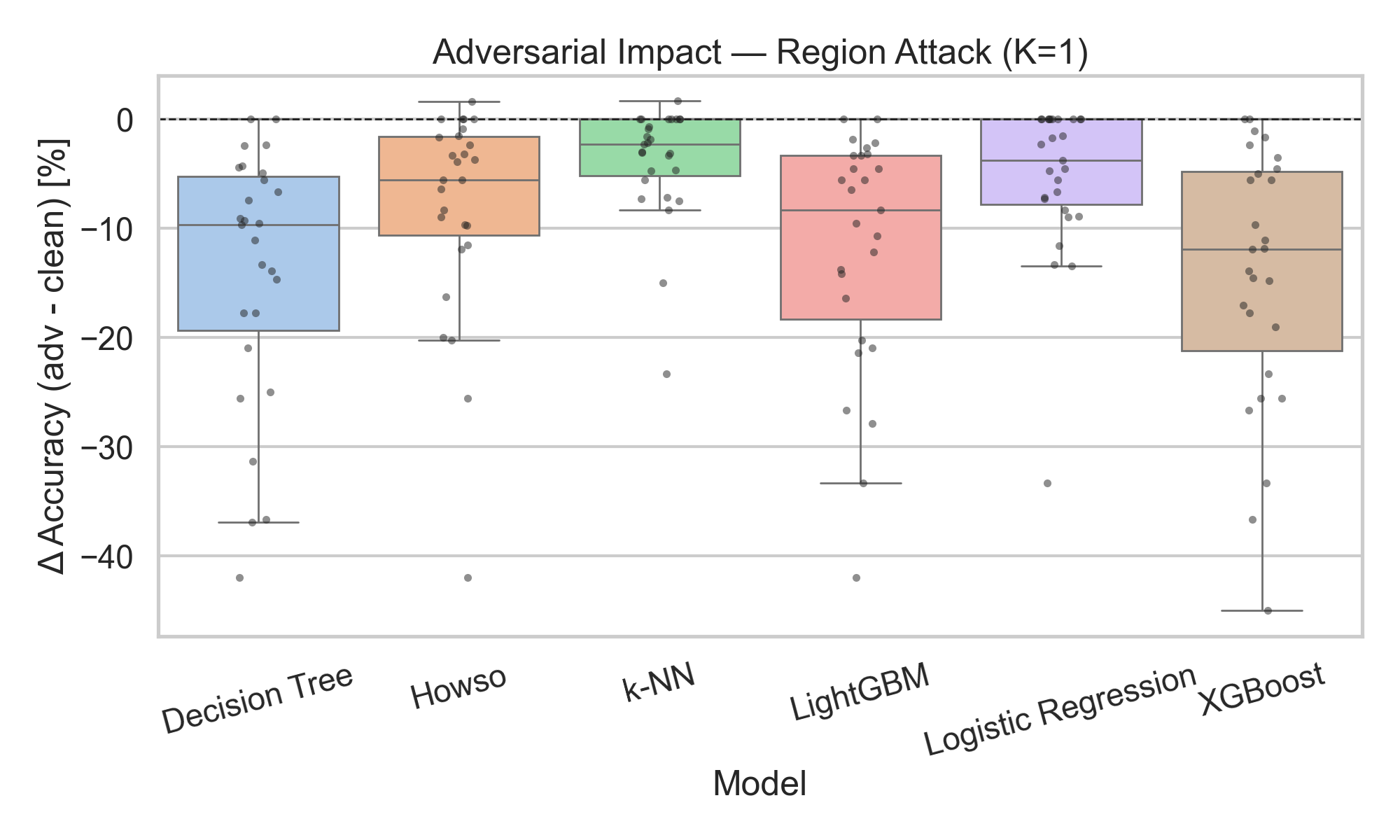}
\caption{Distribution of accuracy degradation ($\Delta$ Accuracy) by model and attack type across 27 datasets. Box plots show median, IQR, and individual dataset results.}
\label{fig:robustness_boxplots}
\end{figure}

We evaluated Howso's robustness to adversarial attacks using Region-Based Adversarial (RBA) attacks~\citep{yang2020robustness} across 27 tabular classification datasets.  The RBA attack constrains perturbations to the convex region defined by the K most similar training cases, generating adversarial examples that remain within locally plausible feature space.  Two attack configurations were tested: K=1 (single-neighbor, tighter constraint) and K=3 (three-neighbor, looser constraint).  Howso was compared against five baselines: k-Nearest Neighbors, Decision Tree, Logistic Regression, LightGBM, and XGBoost.  The results are shown in Table~\ref{tab:robustness_main} and Figure~\ref{fig:robustness_boxplots}.

The K=3 attack was more challenging to all models, with instance-based learners showing greater vulnerability than the parametric logistic regression model.  Under the tighter K=1 constraint, all models demonstrated improved robustness.  Howso's was less vulnerable to attacks with K=3 than the other instance-based learner, kNN, and Howso was less vulnerable to attacks with K=1 than everything except the two simpler algorithms, Logistic Regression and kNN.  Detailed per-dataset results and analysis are provided in Appendix~\ref{sec:appendix_robustness}.

\subsection{Compute Performance}

\begin{figure}[!ht]
\centering
\includegraphics[width=0.9\textwidth]{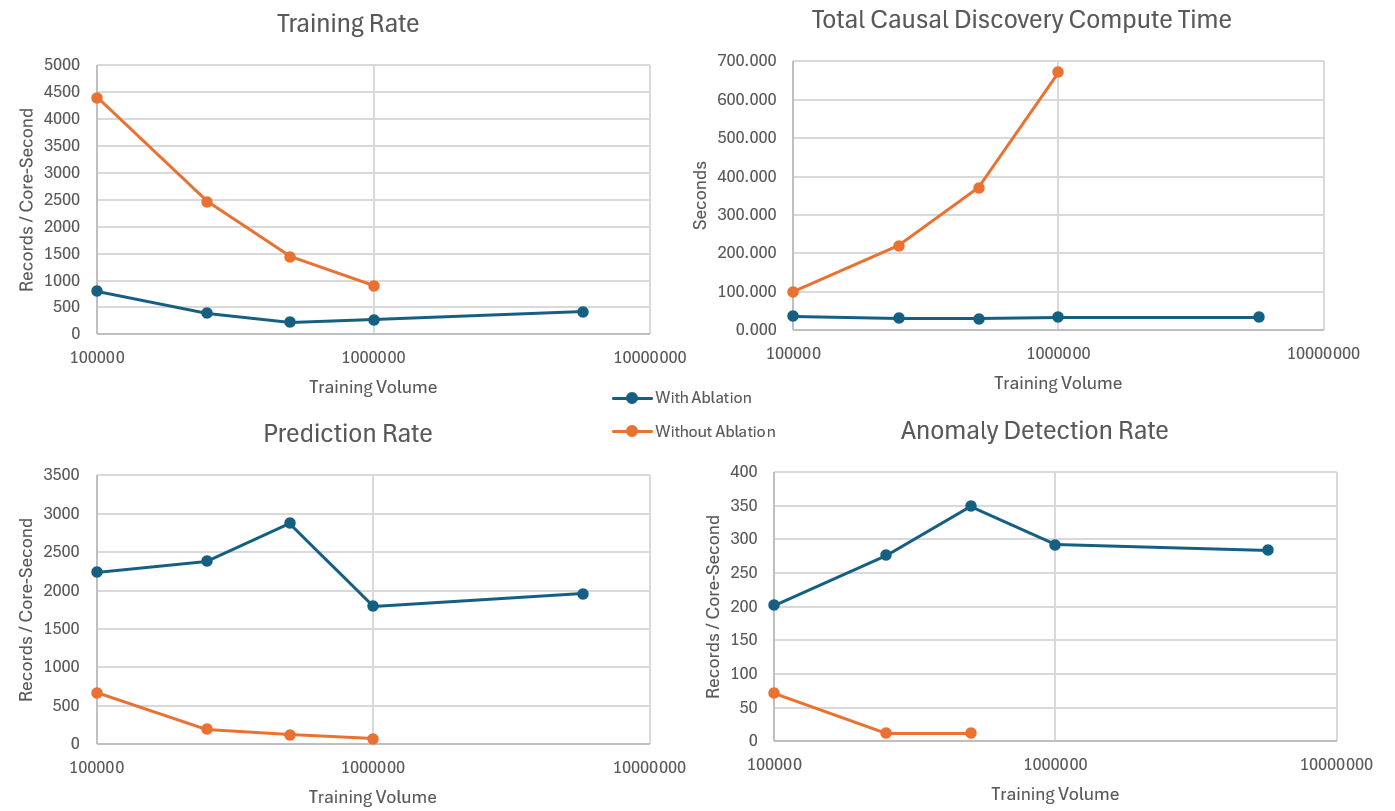}
\caption{Four measures of performance on different volumes of the NYC arrest dataset.}
\label{fig:performance}
\end{figure}

Figure~\ref{fig:performance} depicts some performance metrics with different amounts of training data using the New York City historical arrest dataset,\footnote{\url{https://data.cityofnewyork.us/Public-Safety/NYPD-Arrests-Data-Historic-/8h9b-rp9u/about_data}} which is approximately 6 million records by 18 features, and measured on a modern 16 core processor.  The environment and performance metrics include the overhead of loading the data via Python and Python scripts driving all of the interactions, serializing and deserializing the data back and forth via JSON.  The orange lines indicate performance without data ablation and data reduction, and the blue lines indicate with.  The training time graph includes the relevant analyze, ablation, and data reduction processes in the per-record overhead.  Without ablation, the performance degrades, but with ablation, the performance starts notably lower but increases as the scale goes up.  With data ablation, causal discovery, anomaly detection, and prediction rates stay steady independent of original data volume, as long as additional data does not add richness to the complexity of the model.  In other words, more complex data may require a larger ceiling of data kept to maintain inference quality.

Future work includes advancing the use of hierarchical data sharding as described in Section~\ref{ssec:hierarchical_sharding} which may logarithmically increase inference times with data.  Further, as of the time of this writing, we have focused optimization efforts on inference, and are now working toward a number of optimizations that should continue to improve performance for training times.  

\section{Future Work: Language, Images, and Creativity}

There are many avenues for future work which would enhance usability and applicability, including improving the quality of inference, causal discovery, anomaly detection, etc., as well as adding support for more data types, and adding more types of dynamic solving of complex expressions (such as combinatorial optimization).  In addition to those tactical areas of future work, here we highlight future work to move beyond structured, semistructured, and time series data. 

\subsection{Text}

\begin{table}[ht]
\centering
\begin{tabularx}{\textwidth}{|>{\hsize=0.2\hsize}X|>{\hsize=0.4\hsize}X|>{\hsize=0.4\hsize}X|}
\hline
Prompt & Completed Sentence & Notes \\
\hline
The Solar System has & The Solar System has been relatively well studied, initially through telescopes and then later by spacecraft & The source was the article titled Astronomy, verbatim, except stopping before the period at the end. \\
\hline
The Solar System has & The Solar System has been fairly stable for billions of years, it is technically chaotic, and may be considered a planet. & The source was the article Solar\_System, changed at the end, gramatically correct, topically correct, but factually incorrect. \\
\hline
Jupiter is the & Jupiter is the most important deity in the Guatemalan Highlands at this time was Q'umarkaj, the capital of the aggressive K'iche' kingdom. & The source was a mixture of articles on the Maya people, and the sentence has strong factual basis, though the grammar is a little off.  The K'iche' people and the Maya people in general tracked Jupiter's orbit and incorporated it into their calendars.  The K'iche' kingdom was described in the Wikipedia texts as having subdued other Mayan peoples. \\
\hline
Jupiter is the & Jupiter is the largest of the Western Roman Empire, and the the king of the planet's connection with Easter & Example of a nonsensical sentence that is grammatically correct except for the missing period. \\
\hline
\end{tabularx}
\caption{Typical sentence completions based on approximately 1 million words trained from Wikipedia.}
\label{tab:text_examples}
\end{table}

A major future direction for this technology is to apply all of the relevant capabilities discussed in this work to unstructured text.  We have begun implementing some early experiments in this domain which have shown promise.  In general, we leverage the time series capabilities, and the number of time series lags is effectively the context window.  Using basic tokenization around word and punctuation boundaries on small data sets, we find that it is able to effectively memorize text and generalize enough to typically stay topical and produce mostly reasonable grammar with only a few mistakes per sentence.  Table~\ref{tab:text_examples} shows the results of completing sentences given an initial prompt given a training data set of approximately 1 million words from Wikipedia covering a range of science and history topics.  We note that when to terminate a sentence is learned entirely from the training data.

Early experiments with sentiment classification look promising as well.  On the MTEB semantic classification benchmark (tweet sentiment extraction)~\citep{enevoldsen2025mmteb} treating the text as time series but the sentiment as a stationary feature as described in Section~\ref{ssec:time_series}, our system's accuracy ranges is currently in the vicinity of 54\%.  Compared to the public leaderboards,\footnote{\url{https://huggingface.co/spaces/mteb/leaderboard}} this is in the upper portion of the lower half of all results.  However, we are obtaining these results with zero pretraining data and virtually every other technique on the leaderboard has significant pretraining.  There is even a column on the leaderboard indicating how many millions of hyperparameters the models have.  Future work will include pretraining to boost this score.

Other capabilities interact in interesting ways with text.  For example, sparse deviation matrices described in Section~\ref{ssec:deviations} act essentially as a confusion matrix between interchangeability of words.  Data ablation on time series data, described in Section~\ref{ssec:time_series}, may offer a human-readable alternative to embeddings where unnecessary words are cut out.  It is hard to say how the performance and effective compression ratio of these embedding alternatives will compare to embeddings.  Simple concepts could be reflected in fewer words than a full vector, however constraints related to grammar may make it harder to remove chunks to improve the compression.  Future work will investigate the efficacy of such an implementation.

Another direction is to find ways to vastly improve the time series context length beyond what is currently maximally feasible, which currently is in the realm of 1000-2000 tokens on a single machine.  We believe this can be accomplished by performing ablation and data reduction in the context, specifically to dynamically compress the earliest data in the context the most.  Using surprisal to mark areas for compression has been shown to vastly improve context windows in LLMs~\citep{fountas2025humaninspired}, and we believe similar techniques of improving LLM context lengths will have analogues with this system.

Longer term, the hierarchical techniques described in Section~\ref{sec:compression_hierarchy} will be required to improve performance.  The dynamic derivation of code during inference, as described in Section~\ref{ssec:derived_and_dependent_feat}, may offer paths for tighter, closer tool integration to match rules and derive results for arithmetic, logic, and potentially optimization and symbolic reasoning.  Because these mechanisms themselves do not need to be learned, training can be vastly reduced to only include training data to indicate when these tools should be employed.  This could potentially significantly reduce training data as compared to LLMs require in order to learn these capabilities due to the vast reduction in dimensionality.

\subsection{Images}

\begin{figure}[!ht]
\centering
\includegraphics[width=0.1\textwidth]{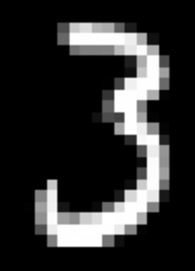}
\caption{An example of a number generated using the MNIST handwritten digits data set.}
\label{fig:generated_3}
\end{figure}

Another future direction for the methods described here are to work on image data.  Though the scores for the MNIST hand written data set are in the 91\% to 92\% range, with  a little manual tuning we have achieved results in the 96\% to 97\% range, and we are hopeful that additional work toward improving deviation calculations and feature influence probabilities may be able to make those results achievable automatically.  Additionally, preliminary experiments with adversarial attacks on MNIST have shown Howso to be more robust and maintain higher accuracy than any technique other than basic kNN.  And we have had some success generating images from the MNIST data set.  Figure~\ref{fig:generated_3} depicts an example of a generated image using Howso that is slightly better than average but not atypical.

Furthering the work of hierarchical implementation as described in Section~\ref{sec:compression_hierarchy} will be essential.  That capability may help with some of the general problem for some tasks, but including additional features by employing understandable and possibly reversible convolutional transformations may be required to achieve high quality results.  KNN has had a long history of being useful in images, even at scale with billions of images~\citep{johnson2019billion}, and we hope that our work here can help advance these applications.  Treating images as time series data may also provide a path for eventually supporting video.

\subsection{Creativity}

Creativity is often defined as being novel compared to what is known and having value~\citep{runco2012standard}, and though there are psychological tests to measure creativity in human subjects via proxy tasks~\citep{kim2006can}, there has been little research into measuring creativity from an information theory perspective (rare exceptions do exist, such as that by~\cite{coffman1992measuring}).  As our system can dynamically balance exploration versus exploitation in online learning without specifying a learning or exploration rate (as discussed in Section~\ref{ssec:rl_results}), a natural extension for future work is to apply our techniques towards measuring and governing creativity relative to a base data set of knowledge.

Suppose we have a particular configuration or data record, $x_F$, and we would like to measure its creativity against a set of known data $X_F$.  The relative novelty of $x_F$ to some other configuration $x^{\prime}_F$ can be expressed as $I(x_F | x^{\prime}_F)$.  Further suppose we have a set of $v \in V$ functions that measure value of a configuration, as $v(x_F)$.  Using the geometric mean to combine measurements of achieving different goals has been shown to be an effective objective function for multicriteria optimization which is also scale invariant to the measurements~\citep{harrington1965desirability}, and so using it to provide contrast between different similarities is a natural use.  \cite{leinster2016maximizing} show that the generalized diversity index, which can be parameterized to measure the Shannon entropy, is nothing more than the reciprocal of the generalized mean when substituting $p-1$ for $p$ when dealing with probabilities~\citep{tuomisto2010consistent}.  Finding a geometric mean of values is the same as finding the arithmetic mean of their logarithms and reversing the transformation.  In light of this, we can combine the arithmetic mean of the logarithms with measurement uncertainty to bring these values into surprisal as described in Section~\ref{sec:foundational_statistics}.  This surprisal of value, $I\left(v(x_F) \mid v(x^{\prime}_F)\right)$ can be expressed in terms of the deviation of value, $\delta(x_F)$, as
\begin{equation}
\label{eq:surprisal_of_value}
I\left(v(x_F) \mid x_F, v(x^{\prime}_F)\right) = \frac{1}{|V|} \sum_{v \in V} \frac{\left|v(x_F) - v(x^{\prime}_F)\right|}{\delta(x_F)}.
\end{equation}

We can condition Equation~\ref{eq:surprisal_of_value} based on whether the outcome is positive to represent how much surprisal of beneficial value would be gained from data $x_F$.  This caps the loss at zero and can be expressed as
\begin{equation}
\label{eq:surprisal_of_beneficial_value}
I\left(v_+(x_F) \mid x_F, v(x^{\prime}_F)\right) = \frac{1}{|V|} \sum_{v \in V} \frac{\max\left\{ v(x_F) - v(x^{\prime}_F), 0 \right\}}{\delta(x_F)}.
\end{equation}
Recognizing that the surprisal of a loss is just reversed, we can combine both positive and negative into a real valued quasi-surprisal, $I^*$, as
\begin{equation}
\label{eq:directional_surprisal_of_value}
I^*\left(v_{\pm}(x_F) \mid x_F, v(x^{\prime}_F)\right) = \frac{1}{|V|} \sum_{v \in V} \frac{v(x_F) - v(x^{\prime}_F)}{\delta(x_F)}.
\end{equation}

Additionally, in many domains, complexity is a factor of value, and elegance is often associated with creativity in some domains.  The difference in complexity of a configuration $x_F$ relative to $x^{\prime}_F$ can be computed as $I(x_F) - I(x^{\prime}_F)$.

Putting all of these elements together, we can find the difference between the novelty and value of a given configuration and the best competing configuration of what is known.  Measuring the creativity of configuration $x_F$ against a data set of knowledge $X_F$ can be expressed as
\begin{equation}
\label{eq:measuring_creativity}
I^*(x_F | X_F) = \min_{x^{\prime}_F \in X_F} \left\{ I(x_F | x^{\prime}_F) - \left( I(x_F) - I(x^{\prime}_F) \right) + \frac{1}{|V|} \sum_{v \in V} \frac{v(x_F) - v(x^{\prime}_F)}{\delta(x_F)} \right\}.
\end{equation}
We note that the inclusion of goal features as described in Section~\ref{ssec:constraining_and_goals} already implicitly incorporate some aspects of these value functions into the queries, and aspects of Equation~\ref{eq:measuring_creativity}, are already implicitly performed during reinforcement learning (as will be described in Section~\ref{ssec:reinforcement_learning_active_inference}).  Future work involves validating this approach to measuring creativity, understanding how best to trade-off novelty versus utility, and applying it further to reinforcement learning and active inference.

\section{Conclusion}

In conclusion, we have demonstrated that instance-based learning built on information theory is a useful and mathematically powerful way to recontextualize machine learning into probability and statistics while maintaining strong and robust inference capabilities.  Our hope is that this work inspires others to continue research on how we can get understanding and knowledge directly from data, where humans can understand every step of the process to debug, diagnose, remediate, and control more complex systems.  At larger scales of data with more complex data sets, there will be different emergent layers that will need to be modeled.  Our hierarchical approach may be able to decouple these layers when they are easily separable, and future work may help human understandability for complex systems with information leakage between layers of emergence such as molecular biology~\citep{rosas2024software, rosas2025characterising}.  Rather than continuing to build larger and larger models that are increasingly opaque, we have demonstrated another potential path to help discover the physics of information.

\section*{Acknowledgments}
\addcontentsline{toc}{section}{Acknowledgments}
\phantomsection
This work is the culmination of continuous effort from the entire research and engineering team at Howso Incorporated, formerly Diveplane Corporation.  Diveplane was founded in 2017 upon the technology of Hazardous Software Inc., which had began research and development on this technology in 2011.  We gratefully acknowledge the effort of our dedicated teams including current and former employees, our government and commercial customers, our individual and institutional investors, and our friends and families on this journey for all of their support.

\bibliographystyle{abbrvnat}
\bibliography{diveplane.bib}

\appendix
\appendixpage
\section{Supporting Mathematics}

\subsection{Derivation of {\L}ukaszyk–Karmowski (LK) with Laplace Distributions}
\label{subsec:lk_laplace_derivation}

\begin{theorem}
The expected distance between two random variables $X$ and $Y$ given that they both follow the Laplace distribution with a distributional parameter $b$ and with respective means $\mu_1$ and $\mu_2$ can be found as $d(X, Y) = |\mu_2 - \mu_1| + \frac{1}{2} e^\frac{-|\mu_2 - \mu_1|}{b} \left( 3 b + |\mu_2 - \mu_1| \right)$.
\end{theorem}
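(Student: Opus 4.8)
The plan is to evaluate the {\L}ukaszyk--Karmowski double integral $d(X,Y) = \int_{-\infty}^\infty \int_{-\infty}^\infty |x-y| f(x) g(y)\, dx\, dy$, where $f$ and $g$ are the Laplace densities $\frac{1}{2b} e^{-|x-\mu_1|/b}$ and $\frac{1}{2b} e^{-|y-\mu_2|/b}$. First I would reduce to a canonical case: since $|x - y|$ is unchanged by translating both variables, and since the Laplace density is symmetric about its mean, I can shift by $-\mu_1$ and, if necessary, reflect through the origin, so that without loss of generality $\mu_1 = 0$ and $\mu_2 = a := |\mu_2 - \mu_1| \ge 0$. The quantity to compute is then simply $E|X - Y|$ with $X \sim \mathrm{Laplace}(0,b)$ and $Y \sim \mathrm{Laplace}(a,b)$ independent, and the answer will be re-expressed in terms of $a = |\mu_2 - \mu_1|$ at the very end.

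Rather than attacking the double integral directly, I would pass to the distribution of the single variable $Z = X - Y$. Writing $X = X_0$ and $Y = a + Y_0$ with $X_0, Y_0 \sim \mathrm{Laplace}(0,b)$ independent, we have $Z = -a + (X_0 - Y_0)$, and since $-Y_0$ is again $\mathrm{Laplace}(0,b)$, the variable $X_0 - Y_0$ is a sum of two independent equal-scale zero-mean Laplace variables. Its density is the standard convolution $\frac{1}{4b}\left(1 + \frac{|u|}{b}\right) e^{-|u|/b}$, most quickly obtained by squaring the Laplace characteristic function $(1 + b^2\omega^2)^{-1}$ and inverting, with a normalization check. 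Hence $Z$ has density $h_Z(z) = \frac{1}{4b}\left(1 + \frac{|z+a|}{b}\right) e^{-|z+a|/b}$, and $E|X-Y| = \int_{-\infty}^\infty |z|\, h_Z(z)\, dz$. Substituting $u = z + a$ recenters the density on its mean and yields $E|X-Y| = \int_{-\infty}^\infty |u - a|\, \frac{1}{4b}\left(1 + \frac{|u|}{b}\right) e^{-|u|/b}\, du$.

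The remaining work is to evaluate this one-dimensional integral by splitting the real line at the two break points $u = 0$ and $u = a$, where the signs of $|u|$ and $|u - a|$ change. On each of the three pieces the integrand becomes a polynomial in $u$ times $e^{\pm u/b}$, so every term reduces to a standard moment $\int u^k e^{-u/b}\, du$ handled by the usual incomplete-exponential antiderivatives. The main obstacle is purely the bookkeeping: several polynomial-times-exponential terms appear across the three regions, and one must track the boundary contributions at $u = a$ so that the pieces proportional to $e^{-a/b}$ combine into the single factor $\frac{1}{2} e^{-a/b}\left(3b + a\right)$ while the non-exponential contributions collapse cleanly to $a$. I would organize the computation by first integrating the $u < 0$ and $0 \le u < a$ regions together and then the $u \ge a$ tail separately. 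Collecting everything gives $E|X-Y| = a + \frac{1}{2} e^{-a/b}\left(3b + a\right)$, and substituting $a = |\mu_2 - \mu_1|$ produces the claimed formula. As a sanity check I would verify the degenerate case $a = 0$, which must return $d(X,Y) = \frac{3b}{2}$, matching the base self-distance of $1.5\delta$ used in Equation~\ref{eq:cont_resid_entropy_marginal}.
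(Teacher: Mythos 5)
Your proof is correct, but it takes a genuinely different route from the paper's. The paper attacks the {\L}ukaszyk--Karmowski double integral head-on: assuming $\mu_1 \le \mu_2$ WLOG, it partitions the $(x,y)$-plane into nine regions according to where each variable falls relative to $\mu_1$ and $\mu_2$, evaluates each of the nine iterated integrals separately, and sums them. You instead convolve first: recognizing that $X - Y$ is a shift of the difference of two i.i.d.\ zero-mean Laplace variables, you obtain its density $\frac{1}{4b}\bigl(1 + \tfrac{|u|}{b}\bigr)e^{-|u|/b}$ from the squared characteristic function and reduce the problem to the one-dimensional integral $\int |u - a|\,g(u)\,du$ with $a = |\mu_2 - \mu_1|$. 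This is substantially more economical than the paper's computation --- and it is even lighter than you suggest: using the identity $|u - a| = (a - u) + 2(u-a)^{+}$ together with $E(W) = 0$, the entire answer collapses to $a + 2\int_a^\infty (u-a)\,g(u)\,du$, a single tail integral of a polynomial times $e^{-u/b}$ that evaluates directly to $\tfrac{1}{4}e^{-a/b}(3b+a)$ per copy, giving $a + \tfrac{1}{2}e^{-a/b}(3b + a)$ with no three-region bookkeeping at all. What the paper's approach buys in exchange for its length is that it generalizes mechanically to the case $b_1 \ne b_2$ (it only specializes to equal scales partway through), whereas your convolution shortcut leans on the two scales being equal so that $X_0 - Y_0$ has the clean symmetric density; both arguments are valid for the theorem as stated, and your sanity check $d(X,Y) = \tfrac{3b}{2}$ at $a = 0$ correctly matches the $1.5$-nat baseline the paper subtracts in its marginal surprisal formula.
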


\begin{proof}
We begin with the expected distance between two random variables $X$ and $Y$ given two probability density functions, $f(x)$ and $g(y)$ as
\begin{equation}
d(X, Y) = \int_{-\infty}^\infty \int_{-\infty}^\infty |x-y| f(x) g(y) \, dx\, dy.
\end{equation}

Using two Laplace distributions with means $\mu_1$ and $\mu_2$ and expected distance from the mean $b_1$ and $b_2$, we can express the probability density functions as
\begin{equation}
f(x) = \frac{1}{2 b_1} e^{\frac{-|x - \mu_1|}{b_1}}
\end{equation}
and 
\begin{equation}
g(y) = \frac{1}{2 b_2} e^{\frac{-|y - \mu_2|}{b_2}}
\end{equation}
respectively.

Substituting in the Laplace distributions into the expected distance, we can simplify this slightly as
\begin{align*}
d(X, Y) &= \int_{-\infty}^\infty \int_{-\infty}^\infty |x-y| \cdot \frac{1}{2 b_1} e^{\frac{-|x - \mu_1|}{b_1}} \cdot \frac{1}{2 b_2} e^{\frac{-|y - \mu_2|}{b_2}} \, dx\, dy \\
&= \frac{1}{4 b_1 b_2} \int_{-\infty}^\infty \int_{-\infty}^\infty |x-y| \cdot e^{\frac{-|x - \mu_1|}{b_1}} \cdot e^{\frac{-|y - \mu_2|}{b_2}} \, dx\, dy.
\end{align*}

We further assume that $b_1 = b_2$, and use $b$ in place of both, which assumes that the error is the same throughout the space and simplify further as
\begin{equation}
\label{eq:laplace_lk_symmetric_b}
d(X, Y) = \frac{1}{4 b^2} \int_{-\infty}^{\infty} \int_{-\infty}^{\infty} |x-y| \cdot e^{\frac{-|\mu_1 - x|}{b}} \cdot e^{\frac{-|\mu_2 - y|}{b}} \, dx\, dy.
\end{equation}

Because we only have one value for $b$, we can assume that $\mu_1 \leq \mu_2$ without loss of generality because we can just exchange the values if this is not true, and in the end we will adjust the formula to remove this assumption.  There exist 3 regions of the space for $x$ which are $x \leq \mu_1$, $\mu_1 < x \leq \mu_2$, and $\mu_2 < x$.

Rewriting Equation~\ref{eq:laplace_lk_symmetric_b} for the part of the space where $x < \mu_1, y < \mu_1$ is
\begin{align*}
d_{x \leq \mu_1, y \leq \mu_1}(X, Y) &= \frac{1}{4 b^2} \int_{-\infty}^{\mu_1} \int_{-\infty}^{\mu_1} |y - x| \cdot e^{\frac{-(\mu_1 - x)}{b}} \cdot e^{\frac{-(\mu_2 - y)}{b}} \, dx\, dy \\
 &= \frac{1}{4 b^2} \int_{-\infty}^{\mu_1} \left( \int_{-\infty}^{\mu_1} |y - x| \cdot e^{\frac{-(\mu_1 - x)}{b}} \, dx \right) \cdot e^{\frac{-(\mu_2 - y)}{b}} \, dy \\
  &= \frac{1}{4 b^2} \int_{-\infty}^{\mu_1} \left( \int_{-\infty}^{y} (y - x) \cdot e^{\frac{-(\mu_1 - x)}{b}} \, dx + \int_{y}^{\mu_1} (x - y) \cdot e^{\frac{-(\mu_1 - x)}{b}} \, dx \right) \cdot e^{\frac{-(\mu_2 - y)}{b}} \, dy \\
  &= \frac{1}{4 b^2} \int_{-\infty}^{\mu_1} \left( b^2 e^\frac{y-\mu_1}{b} + b^2 e^\frac{y-\mu_1}{b} - b y + b \mu_1 - b^2 \right) \cdot e^{\frac{-(\mu_2 - y)}{b}} \, dy \\
  &= \frac{1}{4 b} \int_{-\infty}^{\mu_1} \left( 2 b e^\frac{y-\mu_1}{b} - y + \mu_1 - b \right) \cdot e^{\frac{-(\mu_2 - y)}{b}} \, dy \\
  &= \frac{1}{4 b} b^2 e^\frac{-(\mu_2 - \mu_1)}{b}\\
  &= \frac{1}{4} b e^\frac{-(\mu_2 - \mu_1)}{b}.
\end{align*}

Rewriting Equation~\ref{eq:laplace_lk_symmetric_b} for the part of the space where $x < \mu_1, \mu_1 < y < \mu_2$ is
\begin{align*}
d_{x \leq \mu_1, \mu_1 < y \leq \mu_2}(X, Y) &= \frac{1}{4 b^2} \int_{\mu_1}^{\mu_2} \int_{-\infty}^{\mu_1} (y - x) \cdot e^{\frac{-(\mu_1 - x)}{b}} \cdot e^{\frac{-(\mu_2 - y)}{b}} \, dx\, dy \\
&= \frac{1}{4 b^2} \int_{\mu_1}^{\mu_2} \left( b y - b \mu_1 + b^2\right) \cdot e^{\frac{-(\mu_2 - y)}{b}} \, dy \\
&= \frac{1}{4 b} \int_{\mu_1}^{\mu_2} \left( y - \mu_1 + b\right) \cdot e^{\frac{-(\mu_2 - y)}{b}} \, dy \\
&= \frac{1}{4 b} \left( b \mu_2 - b \mu_1 \right) \\
&= \frac{1}{4} \left( \mu_2 - \mu_1 \right) .
\end{align*}

Rewriting Equation~\ref{eq:laplace_lk_symmetric_b} for the part of the space where $x < \mu_1, \mu_2 < y$ is
\begin{align*}
d_{x \leq \mu_1, \mu_2 < y}(X, Y) &= \frac{1}{4 b^2} \int_{\mu_2}^{\infty} \int_{-\infty}^{\mu_1} (y - x) \cdot e^{\frac{-(\mu_1 - x)}{b}} \cdot e^{\frac{-(y - \mu_2)}{b}} \, dx\, dy\\
&= \frac{1}{4 b^2} \int_{\mu_2}^{\infty} \left( \int_{-\infty}^{\mu_1} (y - x) \cdot e^{\frac{-(\mu_1 - x)}{b}} \, dx\, \right) \cdot e^{\frac{-(y - \mu_2)}{b}} dy\\
&= \frac{1}{4 b^2} \int_{\mu_2}^{\infty} \left( b y - b \mu_1 + b^2  \right) \cdot e^{\frac{-(y - \mu_2)}{b}} dy\\
&= \frac{1}{4 b} \int_{\mu_2}^{\infty} \left( y - \mu_1 + b  \right) \cdot e^{\frac{-(y - \mu_2)}{b}} dy\\
 &= \frac{1}{4b} \left( b \mu_2 - b \mu_1 + 2 b^2 \right) \\
 &= \frac{1}{4} \left( \mu_2 - \mu_1 + 2 b \right).
\end{align*}

Rewriting Equation~\ref{eq:laplace_lk_symmetric_b} for the part of the space where $\mu_1 < x < \mu_2, y < \mu_1$ is
\begin{align*}
d_{\mu_1 < x \leq \mu_2, y \leq \mu_1}(X, Y) &= \frac{1}{4 b^2} \int_{-\infty}^{\mu_1} \int_{\mu_1}^{\mu_2} (x - y) \cdot e^{\frac{-(x - \mu_1)}{b}} \cdot e^{\frac{-(\mu_2 - y)}{b}} \, dx\, dy\\
 &= \frac{1}{4 b^2} \int_{-\infty}^{\mu_1} \left( \int_{\mu_1}^{\mu_2} (x - y) \cdot e^{\frac{-(x - \mu_1)}{b}} \, dx \right) \cdot e^{\frac{-(\mu_2 - y)}{b}} \, dy\\
 &= \frac{1}{4 b^2} \int_{-\infty}^{\mu_1} \left( e^\frac{\mu_1 - \mu_2}{b} \left( b y - b \mu_2 - b^2 \right) - b y + b \mu_1 + b^2 \right) \cdot e^{\frac{-(\mu_2 - y)}{b}} \, dy\\
 &= \frac{1}{4 b} \int_{-\infty}^{\mu_1} \left( e^\frac{\mu_1 - \mu_2}{b} \left( y - \mu_2 - b \right) - y + \mu_1 + b \right) \cdot e^{\frac{-(\mu_2 - y)}{b}} \, dy\\
 &= \frac{1}{4 b} \left( 2 b^2 e^\frac{\mu_1-\mu_2}{b} + e^\frac{2 \mu_1 - 2 \mu_2}{b} \left(-b \mu_2 + b \mu_1 - 2 b^2 \right) \right) \\
 &= \frac{1}{4} \left( 2 b e^\frac{\mu_1-\mu_2}{b} + e^\frac{2 \mu_1 - 2 \mu_2}{b} \left( \mu_1 - \mu_2 - 2 b \right) \right). 
\end{align*}

Rewriting Equation~\ref{eq:laplace_lk_symmetric_b} for the part of the space where $\mu_1 < x < \mu_2, \mu_1 < y < \mu_2$ is
\begin{align*}
d_{\mu_1 < x \leq \mu_2, \mu_1 < y \leq \mu_2}(X, Y) &= \frac{1}{4 b^2} \int_{\mu_1}^{\mu_2} \int_{\mu_1}^{\mu_2} |y - x| \cdot e^{\frac{-(x - \mu_1)}{b}} \cdot e^{\frac{-(\mu_2 - y)}{b}} \, dx\, dy\\
 &= \frac{1}{4 b^2} \int_{\mu_1}^{\mu_2} \left( \int_{\mu_1}^{\mu_2} |y - x| \cdot e^{\frac{-(x - \mu_1)}{b}} \, dx \right) \cdot e^{\frac{-(\mu_2 - y)}{b}} \, dy\\
 &= \frac{1}{4 b^2} \int_{\mu_1}^{\mu_2} \left( \int_{\mu_1}^{y} (y - x) \cdot e^{\frac{-(x - \mu_1)}{b}} \, dx  + \int_{y}^{\mu_2} (x - y) \cdot e^{\frac{-(x - \mu_1)}{b}} \, dx \right) \cdot e^{\frac{-(\mu_2 - y)}{b}} \, dy\\
 &= \frac{1}{4 b^2} \int_{\mu_1}^{\mu_2} \left( \left( b^2 e^\frac{\mu_1 - y}{b} + b y - b \mu_1 - b^2 \right) + \left( b^2 e^\frac{\mu_1 - y}{b} + e^\frac{\mu_1 - \mu_2}{b} \left( b y - b \mu_2 - b^2 \right) \right) \right) \cdot e^{\frac{-(\mu_2 - y)}{b}} \, dy\\
 &= \frac{1}{4 b} \int_{\mu_1}^{\mu_2} \left( 2 b e^\frac{\mu_1 - y}{b} + y - \mu_1 - b + e^\frac{\mu_1 - \mu_2}{b} \left( y - \mu_2 - b \right) \right) \cdot e^{\frac{-(\mu_2 - y)}{b}} \, dy\\
 &= \frac{1}{4 b} \left( b \mu_2 - b \mu_1 - 2 b^2 + e^\frac{\mu_1 - \mu_2}{b} \left( 2 b \mu_2 - 2 b^2 \right) + e^\frac{\mu_1 - \mu_2}{b} \left( -2 b \mu_1 + 2 b^2\right) + e^\frac{2 \mu_1 - 2 \mu_2}{b} \left( b \mu_2 - b \mu_1 + 2 b^2 \right) \right) \\
 &= \frac{1}{4} \left( \mu_2 - \mu_1 - 2 b + e^\frac{\mu_1 - \mu_2}{b} \left( 2 \mu_2 - 2 \mu_1 \right) + e^\frac{2 \mu_1 - 2 \mu_2}{b} \left(\mu_2 - \mu_1 + 2 b \right) \right).
\end{align*}

Rewriting Equation~\ref{eq:laplace_lk_symmetric_b} for the part of the space where $\mu_1 < x < \mu_2, \mu_2 < y$ is
\begin{align*}
d_{\mu_1 < x \leq \mu_2, \mu_2 < y}(X, Y) &= \frac{1}{4 b^2} \int_{\mu_2}^{\infty} \int_{\mu_1}^{\mu_2} (y - x) \cdot e^{\frac{-(x - \mu_1)}{b}} \cdot e^{\frac{-(y - \mu_2)}{b}} \, dx\, dy\\
 &= \frac{1}{4 b^2} \int_{\mu_2}^{\infty} \left( \int_{\mu_1}^{\mu_2} (y - x) \cdot e^{\frac{-(x - \mu_1)}{b}} \, dx \right) \cdot e^{\frac{-(y - \mu_2)}{b}} \, dy\\
 &= \frac{1}{4 b^2} \int_{\mu_2}^{\infty} \left( b y - b \mu_1 - b^2 - e^\frac{\mu_1 - \mu_2}{b} \left(b y - b \mu_2 - b^2 \right) \right) \cdot e^{\frac{-(y - \mu_2)}{b}} \, dy\\
 &= \frac{1}{4 b} \int_{\mu_2}^{\infty} \left( y - \mu_1 - b - e^\frac{\mu_1 - \mu_2}{b} \left(y - \mu_2 - b \right) \right) \cdot e^{\frac{-(y - \mu_2)}{b}} \, dy\\
 &= \frac{1}{4 b} \left( b \mu_2 - b \mu_1 \right) \\
 &= \frac{1}{4} \left( \mu_2 - \mu_1 \right)
\end{align*}

Rewriting Equation~\ref{eq:laplace_lk_symmetric_b} for the part of the space where $\mu_2 < x, y < \mu_1$ is
\begin{align*}
d_{\mu_2 < x, y \leq \mu_1}(X, Y) &= \frac{1}{4 b^2} \int_{-\infty}^{\mu_1} \int_{\mu_2}^{\infty} (x - y) \cdot e^{\frac{-(x - \mu_1)}{b}} \cdot e^{\frac{-(\mu_2 - y)}{b}} \, dx\, dy\\
 &= \frac{1}{4 b^2} \int_{-\infty}^{\mu_1} \left( \int_{\mu_2}^{\infty} (x - y) \cdot e^{\frac{-(x - \mu_1)}{b}} \, dx \right) \cdot e^{\frac{-(\mu_2 - y)}{b}} \, dy\\
 &= \frac{1}{4 b^2} \int_{-\infty}^{\mu_1} e^\frac{\mu_1 - \mu_2}{b} \left( -b y + b \mu_2 + b^2 \right) \cdot e^{\frac{-(\mu_2 - y)}{b}} \, dy\\
 &= \frac{1}{4 b} \int_{-\infty}^{\mu_1} e^\frac{\mu_1 - \mu_2}{b} \left( -y + \mu_2 + b \right) \cdot e^{\frac{-(\mu_2 - y)}{b}} \, dy\\
 &= \frac{1}{4 b} e^\frac{2 \mu_1 - 2 \mu_2}{b} \left( b \mu_2 + 2 b^2 - b \mu_1 \right)\\
 &= \frac{1}{4} e^\frac{2 \mu_1 - 2 \mu_2}{b} \left( \mu_2 + 2 b - \mu_1 \right)\\
 &= \frac{1}{4} e^\frac{-2 (\mu_2 - \mu_1)}{b} \left(2 b + \mu_2 - \mu_1 \right)
\end{align*}

Rewriting Equation~\ref{eq:laplace_lk_symmetric_b} for the part of the space where $\mu_2 < x, \mu_1 < y < \mu_2$ is
\begin{align*}
d_{\mu_2 < x, \mu_1 < y \leq \mu_2}(X, Y) &= \frac{1}{4 b^2} \int_{\mu_1}^{\mu_2} \int_{\mu_2}^{\infty} (x - y) \cdot e^{\frac{-(x - \mu_1)}{b}} \cdot e^{\frac{-(\mu_2 - y)}{b}} \, dx\, dy\\
 &= \frac{1}{4 b^2} \int_{\mu_1}^{\mu_2} \left( \int_{\mu_2}^{\infty} (x - y) \cdot e^{\frac{-(x - \mu_1)}{b}} \, dx \right) \cdot e^{\frac{-(\mu_2 - y)}{b}} \, dy\\
 &= \frac{1}{4 b^2} \int_{\mu_1}^{\mu_2} \left( e^\frac{\mu_1-\mu_2}{b} \left( -b y + b \mu_2 + b^2 \right) \right) \cdot e^{\frac{-(\mu_2 - y)}{b}} \, dy\\
 &= \frac{1}{4 b} \int_{\mu_1}^{\mu_2} \left( e^\frac{\mu_1-\mu_2}{b} \left( -y + \mu_2 + b \right) \right) \cdot e^{\frac{-(\mu_2 - y)}{b}} \, dy\\
 &= \frac{1}{4 b} \left( 2 b^2 e^\frac{\mu_1 - \mu_2}{b} - e^\frac{2 \mu_1 - 2 \mu_2}{b} \left( b \mu_2 - b \mu_1 + 2 b^2 \right) \right) \\
 &= \frac{1}{4} \left( 2 b e^\frac{\mu_1 - \mu_2}{b} + e^\frac{2 \mu_1 - 2 \mu_2}{b} \left( \mu_1 - \mu_2 - 2 b \right) \right)
\end{align*}

Rewriting Equation~\ref{eq:laplace_lk_symmetric_b} for the part of the space where $\mu_2 < x, \mu_2 < y$ is
\begin{align*}
d_{\mu_2 < x, \mu_2 < y}(X, Y) &= \frac{1}{4 b^2} \int_{\mu_2}^{\infty} \int_{\mu_2}^{\infty} |x - y| \cdot e^{\frac{-(x - \mu_1)}{b}} \cdot e^{\frac{-(y - \mu_2)}{b}} \, dx\, dy\\
 &= \frac{1}{4 b^2} \int_{\mu_2}^{\infty} \left( \int_{\mu_2}^{\infty} |x - y| \cdot e^{\frac{-(x - \mu_1)}{b}} \, dx \right) \cdot e^{\frac{-(y - \mu_2)}{b}} \, dy\\
 &= \frac{1}{4 b^2} \int_{\mu_2}^{\infty} \left( \int_{\mu_2}^{y} (y - x) \cdot e^{\frac{-(x - \mu_1)}{b}} \, dx + \int_{y}^{\infty} (x - y) \cdot e^{\frac{-(x - \mu_1)}{b}} \, dx \right) \cdot e^{\frac{-(y - \mu_2)}{b}} \, dy\\
 &= \frac{1}{4 b^2} \int_{\mu_2}^{\infty} \left( \left( b^2 e^\frac{\mu_1 - y}{b} + e^\frac{\mu_1 - \mu_2}{b} \left(b y - b \mu_2 - b^2 \right) \right) + \left( b^2 e^\frac{\mu_1 - y}{b} \right) \right) \cdot e^{\frac{-(y - \mu_2)}{b}} \, dy\\
 &= \frac{1}{4 b} \int_{\mu_2}^{\infty} \left( 2 b e^\frac{\mu_1 - y}{b} + e^\frac{\mu_1 - \mu_2}{b} \left(y - \mu_2 - b \right) \right) \cdot e^{\frac{-(y - \mu_2)}{b}} \, dy\\
 &= \frac{1}{4 b} b^2 e^\frac{\mu_1 - \mu_2}{b}\\
 &= \frac{1}{4} b e^\frac{-(\mu_2 - \mu_1)}{b}.
\end{align*}

We can combine each of the probability weighted distances as
\begin{align*}
d(X, Y) &= d_{x \leq \mu_1, y \leq \mu_1}(X, Y) + d_{x \leq \mu_1, \mu_1 < y \leq \mu_2}(X, Y) + d_{x \leq \mu_1, \mu_2 < y}(X, Y) + d_{\mu_1 < x \leq \mu_2, y \leq \mu_1}(X, Y) + d_{\mu_1 < x \leq \mu_2, \mu_1 < y \leq \mu_2}(X, Y) \\
&\quad + d_{\mu_1 < x \leq \mu_2, \mu_2 < y}(X, Y) + d_{\mu_2 < x, y \leq \mu_1}(X, Y) + d_{\mu_2 < x, \mu_1 < y \leq \mu_2}(X, Y) + d_{\mu_2 < x, \mu_2 < y}(X, Y)\\
&= \frac{1}{4} b e^\frac{-(\mu_2 - \mu_1)}{b} \\
&\quad+ \frac{1}{4} \left( \mu_2 - \mu_1 \right) \\
&\quad+ \frac{1}{4} \left( \mu_2 - \mu_1 + 2 b \right) \\
&\quad+ \frac{1}{4} \left( 2 b e^\frac{\mu_1-\mu_2}{b} + e^\frac{2 \mu_1 - 2 \mu_2}{b} \left( \mu_1 - \mu_2 - 2 b \right) \right) \\
&\quad+ \frac{1}{4} \left( \mu_2 - \mu_1 - 2 b + e^\frac{\mu_1 - \mu_2}{b} \left( 2 \mu_2 - 2 \mu_1 \right) + e^\frac{2 \mu_1 - 2 \mu_2}{b} \left(\mu_2 - \mu_1 + 2 b \right) \right)\\
&\quad+ \frac{1}{4} \left( \mu_2 - \mu_1 \right) \\
&\quad+ \frac{1}{4} e^\frac{-2 (\mu_2 - \mu_1)}{b} \left(2 b + \mu_2 - \mu_1 \right) \\
&\quad+ \frac{1}{4} \left( 2 b e^\frac{\mu_1 - \mu_2}{b} + e^\frac{2 \mu_1 - 2 \mu_2}{b} \left( \mu_1 - \mu_2 - 2 b \right) \right) \\
&\quad+ \frac{1}{4} b e^\frac{-(\mu_2 - \mu_1)}{b} \\
&= b e^\frac{-(\mu_2 - \mu_1)}{b} \\
&\quad+ \frac{1}{2} \left( \mu_2 - \mu_1 + b \right) \\
&\quad+ \frac{1}{4} \left( 2 b e^\frac{\mu_1-\mu_2}{b} + e^\frac{2 \mu_1 - 2 \mu_2}{b} \left( \mu_1 - \mu_2 - 2 b \right) \right) \\
&\quad+ \frac{1}{4} \left( \mu_2 - \mu_1 - 2 b + e^\frac{\mu_1 - \mu_2}{b} \left( 2 \mu_2 - 2 \mu_1 \right) + e^\frac{2 \mu_1 - 2 \mu_2}{b} \left(\mu_2 - \mu_1 + 2 b \right) \right) \\
&\quad+ \frac{1}{4} \left( \mu_2 - \mu_1 \right) \\
&= b e^\frac{-(\mu_2 - \mu_1)}{b} \\
&\quad+ \left( \mu_2 - \mu_1 \right) \\
&\quad+ \frac{1}{4} \left( 2 b e^\frac{\mu_1-\mu_2}{b} \right) \\
&\quad+ \frac{1}{4} \left( e^\frac{\mu_1 - \mu_2}{b} \left( 2 \mu_2 - 2 \mu_1 \right)  \right) \\
&= b e^\frac{-(\mu_2 - \mu_1)}{b} + \left( \mu_2 - \mu_1 \right) + \frac{1}{2} b e^\frac{\mu_1-\mu_2}{b} + \frac{1}{2} e^\frac{\mu_1 - \mu_2}{b} \left( \mu_2 - \mu_1 \right) \\
d(X, Y) &= \mu_2 - \mu_1 + \frac{1}{2} e^\frac{-(\mu_2 - \mu_1)}{b} \left( 3 b + \mu_2 - \mu_1 \right).
\end{align*}

To remove the assumption that $\mu_1 \leq \mu_2$, we can rewrite this result as
\begin{equation}
\label{eq:lk_laplace}
d(X, Y) = |\mu_2 - \mu_1| + \frac{1}{2} e^\frac{-|\mu_2 - \mu_1|}{b} \left( 3 b + |\mu_2 - \mu_1| \right). 
\end{equation}

This completes the derivation of LK distance with Laplace Distributions.
\end{proof}

\subsection{Surprisal Is a Distance Metric}
\label{subsec:surprisal_as_distance}

\begin{theorem}
The function $d(x, y)$ defined as
$d(x, y) = \sum_{j \in J} \frac{1}{b_j}\left( |x_j - y_j| + \frac{1}{2} e^\frac{-|x_j - y_j|}{b_j} \left( 3 b_j + |x_j - y_j| \right) - 1.5 \right)$
is a valid distance metric for $x_j, y_j \in \mathbb{R}$ and $b_j > 0$.
\end{theorem}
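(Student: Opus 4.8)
The plan is to exploit the fully additive, coordinatewise structure of $d$ and reduce the four metric axioms to a single one-dimensional statement. Because a finite sum of metrics is again a metric, and each summand depends on $x$ and $y$ only through $|x_j - y_j|$, it suffices to prove that for each fixed $j$ the map $\rho_j(a, b) := g_j(|a - b|)$ is a metric on $\mathbb{R}$, where $g_j(t)$ denotes the $j$-th summand. Recognizing each summand as the marginal surprisal of Equation~\ref{eq:cont_resid_entropy_marginal} evaluated at difference $t = |x_j - y_j|$ with deviation $b_j$, the substitution $u = t / b_j$ collapses every summand onto the single scale-free profile $\varphi(u) = u + \tfrac{1}{2} e^{-u}(3 + u) - \tfrac{3}{2}$ on $u \ge 0$. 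The whole theorem then reduces to showing that $\varphi(|a - b| / b_j)$ is a metric on $\mathbb{R}$, after which summation over $j$ restores the claim.

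First I would dispatch the three easy axioms for $\varphi$. Symmetry is automatic since $\varphi$ acts on $|a - b|$, and non-negativity will follow from positivity. For the identity of indiscernibles I would evaluate $\varphi(0)$ directly: the exponential factor is $1$ and the bracketed term collapses to $\tfrac{1}{2}\cdot 3 - \tfrac{3}{2} = 0$, giving $\varphi(0) = 0$. For the converse, $\varphi(u) > 0$ when $u > 0$, I would establish strict monotonicity from $\varphi'(u) = 1 - \tfrac{1}{2}(2 + u) e^{-u}$. Since $\varphi'(0) = 0$ and $\varphi'$ is increasing on $[0, \infty)$, we obtain $\varphi'(u) > 0$ for $u > 0$, so $\varphi$ is strictly increasing and hence strictly positive away from the origin. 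This single monotonicity fact simultaneously secures positivity, the identity of indiscernibles, and the ingredient needed for the final axiom.

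The triangle inequality is the crux, and the step where I expect essentially all of the difficulty to concentrate. Using monotonicity together with the elementary fact that any three points of $\mathbb{R}$ are collinear, the term-by-term triangle inequality reduces to \emph{subadditivity} of the profile, namely $\varphi(a + b) \le \varphi(a) + \varphi(b)$ for all $a, b \ge 0$ (the non-collinear cases are dominated by the monotone bound). The standard sufficient condition here is concavity: a concave function vanishing at the origin is automatically subadditive, and subadditivity combined with the monotonicity already proved yields the triangle inequality coordinate by coordinate and then, by summation, for $d$. I would therefore reduce the entire remaining argument to a curvature analysis of $\varphi$, controlling the sign and decay of its second derivative and the behavior of the {\L}ukaszyk--Karmowski correction term relative to the bare linear part $u$ (whose unit slope $\varphi'$ approaches as $u \to \infty$). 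The correction contributes genuine curvature that is absent from the pure $L_1$ surprisal $u$, so the delicate point — and the main obstacle — is to estimate this correction tightly enough to certify the global subadditivity bound rather than merely the weaker collinear triangle inequalities; this curvature estimate is where the proof stands or falls.
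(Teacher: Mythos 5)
Your reduction to the one-dimensional profile $\varphi(u) = u + \tfrac{1}{2}e^{-u}(3+u) - \tfrac{3}{2}$ is the right move, and your computations of $\varphi(0)=0$ and $\varphi'(u) = 1 - \tfrac{1}{2}(2+u)e^{-u}$ with $\varphi'(0)=0$ are correct. But the step you yourself flag as the crux is where the proposal fails, and it fails definitively: differentiating once more gives
\[
\varphi''(u) = \tfrac{1}{2}(1+u)e^{-u} > 0 \quad \text{for all } u \ge 0,
\]
so $\varphi$ is strictly \emph{convex} on $[0,\infty)$, not concave. A strictly convex function with $\varphi(0)=\varphi'(0)=0$ is strictly \emph{super}additive ($\varphi(s+t) > \varphi(s)+\varphi(t)$ for $s,t>0$), which is the opposite of what the triangle inequality requires. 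Concretely, with a single feature, $b=1$, and the collinear points $x=0$, $y=1$, $z=2$, one gets $d(x,z)=\varphi(2)=\tfrac{1}{2}+\tfrac{5}{2}e^{-2}\approx 0.838$ while $d(x,y)+d(y,z)=2\varphi(1)=4e^{-1}-1\approx 0.472$, so $d(x,z) > d(x,y)+d(y,z)$. The triangle inequality genuinely fails, the theorem as stated is false, and no curvature estimate can rescue your subadditivity step.

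For what it is worth, the paper's own proof does not survive this test either: it asserts $e^{-|x_j-z_j|/b_j} \le e^{-(|x_j-y_j|+|y_j-z_j|)/b_j}$, but since $|x_j-z_j| \le |x_j-y_j|+|y_j-z_j|$ the exponentials of the negated quantities satisfy the \emph{reverse} inequality, and the subsequent ``combining'' step is never carried out. Your instinct to isolate subadditivity of the profile as the decisive issue is the correct diagnostic — it simply yields a refutation rather than a proof. (A separate, smaller point: the theorem statement places the $-1.5$ inside the $\tfrac{1}{b_j}(\cdot)$ factor, which would even break $d(x,x)=0$ when $b_j \ne 1$; the paper's own identity-of-indiscernibles computation treats it as sitting outside that factor, which is the reading you adopted.)
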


\begin{proof}
We can rewrite Equation~\ref{eq:case_surprisal} in the style of Equation~\ref{eq:lk_laplace} for multiple dimensions $j \in J$ as
\begin{equation}
\label{eq:distance_metric}
d(x, y) = \sum_{j \in J} \frac{1}{b_j}\left( |x_j - y_j| + \frac{1}{2} e^\frac{-|x_j - y_j|}{b_j} \left( 3 b_j + |x_j - y_j| \right) - 1.5 \right).
\end{equation}

To prove that $d(x, y)$ is a distance measure (or metric) when $x_j, y_j \in \mathbb{R}$ and $b_j > 0$, we need to show that it satisfies the four properties of
\begin{itemize}
\item \emph{Nonnegativity}: $d(x, y) \geq 0$ for all $x$, $y$,
\item \emph{Identity of indiscernibles}: $d(x, y) = 0$ if and only if $x = y$,
\item \emph{Symmetry}: $d(x, y) = d(y, x)$ for all $x$, $y$, and
\item \emph{Triangle inequality}: $d(x, z) \leq d(x, y) + d(y, z)$ for all $x$, $y$, $z$.
\end{itemize}

We analyze the terms independently when $|x_j - y_j|$ approaches 0.  The absolute difference, $|x_j- y_j|$ approaches 0, and the exponential term, $e^{-\frac{|x_j - y_j|}{b_j}}$ approaches 1.  
Substituting these into the distance function, we have 
\begin{align*}
d(x, y) &= \sum_{j \in J} \left( \frac{1}{b_j}\left( 0 + \frac{1}{2} \cdot 1 \cdot (3 b_j + 0) \right) - 1.5 \right) \\
&= \sum_{j \in J} \left( \frac{1}{b_j} \cdot \frac{3 b_j}{2} - 1.5 \right) \\
&= \sum_{j \in J} \left( \frac{3}{2} - 1.5 \right) \\
&= 0.
\end{align*}
Thus, as the individual differences approach $0$, the distance approaches $0$.  When $x$ and $y$ are exactly equal, $d(x, y) = 0$, confirming the identity of discernables.  As $x$ approaches $y$, $d(x, y)$ will be small approaching zero, indicating nonnegativity.

Because the absolute difference, $|x_j- y_j|$ is symmetric, the function is symmetric because none of the terms change.

To prove the triangle inequality, we need to show that for any points $x, y, z$, $d(x, z) \leq d(x, y) + d(y, z)$.  For any real numbers $a, b, c$, $|a - c| \leq |a - b| + |b - c|$.  Applying this to the absolute terms of the distances, we have $|x_i - z_i| \leq |x_i - y_i| + |y_i - z_i|$.  The term $e^{-\frac{|x_j - z_j|}{b_j}}$ can similarly be bounded using the triangle inequality as
\begin{align*}
e^{-\frac{|x_j - z_j|}{b_j}} &\leq e^{-\frac{|x_j - y_j| + |y_j - z_j|}{b_j}}\\
& \leq e^{-\frac{|x_j - y_j|}{b_j}} e^{-\frac{|y_j - z_j|}{b_j}}.
\end{align*}
Combining all of these terms using the triangle inequality for the absolute values and the properties of the exponential function, maintains $d(x, z) \leq d(x, y) + d(y, z)$.  The contributions from each dimension $j$ will satisfy the triangle inequality due to the properties of absolute values and the continuity of the exponential function.

Having established that the distance function $d(x, y)$ satisfies the four properties of a metric, we conclude that $d(x, y)$ is a valid distance metric. 
\end{proof}

\section{Numeric Example of Deriving a Prediction Using Iris Dataset}
This appendix shows each numerical step in predicting a class in the classic iris data set.

\begin{figure}[H]
  \centering
  \includegraphics[width=\linewidth,page=1]{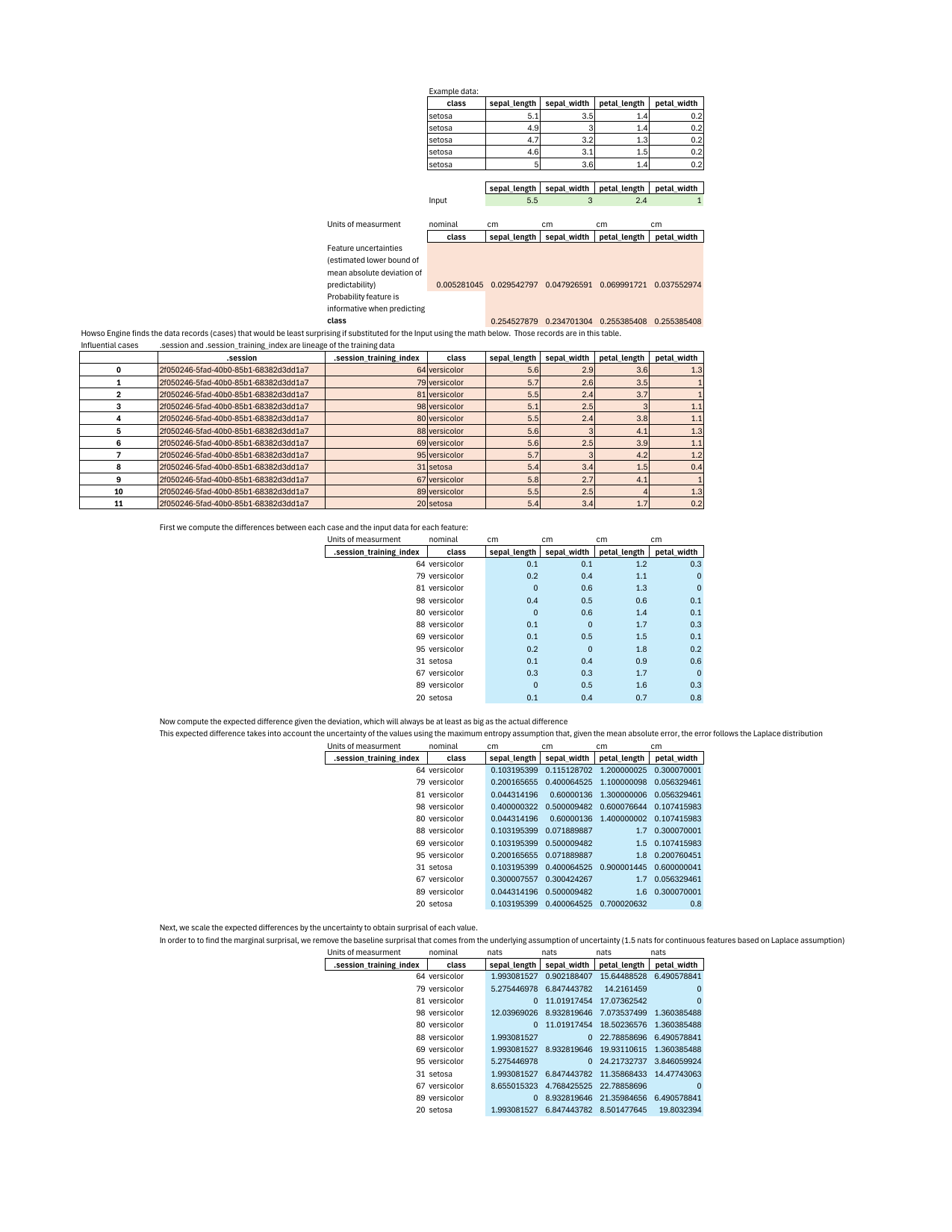}
\end{figure}

\begin{figure}[H]
  \centering
  \includegraphics[width=\linewidth,page=2]{learning_excel_example.pdf}
\end{figure}

\section{Details of Empirical Evidence}

\subsection{Supervised Learning Classification Results}
\label{ssec:appendix_supervised_learning}

\noindent Five small regression datasets (first\_principles\_ideal\_gas, first\_principles\_newton, 1089\_USCrime, 659\_sleuth\_ex1714, 485\_analcatdata\_vehicle) were excluded from all algorithms due to LightGBM failures (undefined Spearman correlation from constant predictions).

\subsubsection{Howso (Targetless)}



\clearpage

\subsection{Feature Importance Validation Results}
\label{sec:appendix_roar}

This section provides detailed results from the ROAR (RemOve And Retrain) validation~\citep{hooker2019benchmark} comparing Howso's feature importance methods (Prediction Contributions and Accuracy Contributions) with SHAP~\citep{lundberg2017unifiedapproachinterpretingmodel}.

\begin{table}[!ht]
\centering
\caption{ROAR Comparison Results: Howso Prediction Contributions, Howso Accuracy Contributions, and SHAP Performance by Dataset and Removal Level}
\label{tab:roar_detailed}

\end{table}

\clearpage

\end{document}